\def\eqref#1{equation~\ref{#1}}
\def\1{\bm{1}}
\DeclareMathAlphabet{\mathsfit}{\encodingdefault}{\sfdefault}{m}{sl}
\SetMathAlphabet{\mathsfit}{bold}{\encodingdefault}{\sfdefault}{bx}{n}
\newcommand{\bbN}{{\mathbb N}}
\newcommand{\bbR}{{\mathbb R}}
\newcommand{\bbE}{{\mathbb E}}
\newtheorem{proposition}{Proposition}
\newtheorem{lemma}{Lemma}
\newtheorem{theorem}{Theorem}
\newtheorem{remark}{Remark}
\newtheorem{corollary}{Corollary}
\newtheorem{assumption}{Assumption}
\title{Graph Neural Networks Exponentially Lose Expressive Power for Node Classification}
\author{Kenta Oono\textsuperscript{1, 2, *}, Taiji Suzuki\textsuperscript{1, 3}\\
\texttt{\{kenta\textunderscore oono, taiji\}@mist.i.u-tokyo.ac.jp}\\
\textsuperscript{1}The University of Tokyo
\textsuperscript{2}Preferred Networks, Inc.
\textsuperscript{*}Work at the University of Tokyo\\
\textsuperscript{3}RIKEN Center for Advanced Intelligence Project (AIP)
}
\begin{document}

\maketitle

\begin{abstract}
Graph Neural Networks (graph NNs) are a promising deep learning approach for analyzing graph-structured data. However, it is known that they do not improve (or sometimes worsen) their predictive performance as we pile up many layers and add non-lineality. To tackle this problem, we investigate the expressive power of graph NNs via their asymptotic behaviors as the layer size tends to infinity.
Our strategy is to generalize the forward propagation of a Graph Convolutional Network (GCN), which is a popular graph NN variant, as a specific dynamical system. In the case of a GCN, we show that when its weights satisfy the conditions determined by the spectra of the (augmented) normalized Laplacian, its output exponentially approaches the set of signals that carry information of the connected components and node degrees only for distinguishing nodes.
Our theory enables us to relate the expressive power of GCNs with the topological information of the underlying graphs inherent in the graph spectra. To demonstrate this, we characterize the asymptotic behavior of GCNs on the Erd\H{o}s -- R\'{e}nyi graph.
We show that when the Erd\H{o}s -- R\'{e}nyi graph is sufficiently dense and large, a broad range of GCNs on it suffers from the ``information loss" in the limit of infinite layers with high probability.
Based on the theory, we provide a principled guideline for weight normalization of graph NNs. We experimentally confirm that the proposed weight scaling enhances the predictive performance of GCNs in real data\footnote{Code is available at \url{https://github.com/delta2323/gnn-asymptotics}.}.
\end{abstract}

\section{Introduction}

Motivated by the success of Deep Learning (DL), several attempts have been made to apply DL models to non-Euclidean data, particularly, graph-structured data such as chemical compounds, social networks, and polygons.
Recently, \textit{Graph Neural Networks} (graph NNs)~\citep{NIPS2015_5954,gated-graph-sequence-neural-networks,gilmer17a,NIPS2017_6703,kipf2017iclr,nguyen2017semi,schlichtkrull2018modeling,battaglia2018relational,xu2018how,wu2019simplifying} have emerged as a promising approach.
However, despite their practical popularity, theoretical research of graph NNs has not been explored extensively.

The characterization of DL model \textit{expressive power}, i.e., to identify what function classes DL models can (approximately) represent, is a fundamental question in theoretical research of DL.
Many studies have been conducted for Fully Connected Neural Networks (FNNs)~\citep{cybenko1989approximation,hornik1991approximation,hornik1989multilayer,barron1993universal,mhaskar1993approximation, sonoda2017neural,yarotsky2017error} and Convolutional Neural Networks (CNNs)~\citep{petersen2018equivalence,zhou2018universality,oono2019approximation}.
For such models, we have theoretical and empirical justification that deep and non-linear architectures can enhance representation power~\citep{telgarsky16,chen18i,zhou18a}.
However, for graph NNs, several papers have reported that node representations go indistinguishable (known as \textit{over-smoothing}) and prediction performances severely degrade when we stack many layers~\citep{kipf2017iclr,wu2019comprehensive,AAAI1816098}.
Besides, ~\citet{wu2019simplifying} reported that graph NNs achieved comparable performance even if they removed intermediate non-linear functions.
These studies posed a question about the current architecture and made us aware of the need for the theoretical analysis of the graph NN expressive power.

In this paper, we investigate the expressive power of graph NNs by analyzing their asymptotic behaviors as the layer size goes to infinity.
Our theory gives new theoretical conditions under which neither layer stacking nor non-linearity contributes to improving expressive power.
We consider a specific dynamics that includes a transition defining a Markov process and the forward propagation of a Graph Convolutional Network (GCN) \citep{kipf2017iclr}, which is one of the most popular graph NN variants, as special cases.
We prove that under certain conditions, the dynamics exponentially approaches a subspace that is invariant under the dynamics.
In the case of GCN, the invariant space is a set of signals that correspond to the lowest frequency of graph spectra and that have ``no information" other than connected components and node degrees for a node classification task whose goal is to predict the nodes' properties in a graph.
The rate of the distance between the output and the invariant space is $O((s\lambda)^{L})$ where $s$ is the maximum singular values of weights, $\lambda$ is typically a quantity determined by the spectra of the (augmented) normalized Laplacian, and $L$ is the layer size.
See Sections \ref{sec:general-case} (general case) and \ref{sec:gcn-case} (GCN case) for precise statements.

We can interpret our theorem as the generalization of the well-known property that if a finite and discrete Markov process is irreducible and aperiodic, it exponentially converges to a unique equilibrium and the eigenvalues of its transition matrix determine the convergence rate (see, e.g.,~\citet{chung1997spectral}).
Different from the Markov process case, which is linear, the existence of intermediate non-linear functions complicates the analysis. 
We overcame this problem by leveraging the combination of the ReLU activation function~\citep{NIPS2012_4824} and the positivity of eigenvectors of the Laplacian associated with the smallest positive eigenvalues.

Our theory enables us to investigate asymptotic behaviors of graph NNs via the spectral distribution of the underlying graphs.
To demonstrate this, we take GCNs defined on the Erd\H{o}s -- R\'{e}nyi graph $G_{N, p}$, which has $N$ nodes and each edge appears independently with probability $p$, for an example.
We prove that if $\frac{\log N}{pN} = o(1)$ as a function of $N$, any GCN whose weights have maximum singular values at most $C\sqrt{\frac{Np}{\log (N/\varepsilon)}}$ approaches the ``information-less" invariant space with probability at least $1-\varepsilon$, where $C$ is a universal constant.
Intuitively, if the graph on which we define graph NNs is sufficiently dense, graph-convolution operations mix signals on nodes fast and hence the feature maps lose information for distinguishing nodes quickly.

Our contributions are as follows:
\begin{itemize}
    \item We relate asymptotic behaviors of graph NNs with the topological information of underlying graphs via the spectral distribution of the (augmented) normalized Laplacian.
    \item We prove that if the weights of a GCN satisfy conditions determined by the graph spectra, the output of the GCN carries no information other than the node degrees and connected components for discriminating nodes when the layer size goes to infinity (Theorems \ref{thm:convergence-to-invariant-subspace}, \ref{thm:gcn-case}).
    \item We apply our theory to Erd\H{o}s -- R\'{e}nyi graphs as an example and show that when the graph is sufficiently dense and large, many GCNs suffer from the information loss (Theorem \ref{thm:gcn-on-erdos-renyi-graph}).
    \item We propose a principled guideline for weight normalization of graph NNs and empirically confirm it using real data.
\end{itemize}
\section{Related Work}

{\bf MPNN-type Graph NNs.} Since many graph NN variants have been proposed, there are several unified formulations of graph NNs~\citep{gilmer17a,battaglia2018relational}.
Our approach is the closest to the formulation of Message Passing Neural Network (MPNN)~\citep{gilmer17a}, which unified graph NNs in terms of the update and readout operations.
Many graph NNs fall into this formulation such as ~\citet{NIPS2015_5954},~\citet{gated-graph-sequence-neural-networks}, and~\cite{velickovic2018graph}.
Among others, GCN~\citep{kipf2017iclr} is an important application of our theory because it is one of the most widely used graph NNs.
In addition, GCNs are interesting from a theoretical research perspective because, in addition to an MPNN-type graph NN, we can interpret GCNs as a simplification of spectral-type graph NNs~\citep{henaff2015deep,NIPS2016_6081}, that make use of the graph Laplacian.

Our approach, which considers the asymptotic behaviors graph NNs as the layer size goes to infinity, is similar to~\citet{scarselli2009graph}, one of the earliest works about graph NNs.
They obtained node representations by iterating message passing between nodes until convergence.
Their formulation is general in that we can use any local aggregation operation as long as it is a \textit{contraction map}.
Our theory differs from theirs in that we proved that the output of a graph NN approaches a certain space even if the local aggregation function is not necessarily a contraction map.

{\bf Expressive Power of Graph NNs.} Several studies have focused on theoretical analysis and the improvement of graph NN expressive power.
For example,~\citet{xu2018how} proved that graph NNs are no more powerful than the Weisfeiler -- Lehman (WL) isomorphism test~\citep{weisfeiler1968} and proposed a Graph Isomorphism Network (GIN), that is approximately as powerful as the WL test.
Although they experimentally showed that GIN has improved accuracy in supervised learning tasks, their analysis was restricted to the graph isomorphism problem.
\citet{pmlr-v80-xu18c} analyzed the non-asymptotic properties of GCNs through the lens of random walk theory. They proved the limitations of GCNs in expander-like graphs and proposed a Jumping Knowledge Network (JK-Net) to address the issue.
To handle the non-linearity, they linearized networks by a randomization assumption~\citep{pmlr-v40-Choromanska15}.
We take a different strategy and make use of the interpretation of ReLU as a projection onto a cone.
Recently,~\citet{maehara2019revisiting} showed that a GCN approximately works as a low-pass filter plus an MLP in a certain setting.
Although they analyzed finite-depth GCNs, our theory has similar spirits with theirs because our ``information-less" space corresponds to the lowest frequency of a graph Laplacian.
Another point is that they imposed assumptions that input signals consist of low-frequent true signals and high-frequent noise, whereas we need not such an assumption.

{\bf Role of Deep and Non-linear Structures.} For ordinal DL models such as FNNs and CNNs, we have both theoretical and empirical justification of deep and non-linear architectures for enhancing of the expressive power (e.g.,~\citet{telgarsky16,petersen2018equivalence,oono2019approximation}).
In contrast, several studies have witnessed severe performance degradation when stacking many layers on graph NNs~\citep{kipf2017iclr,wu2019comprehensive}.
\citet{AAAI1816098} reported that feature vectors on nodes in a graph go indistinguishable as we increase layers in several tasks. They named this phenomenon \textit{over-smoothing}.
Regarding non-linearity,~\citet{wu2019simplifying} empirically showed that graph NNs achieve comparable performance even if we omit intermediate non-linearity.
These observations gave us questions about the current models of deep graph NNs in terms of their expressive power.
Several studies gave theoretical explanations of the over-smoothing phenomena for \textit{linear} GNNs~\citep{AAAI1816098,zhang2019gresnet,zhao2020pairnorm}.
We can think of our theory as an extension of their results to non-linear GNNs.
\section{Problem Setting and Main Result}

\subsection{Notation}

Let $\bbN_+$ be the set of positive integers.
For $N\in \bbN_+$, we denote $[N]:= \{1, \ldots, N\}$.
For a vector $v\in \bbR^N$, we write $v\geq 0$ if and only if $v_n \geq 0$ for all $n\in [N]$.
Similarly, for a matrix $X \in \bbR^{N\times C}$, we write $X\geq 0$ if and only if $X_{nc} \geq 0$ for all $n\in [N]$ and $c\in [C]$. We say such a vector and matrix is \textit{non-negative}.
$\langle \cdot, \cdot\rangle$ denotes the inner product of vectors or matrices, depending on the context:
$\langle u, v\rangle:= u^\top v$ for $u, v\in \bbR^N$ and $\langle X, Y\rangle := \mathrm{tr} (X^T Y)$ for $X, Y \in \bbR^{N\times C}$.
$\bm{1}_P$ equals to $1$ if the proposition $P$ is true else $0$.
For vectors $v\in \bbR^N$ and $w\in \bbR^C$, $v\otimes w \in \bbR^{N\times C}$ denotes the Kronecker product of $v$ and $w$ defined by $(v\otimes w)_{nc} := v_n w_c$.
For $X\in \bbR^{N\times C}$, $\|X\|_\mathrm{F}:=\langle X, X\rangle^{\frac{1}{2}}$ denotes the Frobenius norm of $X$.
For a vector $v\in \bbR^{N}$, $\mathrm{diag}(v) := (v_{n}\delta_{nm})_{n,m\in [N]}\in \bbR^{N\times N}$ denotes the diagonalization of $v$.
$I_N\in \bbR^{N\times N}$ denotes the identity matrix of size $N$.
For a linear operator $P:\bbR^{N} \to \bbR^{M}$ and a subset $V\subset \bbR^{N}$, we denote the restriction of $P$ to $V$ by $P|_{V}$.

\subsection{Dynamical System}\label{sec:problem-setting}

Although we are mainly interested in GCNs, we develop our theory more generally using dynamical systems.
We will specialize to the GCNs in Section \ref{sec:gcn-case}.

For $N, C, H_l\in \bbN_+$ ($l\in \bbN_+$), let $P \in \bbR^{N\times N}$ be a symmetric matrix and $W_{lh} \in \bbR^{C\times C}$ for $l\in \bbN_+$ and $h\in [H_l]$.
We define $f_l: \bbR^{N\times C}\to \bbR^{N\times C}$ by
$f_l(X) := \mathrm{MLP}_l(PX)$.
Here, $\mathrm{MLP}_l: \bbR^{N\times C}\to \bbR^{N\times C}$ is the $l$-th multi-layer perceptron common to all nodes \citep{xu2018how} and is defined by $\mathrm{MLP}_l(X):=\sigma(\cdots \sigma(\sigma(X)W_{l1})W_{l2} \cdots W_{lH_l})$, where $\sigma: \bbR^{N\times C} \to \bbR^{N\times C}$ is an element-wise ReLU function \citep{NIPS2012_4824} defined by $\sigma(X)_{nc} := \max(X_{nc}, 0)$ for $n\in [N], c\in [C]$.
We consider the dynamics
$X^{(l+1)} := f_l(X^{(l)})$
with some initial value $X^{(0)} \in \bbR^{N\times C}$.
We are interested in the asymptotic behavior of $X^{(l)}$ as $l\to \infty$.

For $M\leq N$, let $U$ be a $M$-dimensional subspace of $\bbR^{N}$.
We assume that $U$ and $P$ satisfy the following properties that generalize the situation where $U$ is the eigenspace associated with the smallest eigenvalue of a (normalized) graph Laplacian $\Delta$ (that is, zero) and $P$ is a polynomial of $\Delta$.

\begin{assumption}\label{assump:positivity}
    $U$ has an orthonormal basis $(e_m)_{m\in [M]}$ that consists of non-negative vectors.
\end{assumption}
\begin{assumption}\label{assump:invariance}
    $U$ is invariant under $P$, i.e., if $u\in U$, then $Pu \in U$.
\end{assumption}

We endow $\bbR^{N}$ with the ordinal inner product and denote the orthogonal complement of $U$ by $U^\perp := \{u\in \bbR^N \mid \langle u, v\rangle = 0, \forall v \in U\}$.
By the symmetry of $P$, we can show that $U^\perp$ is invariant under $P$, too (Appendix \ref{sec:proof-of-invariance-of-u-perp}, Proposition \ref{prop:invariance-of-u-perp}).
Therefore, we can regard $P$ as a linear mapping $P|_{U^{\perp}}: U^\perp \to U^\perp$.
We denote the operator norm of $P|_{U^{\perp}}$ by $\lambda$.
When $U$ is the eigenspace associated with the smallest eigenvalue of $\Delta$ and $P$ is $g(\Delta)$ where $g$ is a polynomial, then, $\lambda$ corresponds to $\lambda = \sup_{\mu} |g(\mu)| $ where $\sup$ ranges over all eigenvalues except the smallest one.

\subsection{Main Result}\label{sec:general-case}

We define the subspace $\mathcal{M}$ of $\bbR^{N\times C}$ by $\mathcal{M}:= U\otimes \bbR^{C} = \{\sum_{m=1}^{M} e_m \otimes w_m \mid w_m \in \bbR^C\}$ where $(e_m)_{m\in [M]}$ is the orthonormal basis of $U$ appeared in Assumption \ref{assump:positivity}.
For $X\in \bbR^{N\times C}$, we denote the distance between $X$ and $\mathcal{M}$ by $d_\mathcal{M}(X):= \inf \{\|X-Y\|_\mathrm{F} \mid Y\in \mathcal{M} \}$.
We denote the maximum singular value of $W_{lh}$ by $s_{lh}$ and set $s_l := \prod_{h=1}^{H_l} s_{lh}$. 
With these preparations, we introduce the main theorem of the paper.
\begin{theorem}\label{thm:convergence-to-invariant-subspace}
    Under Assumptions~\ref{assump:positivity} and~\ref{assump:invariance}, we have $d_\mathcal{M}(f_l(X)) \leq s_l\lambda d_\mathcal{M}(X)$ for any $X\in \mathbb{R}^{N\times C}$.
\end{theorem}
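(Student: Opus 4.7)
The plan is to track $d_\mathcal{M}$ through each elementary operation comprising $f_l$---left multiplication by $P$, right multiplication by each weight $W_{lh}$, and the element-wise ReLU $\sigma$---and to show these multiply $d_\mathcal{M}$ by at most $\lambda$, $s_{lh}$, and $1$, respectively. Since $f_l(X)=\sigma(\cdots\sigma(\sigma(PX)W_{l1})\cdots W_{lH_l})$, composing these bounds then yields $d_\mathcal{M}(f_l(X))\le\lambda\prod_{h=1}^{H_l}s_{lh}\,d_\mathcal{M}(X)=s_l\lambda\,d_\mathcal{M}(X)$, the claimed inequality (the additional ReLU applications contribute only factors of $1$).

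For the two linear steps I would work with the orthogonal decomposition $\mathbb{R}^{N\times C}=\mathcal{M}\oplus\mathcal{M}^\perp$, where $\mathcal{M}^\perp=U^\perp\otimes\mathbb{R}^C$ consists of matrices all of whose columns lie in $U^\perp$. Write $X=\Pi_\mathcal{M}(X)+Z$ with $Z\in\mathcal{M}^\perp$, so that $\|Z\|_\mathrm{F}=d_\mathcal{M}(X)$. For left multiplication by $P$, Assumption~\ref{assump:invariance} gives $P\Pi_\mathcal{M}(X)\in\mathcal{M}$, while the companion invariance of $U^\perp$ under $P$ (Proposition~\ref{prop:invariance-of-u-perp}) places each column of $PZ$ back in $U^\perp$, so summing $\|Pz^{(c)}\|^2\le\lambda^2\|z^{(c)}\|^2$ over the columns of $Z$ gives $\|PZ\|_\mathrm{F}\le\lambda\|Z\|_\mathrm{F}$, and hence $d_\mathcal{M}(PX)\le\lambda\,d_\mathcal{M}(X)$. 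For right multiplication by $W_{lh}$, the element $\Pi_\mathcal{M}(X)W_{lh}$ remains in $\mathcal{M}$ because $\sum_m e_m\otimes v_m$ becomes $\sum_m e_m\otimes(W_{lh}^\top v_m)$, and a row-wise bound using the largest singular value of $W_{lh}$ gives $\|ZW_{lh}\|_\mathrm{F}\le s_{lh}\|Z\|_\mathrm{F}$, so $d_\mathcal{M}(XW_{lh})\le s_{lh}\,d_\mathcal{M}(X)$.

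The crux is the non-linear step, $d_\mathcal{M}(\sigma(Y))\le d_\mathcal{M}(Y)$. The key structural fact is that an orthonormal family of non-negative vectors must have pairwise disjoint supports: $\langle e_m,e_{m'}\rangle=\sum_n(e_m)_n(e_{m'})_n=0$ with non-negative summands forces $(e_m)_n(e_{m'})_n=0$ for every $n$. Consequently, for any $Y=\sum_m e_m\otimes v_m\in\mathcal{M}$, at each node $n$ at most one factor $(e_m)_n$ is nonzero, and ReLU factors through that non-negative scalar to give $\sigma(Y)=\sum_m e_m\otimes\sigma(v_m)\in\mathcal{M}$. Thus $\mathcal{M}$ is $\sigma$-invariant, and because $\sigma$ is $1$-Lipschitz in Frobenius norm, $d_\mathcal{M}(\sigma(Y))\le\|\sigma(Y)-\sigma(\Pi_\mathcal{M}(Y))\|_\mathrm{F}\le\|Y-\Pi_\mathcal{M}(Y)\|_\mathrm{F}=d_\mathcal{M}(Y)$.

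I expect this ReLU step to be the main obstacle: without Assumption~\ref{assump:positivity} there is no reason $\sigma$ should preserve $\mathcal{M}$, and any cruder treatment of the non-linearity loses the geometry needed for the contraction. The disjoint-support consequence of orthonormality plus non-negativity is precisely what makes projection and ReLU interact in the required way; combining it with the two linear bounds then closes the proof.
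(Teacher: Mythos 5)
Your proposal is correct and follows the paper's overall decomposition exactly: the paper likewise reduces Theorem~\ref{thm:convergence-to-invariant-subspace} to three lemmas stating $d_{\mathcal{M}}(PX)\leq \lambda d_{\mathcal{M}}(X)$, $d_{\mathcal{M}}(XW_{lh})\leq s_{lh} d_{\mathcal{M}}(X)$, and $d_{\mathcal{M}}(\sigma(X))\leq d_{\mathcal{M}}(X)$, and your treatment of the two linear steps matches the paper's. Where you genuinely diverge is the ReLU step. The paper expands $d_{\mathcal{M}}^2(X)=\sum_{c}\bigl(\|X_{\cdot c}\|^2-\sum_{n=1}^{M}\langle X_{\cdot c},e_n\rangle^2\bigr)$ and proves the column-wise scalar inequality $\|x\|^2-\sum_m\langle x,v_m\rangle^2\geq\|x^+\|^2-\sum_m\langle x^+,v_m\rangle^2$ by sorting coordinates, exploiting the disjoint supports of the $v_m$, and applying Cauchy--Schwarz together with a sign analysis of the cross terms. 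You instead observe that disjoint supports (forced by orthonormality plus non-negativity) make $\mathcal{M}$ invariant under $\sigma$, so that $\sigma(\Pi_{\mathcal{M}}(Y))\in\mathcal{M}$, and then conclude $d_{\mathcal{M}}(\sigma(Y))\leq\|\sigma(Y)-\sigma(\Pi_{\mathcal{M}}(Y))\|_{\mathrm{F}}\leq\|Y-\Pi_{\mathcal{M}}(Y)\|_{\mathrm{F}}=d_{\mathcal{M}}(Y)$ from the $1$-Lipschitzness of ReLU. This is a valid and noticeably cleaner argument: it replaces the coordinate computation with a two-line geometric one and makes explicit exactly where Assumption~\ref{assump:positivity} is used (it is what makes $\mathcal{M}$ a $\sigma$-invariant set; this is also the spirit of the paper's remark that ReLU acts as a projection onto the cone $\{X\geq 0\}$). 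The only thing the paper's route buys in exchange is the standalone per-column inequality of its auxiliary lemma, which is marginally more information than the theorem needs.
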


The proof key is that the non-linear operation $\sigma$ decreases the distance $d_\mathcal{M}$, that is, $d_\mathcal{M}(\sigma(X))\leq d_\mathcal{M}(X)$.
We use the non-negativity of $e_m$ to prove this claim.
See Appendix \ref{sec:proof-of-thm-convergence-to-invariant-subspace} for the complete proof.
We also discuss the strictness of Theorem \ref{thm:convergence-to-invariant-subspace} in Appendix \ref{sec:strictness-of-main-theorem}.

By setting $d_\mathcal{M}(X) = 0$, this theorem implies that $\mathcal{M}$ is invariant under $f_l$.
In addition, if the maximum value of singular values are small, $X^{(l)}$ asymptotically approaches $\mathcal{M}$ in the sense of \citet{johnson1973stabilization} for any initial value $X^{(0)}$. That is, the followings hold under Assumptions~\ref{assump:positivity} and~\ref{assump:invariance}.
\begin{corollary}
    $\mathcal{M}$ is invariant under $f_l$ for any $l\in \bbN_+$, that is, if $X\in \mathcal{M}$, then we have $f_l(X)\in \mathcal{M}$.
\end{corollary}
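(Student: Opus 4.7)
The corollary is essentially immediate from Theorem~\ref{thm:convergence-to-invariant-subspace}, and my plan is to derive it as such a one-line consequence rather than redoing the work of the theorem. The argument proceeds as follows. Let $X\in \mathcal{M}$. Since $\mathcal{M}$ is a linear subspace of the finite-dimensional space $\bbR^{N\times C}$, it is closed, and membership in $\mathcal{M}$ is equivalent to having zero Frobenius distance to $\mathcal{M}$; hence $d_\mathcal{M}(X) = 0$. Applying Theorem~\ref{thm:convergence-to-invariant-subspace} with this $X$ gives
\begin{equation*}
0 \;\leq\; d_\mathcal{M}(f_l(X)) \;\leq\; s_l \lambda\, d_\mathcal{M}(X) \;=\; 0,
\end{equation*}
so $d_\mathcal{M}(f_l(X)) = 0$, and closedness of $\mathcal{M}$ yields $f_l(X)\in \mathcal{M}$, as required. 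Note that this argument makes no assumption on $s_l$ or $\lambda$: the bound collapses to $0$ regardless of their values.

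If one prefers a self-contained proof that does not invoke Theorem~\ref{thm:convergence-to-invariant-subspace}, the alternative plan is to verify invariance componentwise along the definition $f_l(X) = \mathrm{MLP}_l(PX)$. For $X = \sum_{m=1}^{M} e_m \otimes w_m \in \mathcal{M}$, one has $PX = \sum_m (Pe_m)\otimes w_m$, which lies in $\mathcal{M}$ by Assumption~\ref{assump:invariance} since $Pe_m\in U$. Right multiplication by $W_{lh}$ preserves $\mathcal{M}$ trivially, because $(e_m \otimes w_m)W_{lh} = e_m \otimes (W_{lh}^\top w_m)$. The main obstacle in this route is the ReLU step: I would need to show that if $Y\in \mathcal{M}$ then $\sigma(Y)\in \mathcal{M}$, and this is precisely the content that Assumption~\ref{assump:positivity} (non-negativity of the basis $(e_m)$) is designed to handle. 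Since this is exactly the non-trivial step already established in the proof of Theorem~\ref{thm:convergence-to-invariant-subspace} (the contraction property $d_\mathcal{M}(\sigma(X))\leq d_\mathcal{M}(X)$ specialized to $d_\mathcal{M}(X)=0$), the direct route offers no real savings, and I would prefer the one-line derivation from the theorem.
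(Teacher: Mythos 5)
Your proposal is correct and matches the paper's own argument: the paper derives this corollary exactly by setting $d_\mathcal{M}(X)=0$ in Theorem~\ref{thm:convergence-to-invariant-subspace} and using that the linear subspace $\mathcal{M}$ is closed. Your optional self-contained route is also sound but, as you note, offers no savings since the ReLU step is the same content already handled in the theorem's proof.
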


\begin{corollary}\label{cor:general-exponential-approach}
    Let $s:= \sup_{l\in \bbN_+} s_l$. We have $d_\mathcal{M}(X^{(l)}) = O((s\lambda)^l)$. In particular, if $s\lambda < 1$, then $X_l$ exponentially approaches $\mathcal{M}$ as $l\to \infty$ for any initial value $X^{(0)}$.
\end{corollary}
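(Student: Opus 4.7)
The plan is to iterate the one-step bound in Theorem~\ref{thm:convergence-to-invariant-subspace}. Since $X^{(l+1)} = f_l(X^{(l)})$, Theorem~\ref{thm:convergence-to-invariant-subspace} immediately gives the recursion
\[
d_\mathcal{M}(X^{(l+1)}) \leq s_l \lambda \, d_\mathcal{M}(X^{(l)}).
\]
A straightforward induction on $l$ starting from the initial value then yields
\[
d_\mathcal{M}(X^{(l)}) \leq \lambda^{l} \Big(\prod_{k=0}^{l-1} s_k\Big) d_\mathcal{M}(X^{(0)}).
\]

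Since each per-layer factor satisfies $s_k \leq s$ by definition of the supremum $s := \sup_{l\in \bbN_+} s_l$, the product is bounded by $s^l$, so
\[
d_\mathcal{M}(X^{(l)}) \leq (s\lambda)^l d_\mathcal{M}(X^{(0)}).
\]
Because $d_\mathcal{M}(X^{(0)})$ is a finite constant independent of $l$, this is exactly the $O((s\lambda)^l)$ estimate claimed. For the ``in particular'' clause, when $s\lambda < 1$ the geometric factor $(s\lambda)^l$ decays to $0$ as $l\to \infty$, so $X^{(l)}$ approaches $\mathcal{M}$ exponentially in the Frobenius distance regardless of the initial value.

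I do not expect any substantive obstacle: the corollary is a direct iteration of Theorem~\ref{thm:convergence-to-invariant-subspace} combined with the fact that the per-layer singular-value product $s_l = \prod_{h=1}^{H_l} s_{lh}$ has already been absorbed into the one-step bound. The only minor point worth being careful about is that the bound $s_k \leq s$ used to pass from the telescoping product $\prod_{k=0}^{l-1} s_k$ to $s^l$ requires nothing more than the definition of $s$ as a supremum, and that $d_\mathcal{M}(X^{(0)})$ is finite for any fixed initial value since $\mathcal{M}$ is a closed subspace of the finite-dimensional space $\bbR^{N\times C}$.
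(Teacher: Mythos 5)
Your proof is correct and is exactly the argument the paper intends: the corollary is a direct iteration of Theorem~\ref{thm:convergence-to-invariant-subspace}, bounding each factor $s_k$ by the supremum $s$ to obtain the geometric bound $(s\lambda)^l d_\mathcal{M}(X^{(0)})$. The paper does not spell this out, treating it as immediate, and your only deviation is a harmless indexing convention for where the product of the $s_k$ starts.
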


Suppose the operator norm of $P|_{U}: U\to U$ is no larger than $\lambda$, then, under the assumption of $s\lambda < 1$, $X^{(l)}$ converges to $0$, the trivial fixed point (see Appendix~\ref{sec:convergence-to-trivial-fixed-point}, Proposition~\ref{prop:convergence-to-trivial-fixed-point}).
Therefore, we are mainly interested in the case where the operator norm of $P|_{U}$ is strictly larger than $\lambda$ (see Proposition~\ref{prop:spectra-of-gcn}).
Finally, we restate Theorem \ref{thm:convergence-to-invariant-subspace} specialized to the situation where $U$ is the direct sum of eigenspaces associated with the largest $M$ eigenvalues of $P$.
Note that the eigenvalues of $P$ is real since $P$ is symmetric.

\begin{corollary}\label{cor:general-case}
    Let $\lambda_1 \leq \cdots \leq \lambda_N$ be the eigenvalue of $P$, sorted in ascending order.
    Suppose the multiplicity of the largest eigenvalue $\lambda_N$ is $M(\leq N)$, i.e., $\lambda_{N-M} < \lambda_{N-M+1} = \cdots = \lambda_N$.
    We define $\lambda := \max_{n\in [N-M]} |\lambda_n|$.
    We denote $U$ by the eigenspace associated with $\lambda_N$ and assume that $U$ satisfies Assumption \ref{assump:positivity}.
    Then, we have $d_\mathcal{M}(X^{(l+1)}) \leq s_l \lambda d_\mathcal{M}(X^{(l)})$.
\end{corollary}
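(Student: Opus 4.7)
The plan is to show that the hypotheses of Theorem \ref{thm:convergence-to-invariant-subspace} are satisfied in this setting, and then to invoke the theorem directly with $X := X^{(l)}$.

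First I would verify Assumption \ref{assump:invariance}. Since $P$ is symmetric and $U$ is the eigenspace of $P$ associated with the eigenvalue $\lambda_N$, any $u \in U$ satisfies $Pu = \lambda_N u \in U$, so $U$ is invariant under $P$. Assumption \ref{assump:positivity} is given as a hypothesis of the corollary, so no work is needed there.

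Next I would identify the operator norm of $P|_{U^\perp}$. Because $P$ is symmetric, $\bbR^N$ admits an orthonormal eigenbasis of $P$, and $U^\perp$ is precisely the span of those eigenvectors corresponding to the eigenvalues $\lambda_1, \ldots, \lambda_{N-M}$ (here I use that $\lambda_{N-M} < \lambda_{N-M+1}$, so no eigenvector from eigenvalue $\lambda_N$ appears in $U^\perp$, and the rest do by the spectral theorem). The operator norm of a symmetric operator equals its largest eigenvalue in absolute value, which on $U^\perp$ gives exactly $\max_{n\in[N-M]} |\lambda_n| = \lambda$, matching the $\lambda$ appearing in Theorem \ref{thm:convergence-to-invariant-subspace}.

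With both assumptions verified and $\lambda$ identified, Theorem \ref{thm:convergence-to-invariant-subspace} applied to $X = X^{(l)}$ gives
\[
    d_\mathcal{M}(X^{(l+1)}) = d_\mathcal{M}(f_l(X^{(l)})) \leq s_l \lambda\, d_\mathcal{M}(X^{(l)}),
\]
which is the desired inequality. There is no genuine obstacle here; the only subtle point is making sure that the spectral-theorem decomposition is applied correctly so that $\lambda$ in the corollary genuinely coincides with the operator norm of $P|_{U^\perp}$ as used in the theorem, which is immediate from the assumed strict gap $\lambda_{N-M} < \lambda_{N-M+1}$.
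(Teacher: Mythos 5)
Your proposal is correct and matches the paper's intended argument: the corollary is presented there precisely as a specialization of Theorem~\ref{thm:convergence-to-invariant-subspace}, obtained by noting that an eigenspace of the symmetric matrix $P$ is automatically $P$-invariant and that, by the spectral theorem, the operator norm of $P|_{U^\perp}$ equals $\max_{n\in[N-M]}|\lambda_n|$. Nothing is missing.
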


\begin{remark}
    It is known that any Markov process on finite states converges to a unique distribution (\textit{equilibrium}) if it is irreducible and aperiodic (see e.g., \citet{norris1998markov}).
    Theorem \ref{thm:convergence-to-invariant-subspace} includes this proposition as a special case with $M=1$, $C=1$, and $W_l=1$ for all $l\in \bbN_+$.
    This is essentially the direct consequence of Perron -- Frobenius' theorem (see e.g., \citet{meyer2000matrix}).
    See Appendix \ref{sec:markov-process-case}.
\end{remark}

\section{Application to GCN}\label{sec:gcn-case}

We formulate a GCN~\citep{kipf2017iclr} without readout operations~\citep{gilmer17a} using the dynamical system in the previous section and derive a sufficient condition in terms of the spectra of underlying graphs in which layer stacking nor non-linearity are not helpful for node classification.

Let $G=(V, E)$ be an undirected graph where $V$ is a set of nodes and $E$ is a set of edges.
We denote the number of nodes in $G$ by $N=|V|$ and identify $V$ with $[N]$ by fixing an order of $V$.
We associate a $C$ dimensional signal to each node.
$X$ in the previous section corresponds to concatenation of the signals.
GCNs iteratively update signals on $V$ using the connection information and weights.

Let $A := (\bm{1}_{\{(i, j)\in E\}})_{i,j\in [N]}\in \bbR^{N\times N}$ be the adjacency matrix and $D := \mathrm{diag}(\mathrm{deg}(i)_{i\in [N]}) \in \bbR^{N\times N}$ be the degree matrix of $G$ where $\mathrm{deg}(i) := |\{j\in V \mid (i, j) \in E \}|$ is the degree of node $i$.
Let $\tilde{A} := A + I_N$, $\tilde{D} := D + I_N$ be the adjacent and degree matrix of graph $G$ augmented with self-loops.
We define the \textit{augmented} normalized Laplacian \citep{wu2019simplifying} of $G$ by $\tilde{\Delta} := I_N - \tilde{D}^{-\frac{1}{2}}\tilde{A}\tilde{D}^{-\frac{1}{2}}$ and set $P := I_N - \tilde{\Delta}$.
Let $L, C\in \bbN_+$ be the layer and channel sizes, respectively.
For weights $W_l \in \bbR^{C\times C}$ ($l\in [L]$), we define a GCN\footnote{Following the original paper \citep{kipf2017iclr}, we use one-layer MLPs (i.e., $H_l = 1$ for all $l\in \bbN_+$.). However, our result holds for the multi-layer case} associated with $G$ by $f = f_{L} \circ \cdots \circ f_{1}$ where $f_l: \bbR^{N\times C} \to \bbR^{N\times C}$ is defined by $f_l(X) := \sigma(PXW_l)$.
We are interested in the asymptotic behavior of the output $X^{(L)}$ of the GCN as $L\to \infty$.

Suppose $G$ has $M$ connected components and let $V = V_1 \sqcup \cdots \sqcup V_M$ be the decomposition of the node set $V$ into connected components.
We denote an indicator vector of the $m$-th connected component by $u_m := (\bm{1}_{\{n \in V_m\}})_{n\in [N]} \in \bbR^N$.
The following proposition shows that GCN satisfies the assumption of Corollay \ref{cor:general-case} (see Appendix \ref{sec:proof-of-prop-spectra-of-gcn} for proof).
\begin{proposition}\label{prop:spectra-of-gcn}
    Let $\lambda_1 \leq \cdots \leq \lambda_N$ be the eigenvalue of $P$ sorted in ascending order.
    Then, we have $-1 < \lambda_1$, $\lambda_{N-M} < 1$, and  $\lambda_{N-M+1} = \cdots = \lambda_{N} = 1$.
    In particular, we have $\lambda:=\max_{n=1, \ldots, N-M}|\lambda_{n}| < 1$.
    Further, $e_m := \tilde{D}^{\frac{1}{2}}u_m $ for $m\in [M]$ are the basis of the eigenspace associated with the eigenvalue $1$.
\end{proposition}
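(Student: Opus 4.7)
The plan is to transfer the spectral question for the symmetric matrix $P = \tilde{D}^{-1/2}\tilde{A}\tilde{D}^{-1/2}$ to that of the row-stochastic matrix $Q := \tilde{D}^{-1}\tilde{A}$, which is similar to $P$ via $P = \tilde{D}^{1/2} Q \tilde{D}^{-1/2}$. The two matrices share a spectrum; since $P$ is symmetric, every eigenvalue is real, and since $Q$ is stochastic ($Q \bm{1} = \bm{1}$ because $\tilde{A} \bm{1} = \tilde{D} \bm{1}$), Gershgorin (or Perron--Frobenius for stochastic matrices) gives $\lambda_n \in [-1, 1]$ for every $n$. What remains is to identify the eigenspace at $1$ and to rule out $-1$.

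For the eigenvalue $1$, I would reorder nodes so that $V = V_1 \sqcup \cdots \sqcup V_M$ appears in contiguous blocks; this makes $\tilde{A}$, $\tilde{D}$, $P$ simultaneously block-diagonal and reduces the analysis to one component at a time. For each $m$, a direct computation shows $\tilde{A} u_m = \tilde{D} u_m$: if $i \in V_m$, then every $\tilde{G}$-neighbor of $i$ (including $i$ itself, via the added self-loop) lies in $V_m$, so $(\tilde{A}u_m)_i = \widetilde{\mathrm{deg}}(i) = (\tilde{D} u_m)_i$, while both sides vanish for $i \notin V_m$. Hence $u_m$ is a $1$-eigenvector of $Q$, and $e_m := \tilde{D}^{1/2} u_m$ is a $1$-eigenvector of $P$. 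Since the $e_m$ have pairwise disjoint supports, they are linearly independent, giving multiplicity at least $M$.

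To upgrade this to the exact equality $\lambda_{N-M+1} = \cdots = \lambda_N = 1$ with $\lambda_{N-M} < 1$, I would apply Perron--Frobenius to each block: because $V_m$ is a connected component of $\tilde{G}$, the restriction of the non-negative matrix $Q$ to $V_m$ is irreducible, so the Perron eigenvalue $1$ is simple on that block. Summing across the $M$ blocks yields total multiplicity exactly $M$ for $P$. For the strict bound $\lambda_1 > -1$, I would invoke the classical dichotomy that an irreducible stochastic matrix has $-1$ in its spectrum iff it is $2$-periodic, equivalently iff the underlying graph is bipartite. Here the self-loops force every vertex to lie on an odd closed walk (the loop itself), so no component of $\tilde{G}$ is bipartite, and hence no eigenvalue of $P$ equals $-1$. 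The conclusion $\lambda := \max_{n \in [N-M]}|\lambda_n| < 1$ follows from these two strict bounds.

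The main obstacle is the \emph{strict} part of the two inequalities: the containment $\lambda_n \in [-1,1]$ is immediate from the stochastic reduction, but both $\lambda_1 > -1$ and $\lambda_{N-M} < 1$ depend on the graph-theoretic input that $\tilde{G}$ is a disjoint union of $M$ connected, non-bipartite components. The role of the augmentation $\tilde{A} = A + I_N$ is precisely to force non-bipartiteness via self-loops; without it, one could have $\lambda_1 = -1$ and the contraction hypothesis $s\lambda < 1$ in Theorem~\ref{thm:convergence-to-invariant-subspace} and Corollary~\ref{cor:general-case} would fail to follow from a mere bound on the singular values.
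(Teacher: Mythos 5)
Your proof is correct, but it takes a genuinely different route from the paper's. You diagonalize the problem through the similar row-stochastic matrix $Q = \tilde{D}^{-1}\tilde{A}$ and then run the Perron--Frobenius machinery componentwise: irreducibility of each block gives simplicity of the eigenvalue $1$ (hence multiplicity exactly $M$ overall), and aperiodicity forced by the self-loops excludes $-1$ from the peripheral spectrum. The paper instead stays with the augmented normalized Laplacian $\tilde{\Delta} = I_N - P$: it establishes $\lambda_N \leq 1$ and the kernel structure via the explicit quadratic form $x^\top\tilde{\Delta}x = \tfrac{1}{2}\sum_{i,j}a_{ij}\bigl(x_i/\sqrt{d_i+1} - x_j/\sqrt{d_j+1}\bigr)^2$ together with a connectedness argument, and obtains $\lambda_1 > -1$ by a Rayleigh-quotient comparison $\tilde{\mu}_N \leq \mu_N \max_i\bigl(\tfrac{\deg(i)}{\deg(i)+1}\bigr)^{1/2}$ against the un-augmented normalized Laplacian, followed by a case analysis of when equality could hold. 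Your approach is shorter and makes the role of the self-loop conceptually transparent (it kills bipartiteness/periodicity), at the cost of importing the full Perron--Frobenius theory for primitive matrices; it also dovetails with the paper's own Markov-process interpretation in Appendix F. The paper's argument is more elementary and self-contained but the $\tilde{\mu}_N < 2$ step is noticeably more laborious. One small remark: since the diagonal entries of $Q$ are $1/(\deg(i)+1) > 0$, Gershgorin applied to $Q$ already places every (real) eigenvalue in $\bigl[\min_i\tfrac{1-\deg(i)}{1+\deg(i)},\,1\bigr]$, which gives $\lambda_1 > -1$ immediately and would let you skip the bipartiteness dichotomy entirely.
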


\begin{theorem}\label{thm:gcn-case}
    For any initial value $X^{(0)}$, the output of $l$-th layer $X^{(l)}$ satisfies $d_{\mathcal{M}}(X^{(l)})\leq (s\lambda)^l d_{\mathcal{M}}(X^{(0)})$. In particular, $d_{\mathcal{M}}(X^{(l)})$ exponentially converges to $0$ when $s\lambda < 1$.
\end{theorem}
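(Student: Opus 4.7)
The plan is to reduce Theorem~\ref{thm:gcn-case} to the abstract statement of Theorem~\ref{thm:convergence-to-invariant-subspace} (equivalently, Corollary~\ref{cor:general-case}) applied to the dynamical system specified by $P := I_N - \tilde{\Delta}$, the one-layer MLPs $\mathrm{MLP}_l(X) = \sigma(XW_l)$ (so $H_l = 1$ and $s_l$ coincides with the maximum singular value of $W_l$), and the subspace $U \subset \bbR^N$ of eigenvectors of $P$ associated with the eigenvalue $1$. Once the two assumptions and the spectral bound are checked in the GCN setting, a one-line iteration finishes the theorem.

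First I would invoke Proposition~\ref{prop:spectra-of-gcn} to identify the relevant $U$ and the associated $\lambda$. That proposition guarantees that the top eigenvalue of $P$ equals $1$ with multiplicity equal to the number of connected components $M$, that all other eigenvalues satisfy $|\lambda_n| < 1$ (so that $\lambda := \max_{n \in [N-M]} |\lambda_n| < 1$), and that $e_m := \tilde{D}^{1/2} u_m$ for $m \in [M]$ form a basis of $U$. Since the $u_m$ are indicator vectors of disjoint connected components and $\tilde{D}$ is a diagonal matrix with strictly positive entries, the $e_m$ are non-negative and mutually orthogonal; after normalizing, they form a non-negative orthonormal basis of $U$, giving Assumption~\ref{assump:positivity}. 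Assumption~\ref{assump:invariance} holds for free because $U$ is by definition an eigenspace of $P$. The operator norm of $P|_{U^\perp}$ is the largest absolute value of eigenvalues of $P$ restricted to the orthogonal complement, which is precisely $\lambda$ as defined above.

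Having verified the hypotheses, I would apply Theorem~\ref{thm:convergence-to-invariant-subspace} to each layer map $f_l(X) = \sigma(PXW_l)$ to obtain
\[
d_\mathcal{M}(X^{(l+1)}) = d_\mathcal{M}(f_l(X^{(l)})) \leq s_l \lambda\, d_\mathcal{M}(X^{(l)}).
\]
Iterating this inequality from $l=0$ and using $s_l \leq s := \sup_{l \in \bbN_+} s_l$ gives $d_\mathcal{M}(X^{(l)}) \leq (s\lambda)^l d_\mathcal{M}(X^{(0)})$, which is the main inequality. The ``in particular'' clause then follows immediately: under $s\lambda < 1$, the right-hand side decays geometrically in $l$, so $d_\mathcal{M}(X^{(l)}) \to 0$ exponentially fast regardless of the initial value $X^{(0)}$.

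No real obstacle remains once the spectral structure from Proposition~\ref{prop:spectra-of-gcn} is in hand; the only subtle point is checking non-negativity and orthogonality of the $e_m$ basis, which follows from the block-indicator form of the $u_m$ together with positivity of the diagonal of $\tilde{D}^{1/2}$. Everything else is a mechanical specialization of the general dynamical-system result to the concrete $P$, with the one-layer MLP structure simply identifying $s_l$ with the operator norm of $W_l$.
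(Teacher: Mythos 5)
Your proposal is correct and follows exactly the route the paper intends: Proposition~\ref{prop:spectra-of-gcn} supplies the spectral structure (non-negative, mutually orthogonal basis $e_m = \tilde{D}^{1/2}u_m$ of the top eigenspace after normalization, and $\lambda < 1$ on the complement), after which Theorem~\ref{thm:convergence-to-invariant-subspace} is applied layer by layer and iterated. Nothing is missing.
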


In the context of node classification tasks, we can interpret this corollary as the ``information loss" of GCNs in the limit of infinite layers.
For any $X\in \mathcal{M}$, if two nodes $i, j\in V$ are in a same connected component and their degrees are identical, then, the column vectors of $X$ that correspond to nodes $i$ and $j$ are identical.
It means that we cannot distinguish these nodes using $X$.
In this sense, $\mathcal{M}$ only has information about connected components and node degrees and we can interpret this theorem as the exponential information loss of GCNs in terms of the layer size.
Similarly to the discussion in the previous section, $X^{(l)}$ converges to the trivial fixed point $0$ when $s<1$ (remember $\lambda_{N} = 1$).
An interesting point is that even if $s\geq 1$, $X^{(l)}$ can suffer from this information loss when $s < \lambda^{-1}$.

We note that the rate $s\lambda$ in Theorem \ref{thm:gcn-case} depends on the spectra of the augmented normalized Laplacian, which is determined by the topology of the underlying graph $G$.
Hence, our result explicitly relates the topological information of graphs and asymptotic behaviors of graph NNs.
\begin{remark}
    The old preprint (version 2) of \citet{NIPS2019_9276} formulated a theorem that explains the over-smoothing of non-linear GNNs. 
    Specifically, it claimed that if a graph does not have a bipartite component and the input distribution is continuous, the rank of the output of a GCN converges to the number of connected components of the underlying graph as the layer size goes to infinity almost surely. However, it is not true in general as we give a counterexample in Appendix \ref{sec:luan-counter-example}.
\end{remark}
\section{Asymptotic Behavior of GCN on Erd\H{o}s -- R\'{e}nyi Graph}\label{sec:gcn-on-erdos-renyi-graph}

Theorem \ref{thm:gcn-case} gives us a way to characterize the asymptotic behaviors of GCNs via the spectral distributions of the underlying graphs.
To demonstrate this, we consider an Erd\H{o}s -- R\'{e}nyi graph $G_{N,p}$ \citep{erdos1959random,gilbert1959random}, which is a random graph that has $N$ nodes and whose edges between two distinct nodes appear independently with probability $p\in [0, 1]$, as an example.
First, consider a (non-random) graph $G$ with $M$ connected components.
Let $0 = \tilde{\mu}_1 = \cdots = \tilde{\mu}_M < \tilde{\mu}_{M+1} \leq \cdots \leq \tilde{\mu}_{N} < 2$ be eigenvalues of the augmented normalized Laplacian of $G$ (see, Proposition \ref{prop:spectra-of-gcn}) and set $\lambda := \min_{m=M+1, \ldots, N} |1-\tilde{\mu}_m| (< 1)$.
By Theorem \ref{thm:gcn-case}, the output of GCN ``loses information" as the layer size goes to infinity when the largest singular values of weights are strictly smaller than $\lambda^{-1}$.
Therefore, the closer the positive eigenvalues $\mu_m$ are to $1$, the broader range of GCNs satisfies the assumption of Theorem \ref{thm:gcn-case}.

For an Erd\H{o}s -- R\'{e}nyi graph $G_{N,p}$, \citet{chung2011spectra} showed that when $\frac{\log N}{Np} = o(1)$, the eigenvalues of the (usual) normalized Laplacian except for the smallest one converge to $1$ with high probability (see Theorem 2 therein)\footnote{\citet{chung2004spectra} and \citet{coja2007laplacian} proved similar theorems.}.
We can interpret this theorem as the convergence of Erd\H{o}s-R\'{e}nyi graphs to the complete graph in terms of graph spectra.
We can prove that the augmented normalized Laplacian behaves similarly (Lemma \ref{lem:spectra-of-augmented-normalized-laplacian}).
By combining this fact with the discussion in the previous paragraph, we obtain the asymptotic behavior of GCNs on the Erd\H{o}s -- R\'{e}nyi graph.
See Appendix \ref{sec:spectra-of-augmented-normalized-laplacian} for the complete proof.
\begin{theorem}\label{thm:gcn-on-erdos-renyi-graph}
    Consider a GCN on the Erd\H{o}s-R\'{e}nyi graph $G_{N, p}$ such that $\frac{\log N}{Np} = o(1)$ as a function of $N$.
    For any $\varepsilon > 0$, if the supremum $s$ of the maximum singular values of weights in the GCN satisfies $s < s_0 := \frac{1}{7}\sqrt{\frac{Np - p + 1}{\log (4N/\varepsilon)}}$, then, for sufficiently large $N$, the GCN satisfies the condition of Theorem \ref{thm:gcn-case} with probability at least $1-\varepsilon$.
\end{theorem}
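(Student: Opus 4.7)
The plan is to reduce Theorem \ref{thm:gcn-on-erdos-renyi-graph} to the spectral concentration statement Lemma \ref{lem:spectra-of-augmented-normalized-laplacian}, and then invoke Theorem \ref{thm:gcn-case}. Recall from Proposition \ref{prop:spectra-of-gcn} that the eigenvalues $\lambda_1 \le \cdots \le \lambda_{N-M}$ of $P = I_N - \tilde\Delta$ corresponding to the ``nontrivial'' part are in one-to-one correspondence with $1 - \tilde\mu_m$ for those eigenvalues $\tilde\mu_m$ of $\tilde\Delta$ that are not equal to $0$, and that $\lambda = \max_{n\le N-M}|\lambda_n| = \max_{\tilde\mu_m\ne 0} |1 - \tilde\mu_m|$. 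So it suffices to show that with probability at least $1-\varepsilon$, every nonzero eigenvalue of $\tilde\Delta$ lies in the interval $(1 - 1/s_0,\, 1 + 1/s_0)$; then $s\lambda < s/s_0 < 1$ and Theorem \ref{thm:gcn-case} applies. This is exactly what I will package as Lemma \ref{lem:spectra-of-augmented-normalized-laplacian}, stated for any $G_{N,p}$ with $\log N/(Np)=o(1)$ in the asymptotic form: for all $N$ sufficiently large, with probability $\ge 1-\varepsilon$, $\max_{\tilde\mu_m\ne 0}|1-\tilde\mu_m| \le 7\sqrt{\log(4N/\varepsilon)/(Np-p+1)}$.

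To prove Lemma \ref{lem:spectra-of-augmented-normalized-laplacian} I will follow a Chung--Radcliffe style concentration argument, adapted to the \emph{augmented} normalized Laplacian. Let $\bar d := Np - p + 1 = (N-1)p + 1$ be the expected augmented degree and consider the ``idealized'' operator $\bar P := \bar d^{-1}\tilde A^{\star}$ where $\tilde A^{\star} := \mathbb{E}[\tilde A] = p(J_N - I_N) + I_N = pJ_N + (1-p)I_N$. A direct computation shows $\bar P$ has eigenvalue $1$ on the span of $\mathbf{1}_N$ and eigenvalue $(1-p)/\bar d$ on the orthogonal complement, so $I_N - \bar P$ has all but one eigenvalue equal to $1 - (1-p)/\bar d$, which differs from $1$ by $(1-p)/\bar d = O(1/(Np))$. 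It remains to control the perturbation from $\bar P$ to $P = \tilde D^{-1/2}\tilde A\tilde D^{-1/2}$. I will split this into two steps: (i) a Chernoff bound on each degree, together with a union bound over $N$ nodes, to show $\|\tilde D/\bar d - I_N\|$ is of order $\sqrt{\log(N/\varepsilon)/\bar d}$ with probability at least $1 - \varepsilon/2$; (ii) a matrix concentration bound (e.g., matrix Bernstein, or the Feige--Ofek/Chung--Radcliffe technique) on the centered adjacency to show $\|\tilde A - \tilde A^{\star}\|_{\mathrm{op}} \lesssim \sqrt{Np\log(N/\varepsilon)}$ with probability at least $1-\varepsilon/2$. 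Combining (i) and (ii) via $\|P - \bar P\| \le \|\tilde D^{-1/2}\|^2 \|\tilde A - \tilde A^{\star}\| + \|\bar P\|\,\|I - \bar d\,\tilde D^{-1}\| \cdot (\text{const})$, together with $\|\bar d\,\tilde D^{-1} - I\| \le \|\tilde D/\bar d - I\|/(1 - \|\tilde D/\bar d - I\|)$, I will get $\|P - \bar P\|_{\mathrm{op}} = O(\sqrt{\log(N/\varepsilon)/\bar d})$. Then Weyl's inequality transfers this to the eigenvalues of $I - P = \tilde\Delta$, and the numerical constant $7$ comes out of carefully tracking the constants (and absorbing the $(1-p)/\bar d$ term from $I - \bar P$), using $\log N/(Np) = o(1)$ to swallow lower-order terms for $N$ large.

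\textbf{Main obstacle.} The delicate point is the matrix concentration inequality in step (ii): for $\tilde A - \tilde A^\star$, the entries are independent and bounded, but the naive matrix Hoeffding/Bernstein bound yields a suboptimal factor in the very sparse regime. I will use the fact that the entry-wise variance is $p(1-p) \le p$ and the per-entry range is $1$, so matrix Bernstein gives a bound of order $\sqrt{Np\log(N/\varepsilon)} + \log(N/\varepsilon)$; the assumption $\log N/(Np) = o(1)$ is precisely what lets the subgaussian term dominate, so that the final rate $\sqrt{\log(N/\varepsilon)/\bar d}$ holds asymptotically. A secondary (but easier) issue is ensuring that $\tilde D$ is uniformly bounded below so that $\tilde D^{-1/2}$ is well-behaved on the whole spectrum; this is where the augmentation $D \mapsto D + I_N$ is useful, since even isolated vertices have $\tilde d_i \ge 1$, and on the complement of the low-probability event in step (i) the degrees are comparable to $\bar d$. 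Once the lemma is in hand, the theorem follows by substituting the bound into $\lambda < 1/s_0$ and applying Theorem \ref{thm:gcn-case}, as already sketched.
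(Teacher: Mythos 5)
Your proposal follows essentially the same route as the paper: compute the spectrum of the idealized operator $\bar{D}^{-1/2}\mathbb{E}[\tilde A]\bar{D}^{-1/2}$ (eigenvalue $1$ on $\mathrm{span}(\mathbf{1})$ and $(1-p)/(Np-p+1)$ elsewhere), control the perturbation by splitting into a Chernoff-plus-union bound on the degrees and a matrix-Bernstein (Chung--Radcliffe) bound on the centered adjacency, transfer to eigenvalues via Weyl, and absorb the $(1-p)/(Np-p+1)$ bias term using $\log N/(Np)=o(1)$ to extract the constant $7$. The only cosmetic difference is that the paper proves the concentration lemma for general independent-edge random graphs and then specializes to $G_{N,p}$, whereas you work with the Erd\H{o}s--R\'enyi case directly (where $\bar D$ is a scalar matrix); the substance is identical.
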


Theorem \ref{thm:gcn-on-erdos-renyi-graph} requires that an underlying graph is not extremely sparse.
For example, suppose the node size is $N = 20,000$, which is the approximately the maximum node size of datasets we use in experiments, and the edge probability is $p=\log N/N$.
Then, each node has the order of $Np \approx 4.3$ adjacent nodes.

Under the condition of Theorem \ref{thm:gcn-on-erdos-renyi-graph}, the upper bound $s_0 \to \infty$ as $N\to \infty$.
It means that if the graph is sufficiently large and not extremely sparse, most GCNs suffer from the information loss.
For the dependence on the edge probability $p$, $s_0$ is an increasing function of $p$, which means the denser a graph is, the more quickly graph convolution operations mix signals on nodes and move them close to each other.

Theorem \ref{thm:gcn-on-erdos-renyi-graph} implies that graph NNs perform poorly on dense NNs.
More aggressively, we can hypothesize that the sparsity of practically available graphs is one of the reasons for the success of graph NNs in node classification tasks. To confirm this hypothesis, we artificially add edges to citation networks to make them dense in the experiments and observe the failure of graph NNs as expected (see Section \ref{sec:experiment-singular-value}).
\section{Experiment}

\subsection{Synthesis Data: One-step Transition}\label{sec:experiment-visualization}

\begin{figure}[t]
  \centering
  \includegraphics[width=0.4\linewidth]{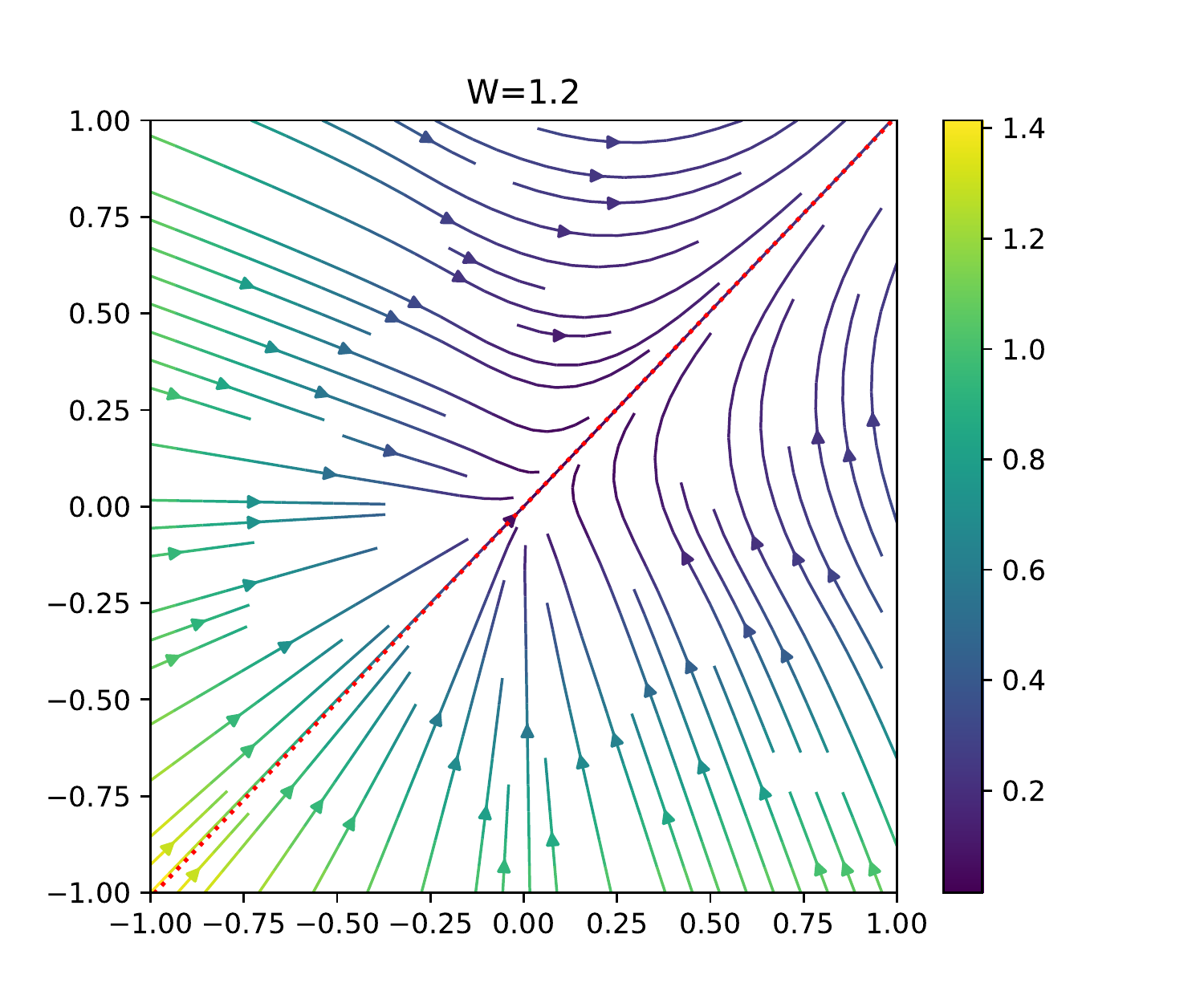}
  \includegraphics[width=0.4\linewidth]{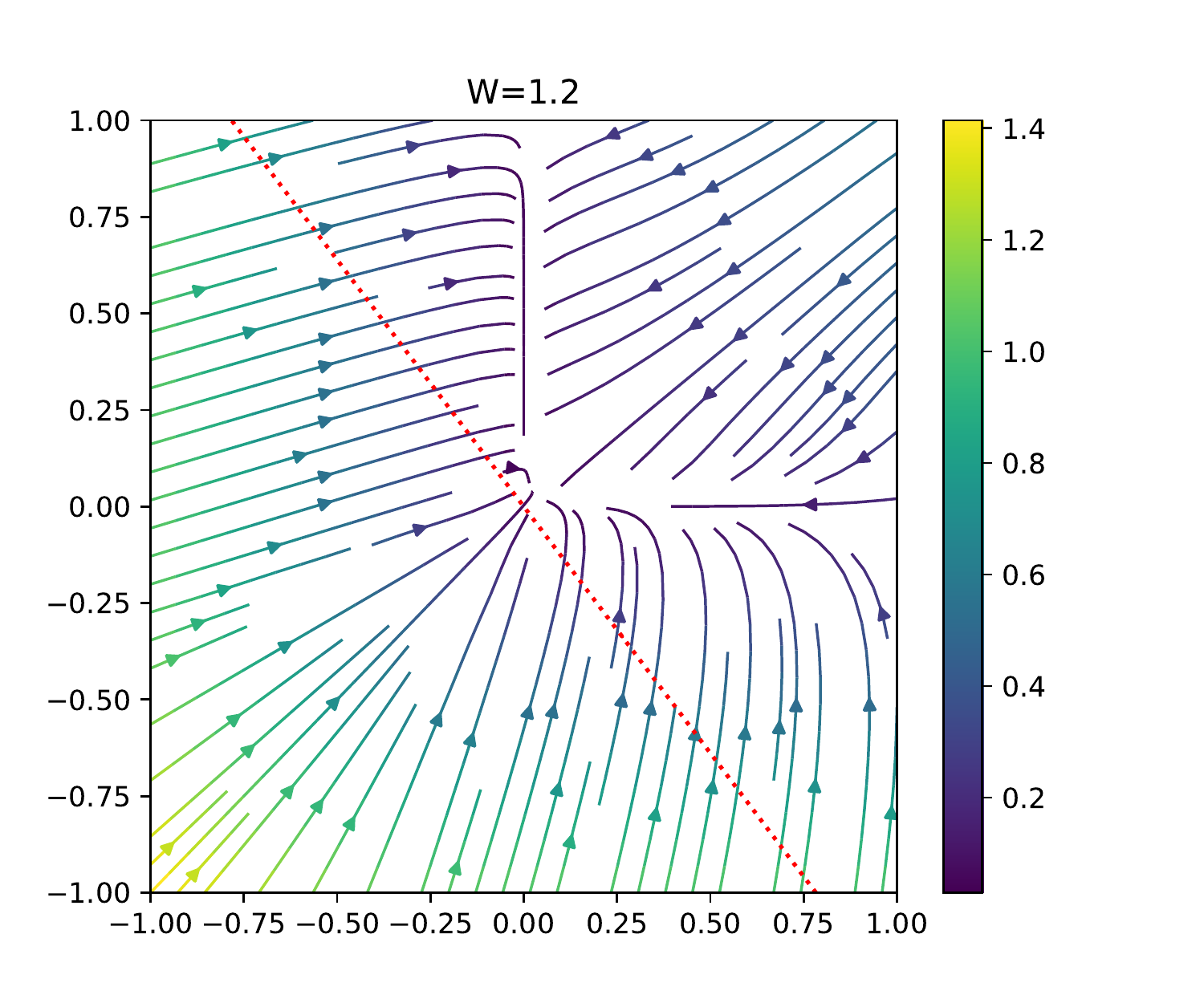}
  \caption{Visualization of vector field $V(X):=f(X)-X$ induced by the one-step transition. Color maps indicate the absolute value $|V(X)|$ at the point $X$. Dotted lines are the subspace $\mathcal{M}$. Left: \textbf{Case 1}. Right: \textbf{Case 2}. Best view in color.}\label{fig:vec-field}
\end{figure}

We numerically investigate how the transition $f(X) := \sigma(PXW)$ changes inputs using the vector field $V(X):= f(X) - X$\footnote{Since we consider the one-step transition only, we omit the subscript $l$ from $f_l$, $X_l$, and $W_l$.}.
For this purpose, we set $N=2$, $M=1$, and $C=1$.
Let $\lambda_1 \leq \lambda_2$ be the eigenvalues of $P$.
We choose $W$ as $|\lambda_2|^{-1} \leq W < |\lambda_{1}|^{-1}$ so that Theorem \ref{thm:convergence-to-invariant-subspace} is applicable but is not reduced to the trivial situation 
(see, Appendix \ref{sec:convergence-to-trivial-fixed-point}).
We choose the eigenvector $e\in \bbR^2$ associated with $\lambda_2$ in two ways as described below.
See Appendix \ref{sec:experiment-visualization-detail} for the concrete values of $P$, $e$, and $W$.
Figure \ref{thm:convergence-to-invariant-subspace} shows the visualization of $V$.
First, we choose the non-negative eigenvector $e$ so that it satisfies Assumption \ref{assump:positivity} (\textbf{Case 1}).
We see that the transition function $f$ uniformly decreases the distance from $\mathcal{M}$.
This is consistent with the consequence of Theorem \ref{thm:convergence-to-invariant-subspace}.
Next, we choose the eigenvector $e = \begin{bmatrix}e_1 & e_2\end{bmatrix}^\top$ such that the signs of $e_1$ and $e_2$ differ (\textbf{Case 2}), which violates Assumption \ref{assump:positivity}.
We see that $\mathcal{M}$ is not invariant under $f$ and $f$ does not uniformly decrease the distance from $\mathcal{M}$. Therefore, we cannot remove the non-negativity assumption from Theorem \ref{thm:convergence-to-invariant-subspace}.

\subsection{Synthesis Data: Distance to Invariant Space}\label{sec:experiment-synthesis-data}

We evaluate the distance to the invariant space $\mathcal{M}$ using synthesis data.
We randomly generate an Erd\H{o}s -- R\'{e}nyi graph, a GCN on it, and an input signal $X^{(0)}$.
We compute the distance between the $l$-th intermediate output $X^{(l)}$ and the invariant space $\mathcal{M}$ for various edge probability $p$ and maximum singular value $s$.
Figure \ref{fig:d_m_comparison} plots the logarithm of the relative distance $y(l)= \log \frac{d_\mathcal{M}(X^{(l)})}{d_\mathcal{M}(X^{(0)})}$ with respect to the layer index $l$.
From Theorem \ref{thm:convergence-to-invariant-subspace}, we know that it is upper bounded by $y(l) = l \log (s\lambda)$.
We see that this bound well approximates the actual value when $s\lambda$ is small.
On the other hand, it is loose for large $s\lambda$.
We leave tighter bounds for $d_\mathcal{M}$ in such a case for future research.

\begin{figure}[t]
  \centering
  \includegraphics[width=0.4\linewidth]{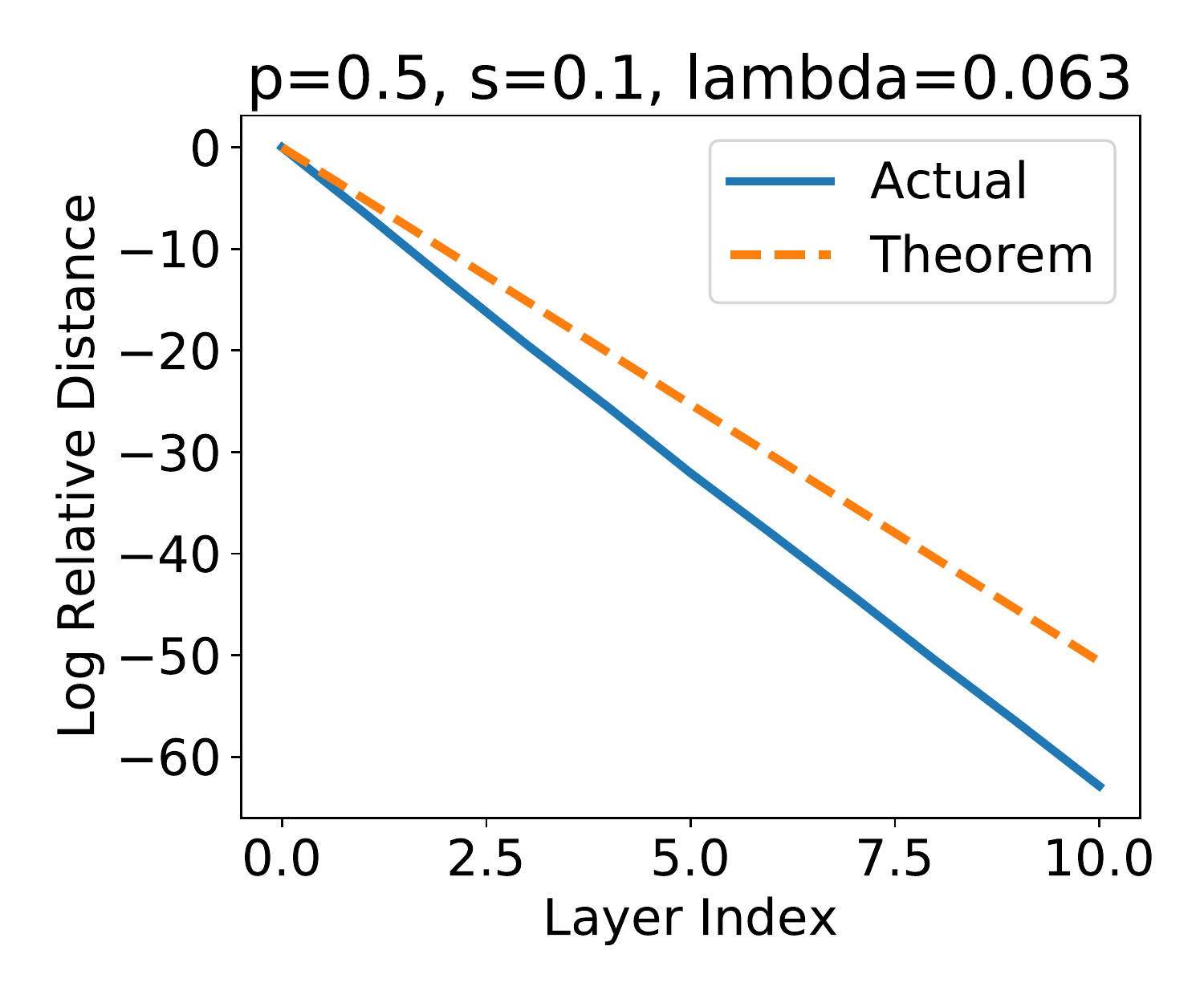}
  \includegraphics[width=0.4\linewidth]{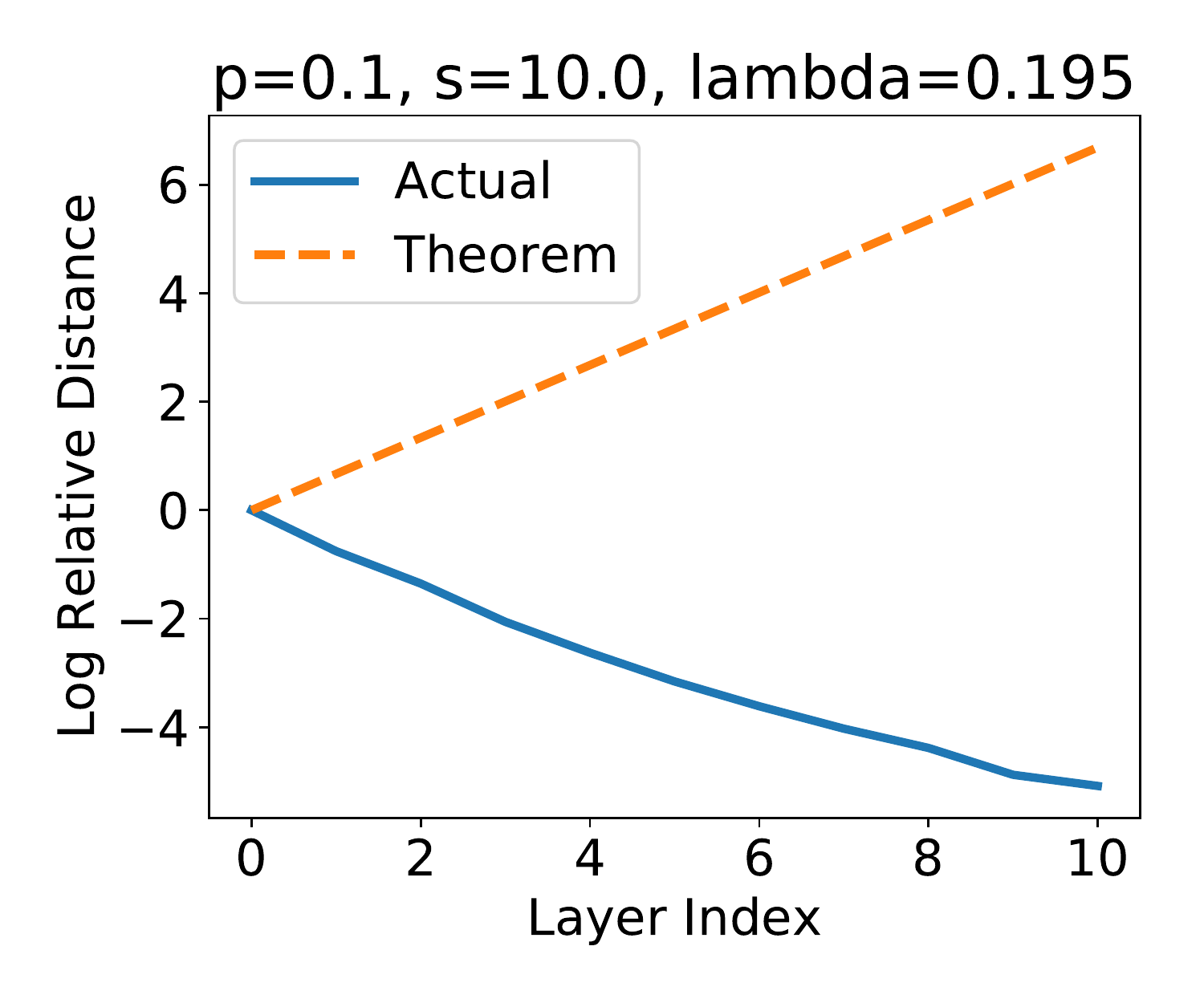}
  \caption{The actual distances to the invariant space $\mathcal{M}$ and their upper bounds. Solid lines are the log relative distance defined by $y(l)= \log (d_\mathcal{M}(X^{(l)}) / d_\mathcal{M}(X^{(0)}))$ and dotted lines are upper bound $y(l)= l\log (s\lambda)$, where $X^{(0)}$ is the input signal and $X^{(l)}$ is the output of the $l$-th layer.}\label{fig:d_m_comparison}
\end{figure}

\begin{figure}[t]
  \centering
    \includegraphics[height=0.28\linewidth]{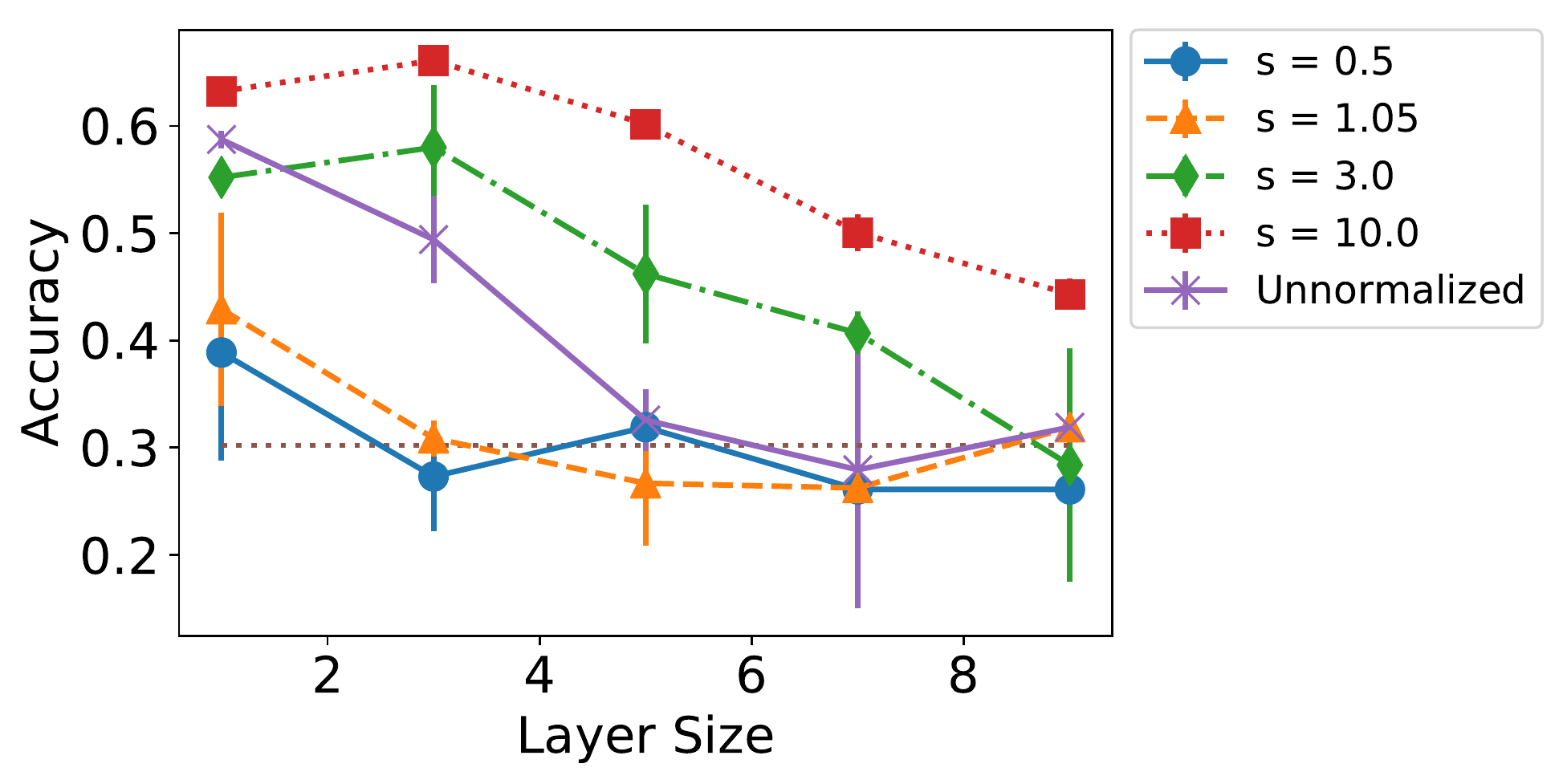}
    \includegraphics[height=0.29\linewidth]{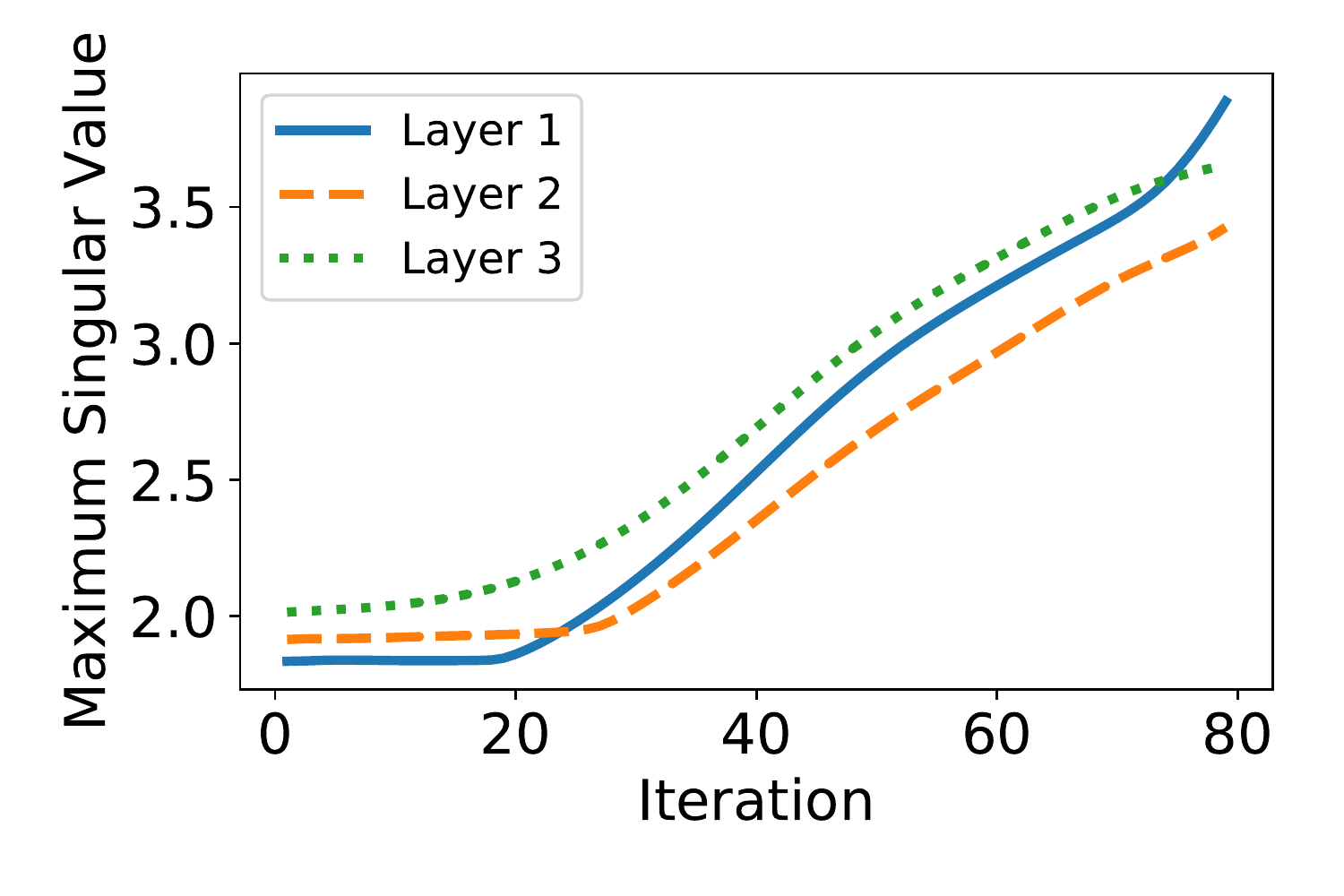}
    \caption{Node prediction results on Noisy Cora. Left: Effect of the maximum singular values on weights on model performance. The horizontal dotted line indicates the chance rate (30.2\%). The error bar is the standard deviation of 3 trials.
    Right: Transition of maximum singular values during training. See Appendix \ref{sec:experiment-normalization-detail} for results using other datasets. Best view in color.}\label{fig:noisy-cora}
\end{figure}

\subsection{Real Data: Effect of Maximum Singular Values on Performance}\label{sec:experiment-singular-value}

Theorem \ref{thm:gcn-case} implies that if $s$ is smaller than the threshold $\lambda^{-1}$, we cannot expect deep GCN to achieve good prediction accuracy.
Conversely, if we can successfully train the model, $s$ should avoid the region $s\leq \lambda^{-1}$.
We empirically confirm these hypotheses using real datasets.

We use Cora, CiteSeer, and PubMed~\citep{sen2008collective}, which are standard citation network datasets.
The task is to classify the genre of papers using word occurrences and citation relationships.
We regard each paper as a node and citation relationship as an edge.
Due to space constraints, we focus on Cora in the main article. See Appendix \ref{sec:normalization-experiment-setting} and  \ref{sec:experiment-normalization-detail} for the other datasets.
The discussion in Section \ref{sec:gcn-on-erdos-renyi-graph} implies that Theorem \ref{thm:gcn-case} can support a wide range of GCNs when the underlying graph is relatively dense.
However, the citation networks are too sparse to examine the aforementioned hypotheses --- Theorem \ref{thm:gcn-case} gives a non-trivial result only when $1 \leq s < \lambda^{-1} \approx 1 + 3.62 \times 10^{-3}$.
To circumvent this, we make \textit{noisy versions} of citation networks by randomly adding edges to graphs.
Through this manipulation, we can increase the value of $\lambda^{-1}$ to $1.11$.

Figure \ref{fig:noisy-cora} (left) shows the accuracy for the test dataset in terms of the maximum singular values and the number of graph convolution layers.
We can observe that when GCNs whose maximum singular value $s$ is out of the region $s < \lambda^{-1}$ outperform those inside the region in almost all configurations.
Furthermore, the accuracy of GCNs with $s=10$ are better than those without normalization (\textit{unnormalized}).
Figure \ref{fig:noisy-cora} (right) shows the transition of the maximum singular values of the weights during training when we use a three-layered GCN.
We can observe that the maximum singular value $s$ does not shrink to the region $s \leq \lambda^{-1}$.
In addition, when the layer size is small and predictive accuracy is high, GCNs gradually increase $s$ from the initial value and avoid the region.
In conclusion, the experiment results are consistent with the theorems.

\subsection{Real Data: Effect of Signal Component Perpendicular To Invariant Space}\label{sec:experiment-tangent}

\begin{wrapfigure}{r}{0.35\linewidth}
  \centering
  \vspace{-4ex}
  \includegraphics[width=0.95\linewidth]{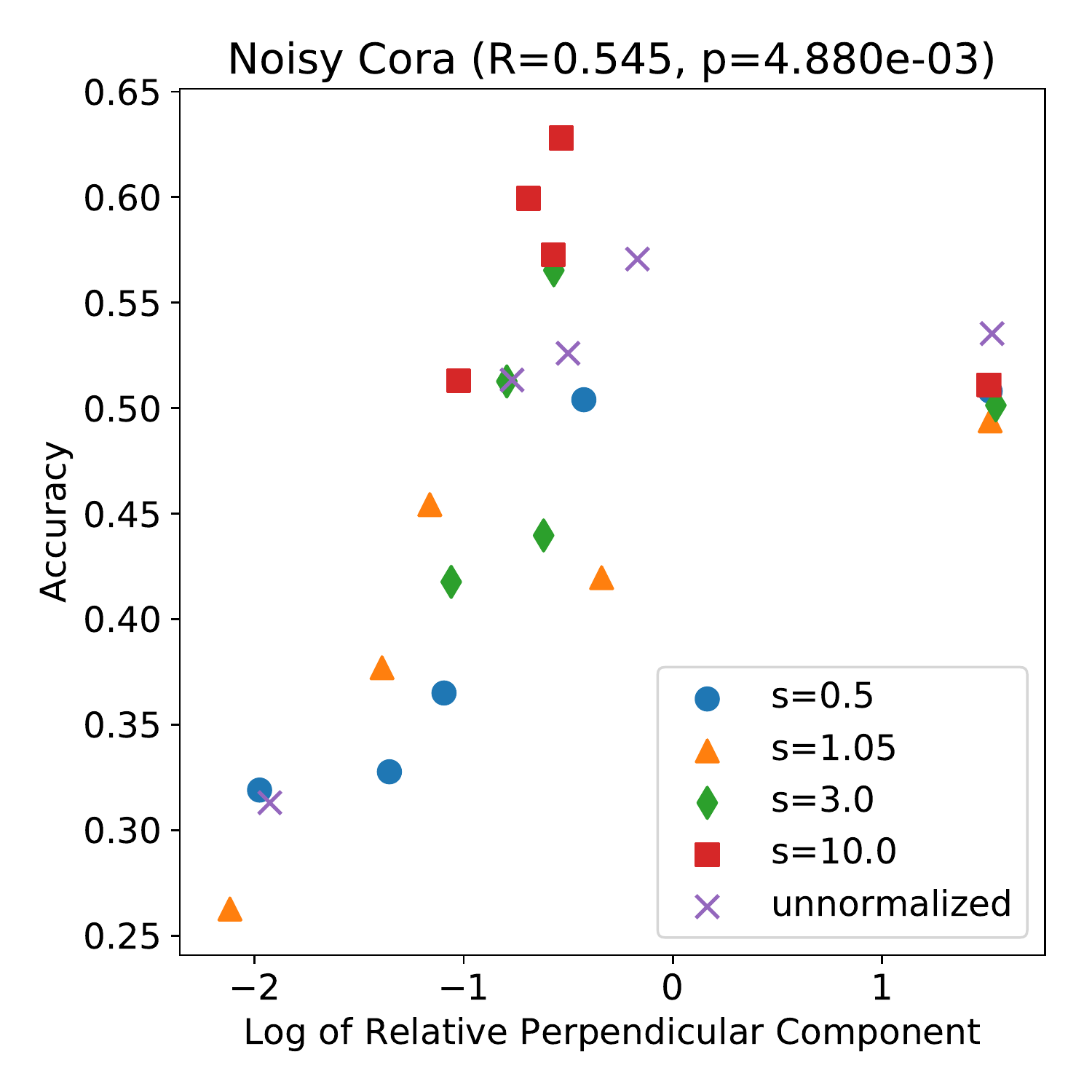}
  \caption{$\log t(X)$ and prediction accuracy on Noisy Cora.}\label{fig:tangent}
\end{wrapfigure}

We can decompose the output $X$ of a model as $X = X_0 + X_1$ ($X_0\in \mathcal{M}$, $X_1 \in \mathcal{M}^{\perp}$).
According to the theory, $X_0$ has limited information for node classification.
We hypothesize that the model emphasizes the perpendicular component $X_1$ to perform good predictions.
To quantitatively evaluate it, we define the relative magnitude of the perpendicular component of the output $X$ by $t(X) := X_1 / X_0$.
Figure \ref{fig:tangent} compares this quantity and the prediction accuracy on the noisy version of Cora (see Appendix \ref{sec:experiment-tangent-detail} for other datasets).
We observe that these two quantities are correlated ($R=0.545$).
If we remove GCNs have only one layer (corresponding to right points in the figure), the correlation coefficient is $0.827$.
This result does not contradict to the hypothesis above \footnote{We cannot conclude that large perpendicular components are essential for good performance, since the maximum singular value $s$ is correlated to the accuracy, too.}.
\section{Discussion}

{\bf Applicability to Graph NNs on Sparse Graphs.}
We have theoretically and empirically shown that when the underlying graph is sufficiently dense and large, the threshold $\lambda^{-1}$ is large (Theorem \ref{thm:gcn-case} and Section \ref{sec:experiment-singular-value}), which means many graph CNNs are eligible.
However, real-world graphs are not often dense, which means that Theorem \ref{thm:gcn-case} is applicable to very limited GCNs.
In addition, \citet{coja2007laplacian} theoretically proved that if the expected average degree of $G_{N, p}$ is bounded, the smallest positive eigenvalue of the normalized Laplacian of $G_{N, p}$ is $o(1)$ with high probability.
The asymptotic behaviors of graph NNs on sparse graphs are left for future research.

{\bf Remedy for Over-smoothing.}
Based on our theory, we can propose several techniques for mitigating the over-smoothing phenomena.
One idea is to (randomly) sample edges in an underlying graph. The sparsity of practically available graphs could be a factor in the success of graph NNs. Assuming this hypothesis is correct, there is a possibility that we can relive the effect of over-smoothing by sparsification. Since we can never restore the information in pruned edges if we remove them permanently, random edge sampling could work better as FastGCN~\citep{chen2018fastgcn} and GraphSAGE~\citep{NIPS2017_6703} do.
Another idea is to scale node representations (i.e., intermediate or final output of graph NNs) appropriately so that they keep away from the invariant space $\mathcal{M}$. Our proposed weight scaling mechanism takes this strategy. Recently, \citet{zhao2020pairnorm} has proposed PairNorm to alleviate the over-smoothing phenomena. Although the scaling target is different -- they rescaled signals whereas we normalized weights -- theirs and ours have similar spirits.

{\bf Graph NNs with Large Weights.} Our theory suggests that the maximum singular values of weights in a GCN should not be smaller than a threshold $\lambda^{-1}$ because it suffers from information loss for node classification.
On the other hand, if the scale of weights are very large, the model complexity of the function class represented by graph NNs increases, which may cause large generalization errors.
Therefore, from a statistical learning theory perspective, we conjecture that the graph NNs with too-large weights perform poorly, too.
A trade-off should exist between the expressive power and model complexity and there should be a ``sweet spot" on the weight scale that balances the two.

{\bf Relation to Double Descent Phenomena.}
\citet{belkin2019reconciling} pointed out that modern deep models often have \textit{double descent} risk curves: when a model is under-parameterized, a classical bias-variance trade-off occurs. However, once the model has a large capacity and perfectly fits the training data, the test error decreases as we increase the number of parameters.
To the best of our knowledge, no literature reported the double descent phenomena for graph NNs (it is consistent with the picture of the classical U-shaped risk curve in the previous paragraph).
It is known that double descent phenomena do not occur in some situations, especially depending on regularization types. For example, while~\citet{belkin2019reconciling} employed the interpolating hypothesis with the minimum norm,~\citet{mei2019generalization} found that the double descent was alleviated or disappeared when they used Ridge-type regularization techniques. Therefore, one can hypothesize the over-smoothing is a cause or consequence of regularization that is more like a Ridge-type rather than minimum-norm inductive bias.

{\bf Limitations in Graph NN Architectures.} Our analysis is limited to graph NNs with the ReLU activation function because we implicitly use the property that ReLU is a projection onto the cone $\{X \geq 0\}$ (Appendix \ref{sec:proof-of-thm-convergence-to-invariant-subspace}, Lemma \ref{lem:non-linear-ineq}).
This fact enables the ReLU function to get along with the non-negativity of eigenvectors associated with the largest eigenvalues.
Therefore, it is far from trivial to extend our results to other activation functions such as the sigmoid function or Leaky ReLU \citep{Maas13rectifiernonlinearities}.
Another point is that our formulation considers the update operation \citep{gilmer17a} of graph NNs only and does not take readout operations into account.
In particular, we cannot directly apply our theory to graph classification tasks in which each sample is a graph.

{\bf Over-smoothing of Residual GNNs.} Considering the correspondence of graph NNs and Markov processes (see Appendix \ref{sec:markov-process-case}), one can imagine that residual links do not contribute to alleviating the over-smoothing phenomena because adding residual connections to a graph NN corresponds to converting a Markov process to its lazy version. When a Markov process converges to a stable distribution, the corresponding lazy process also converges eventually under certain conditions. It implies that residual links might not be helpful.
However,~\citet{li2019deepgcns} reported that graph NNs with as many as 56 layers performed well if they added residual connections. Considering that, the situation could be more complicated than our intuitions. The analysis of the role of residual connections in graph NNs is a promising direction for future research.
\section{Conclusion}

In this paper, to understand the empirically observed phenomena that deep non-linear graph NNs do not perform well, we analyzed their asymptotic behaviors by interpreting them as a dynamical system that includes GCN and Markov process as special cases.
We gave theoretical conditions under which GCNs suffer from the information loss in the limit of infinite layers.
Our theory directly related the expressive power of graph NNs and topological information of the underlying graphs via spectra of the Laplacian.
It enabled us to leverage spectral and random graph theory to analyze the expressive power of graph NNs.
To demonstrate this, we considered GCN on the Erd\H{o}s -- R\'{e}nyi graph as an example and showed that when the underlying graph is sufficiently dense and large, a wide range of GCNs on the graph suffer from information loss.
Based on the theory, we gave a principled guideline for how to determine the scale of weights of graph NNs and empirically showed that the weight normalization implied by our theory performed well in real datasets.
One promising direction of research is to analyze the optimization and statistical properties such as the generalization power~\citep{sgb_kdd19} of graph NNs via spectral and random graph theories.

\subsubsection*{Acknowledgments}
We thank Katsuhiko Ishiguro for providing a part of code for the experiments, Kohei Hayashi and Haru Negami Oono for giving us feedback and comments on the draft, Keyulu Xu and anonymous reviewers for fruitful discussions via OpenReview, and Ryuta Osawa for pointing out errors and suggesting improvements of the paper.
TS was partially supported by JSPS KAKENHI (15H05707, 18K19793, and 18H03201), Japan Digital Design, and JST CREST.

\bibliography{iclr2020_conference}
\bibliographystyle{iclr2020_conference}

\appendix

\section{Proof of Theorem \ref{thm:convergence-to-invariant-subspace}} \label{sec:proof-of-thm-convergence-to-invariant-subspace}

As we wrote in the main article, it is enough to show the following lemmas (definition of miscellaneous variables are as in Section \ref{sec:problem-setting}).
Remember that $\lambda = \sup_{n\in [N-M]} |\lambda_n|$ and $s_{lh}$ is the maximum singular value of $W_{lh}$

\begin{lemma}\label{lem:linear-ineq}
    For any $X\in \bbR^{N\times C}$, we have $d_{\mathcal{M}}(PX)\leq \lambda d_{\mathcal{M}}(X)$.
\end{lemma}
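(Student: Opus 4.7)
The plan is to reduce the claim to a column-wise computation exploiting that the Frobenius norm and the subspace $\mathcal{M}$ both decompose across the $C$ columns of $X$. Concretely, I would first observe that $\mathcal{M} = U \otimes \mathbb{R}^C$ means $\mathcal{M}^\perp = U^\perp \otimes \mathbb{R}^C$ with respect to the Frobenius inner product, so the orthogonal projector onto $\mathcal{M}$ acts on $X$ column-by-column as the orthogonal projector $\Pi_U : \mathbb{R}^N \to U$. Consequently, if $X = [x_1,\ldots,x_C]$ and each $x_c = u_c + v_c$ with $u_c \in U$ and $v_c \in U^\perp$, then the nearest point of $\mathcal{M}$ to $X$ has columns $u_c$, and $d_{\mathcal{M}}(X)^2 = \sum_{c=1}^C \|v_c\|_2^2$.

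Next I would use Assumption \ref{assump:invariance} together with Proposition \ref{prop:invariance-of-u-perp} (the invariance of $U^\perp$ under $P$, which the excerpt cites from the appendix and which follows from symmetry of $P$). These give $Pu_c \in U$ and $Pv_c \in U^\perp$ for each $c$. Hence $PX$ has the decomposition with columns $Px_c = Pu_c + Pv_c$ already aligned with $U \oplus U^\perp$, so
\begin{equation*}
d_{\mathcal{M}}(PX)^2 = \sum_{c=1}^C \|Pv_c\|_2^2.
\end{equation*}

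Finally, by the definition of $\lambda$ as the operator norm of $P|_{U^\perp}$, each perpendicular column satisfies $\|Pv_c\|_2 \leq \lambda \|v_c\|_2$. Summing the squared inequalities and taking square roots yields
\begin{equation*}
d_{\mathcal{M}}(PX)^2 \leq \lambda^2 \sum_{c=1}^C \|v_c\|_2^2 = \lambda^2 d_{\mathcal{M}}(X)^2,
\end{equation*}
which is the desired bound. There is no real obstacle here — the only non-trivial ingredient is the identification $\mathcal{M}^\perp = U^\perp \otimes \mathbb{R}^C$, which needs a brief justification using an orthonormal basis $(e_m)$ of $U$ extended to one of $\mathbb{R}^N$ and the fact that $\{e_m \otimes \delta_c\}$ forms an orthonormal basis of $\mathcal{M}$ under the Frobenius inner product. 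Notably, this lemma is purely linear and does not invoke Assumption \ref{assump:positivity}; that assumption will enter only in the companion nonlinear lemma bounding $d_{\mathcal{M}}(\sigma(X))$.
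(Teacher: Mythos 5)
Your proof is correct and follows essentially the same route as the paper: decompose $\bbR^{N}$ as $U\oplus U^{\perp}$, use the invariance of both subspaces under $P$, and bound the perpendicular part by the operator norm $\lambda$ of $P|_{U^{\perp}}$. The only cosmetic difference is that the paper expands $X$ in an eigenbasis of $P|_{U^{\perp}}$ and bounds each eigenvalue by $\lambda$, whereas you work column-by-column and invoke the operator norm directly; your observation that Assumption \ref{assump:positivity} is not needed here also matches the paper.
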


\begin{lemma}\label{lem:linear-ineq-2}
    For any $X\in \bbR^{N\times C}$, we have $d_{\mathcal{M}}(XW_{lh})\leq s_{lh}d_{\mathcal{M}}(X)$.
\end{lemma}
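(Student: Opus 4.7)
The plan is to exploit a clean structural identification of $\mathcal{M} = U\otimes \bbR^{C}$: it is precisely the set of matrices $Y\in \bbR^{N\times C}$ all of whose columns lie in $U$. Because the Frobenius inner product splits columnwise, $\langle A,B\rangle = \sum_{c=1}^{C}\langle A_{\cdot c}, B_{\cdot c}\rangle$, it follows immediately that the orthogonal complement $\mathcal{M}^{\perp}$ in $\bbR^{N\times C}$ is the set of matrices whose columns all lie in $U^{\perp}$. Under this identification, $d_{\mathcal{M}}(X)^{2} = \sum_{c} d_{U}(X_{\cdot c})^{2}$.

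Given any $X\in \bbR^{N\times C}$, I would write the orthogonal decomposition $X = Y + Z$ with $Y = \mathrm{proj}_{\mathcal{M}}(X) \in \mathcal{M}$ and $Z = X - Y \in \mathcal{M}^{\perp}$, so that $\|Z\|_{\mathrm{F}} = d_{\mathcal{M}}(X)$ by definition. The key observation is that right-multiplication by $W_{lh}$ takes linear combinations of columns: the $c$-th column of $XW_{lh}$ equals $\sum_{c'} (W_{lh})_{c'c}\, X_{\cdot c'}$. Since $U$ and $U^{\perp}$ are linear subspaces of $\bbR^{N}$, this operation preserves both the property ``all columns in $U$'' and the property ``all columns in $U^{\perp}$''. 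Therefore $YW_{lh}\in \mathcal{M}$ and $ZW_{lh}\in \mathcal{M}^{\perp}$, so
$$XW_{lh} = YW_{lh} + ZW_{lh}$$
is itself an orthogonal decomposition with respect to the splitting $\bbR^{N\times C} = \mathcal{M}\oplus \mathcal{M}^{\perp}$.

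From this, $d_{\mathcal{M}}(XW_{lh}) \leq \|ZW_{lh}\|_{\mathrm{F}}$ (in fact with equality). To close the argument I would invoke the standard submultiplicative bound $\|ZW_{lh}\|_{\mathrm{F}} \leq \|Z\|_{\mathrm{F}} \cdot \|W_{lh}\|_{\mathrm{op}} = s_{lh}\|Z\|_{\mathrm{F}}$ and substitute $\|Z\|_{\mathrm{F}} = d_{\mathcal{M}}(X)$.

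Unlike Lemma \ref{lem:linear-ineq}, there is really no substantive obstacle here: Assumption \ref{assump:positivity} and the spectral information about $P$ play no role, and the non-linearity $\sigma$ is absent. The entire content is the tensor-product structure $\mathcal{M} = U\otimes \bbR^{C}$, which lets right-multiplication (a pure column-mixing operation) commute with the orthogonal decomposition defining $d_{\mathcal{M}}$. The most delicate step is simply verifying the columnwise characterization of $\mathcal{M}^{\perp}$, which is a one-line consequence of the columnwise split of the Frobenius inner product.
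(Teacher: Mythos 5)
Your proof is correct and is essentially the paper's argument in basis-free form: the paper expands $X=\sum_m e_m\otimes w_m$ and notes $XW_{lh}=\sum_m e_m\otimes(W_{lh}^\top w_m)$, so the $m\le M$ and $m>M$ parts are exactly your $YW_{lh}$ and $ZW_{lh}$, and the termwise bound $\|W_{lh}^\top w_m\|\le s_{lh}\|w_m\|$ is your submultiplicativity step $\|ZW_{lh}\|_{\mathrm{F}}\le s_{lh}\|Z\|_{\mathrm{F}}$. No substantive difference.
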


\begin{lemma}\label{lem:non-linear-ineq}
    For any $X \in \bbR^{N\times C}$, we have $d_{\mathcal{M}}(\sigma(X))\leq d_{\mathcal{M}}(X)$.
\end{lemma}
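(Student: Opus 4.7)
I would reduce the theorem to the three lemmas listed in Appendix~A, and then combine them along the forward computation. Writing out $f_l(X) = \mathrm{MLP}_l(PX)$ as the composition $X \mapsto PX \mapsto \sigma(\cdot) \mapsto (\cdot)W_{l1} \mapsto \sigma(\cdot) \mapsto \cdots \mapsto (\cdot)W_{lH_l}$, I apply Lemma~\ref{lem:linear-ineq} once (pulling out the factor $\lambda$), and alternately apply Lemma~\ref{lem:non-linear-ineq} (which is free) and Lemma~\ref{lem:linear-ineq-2} (which contributes the factor $s_{lh}$), so that $d_\mathcal{M}(f_l(X)) \leq \lambda \prod_{h=1}^{H_l} s_{lh}\, d_\mathcal{M}(X) = s_l \lambda\, d_\mathcal{M}(X)$.

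\textbf{Lemmas 1 and 2 (the easy linear estimates).} For Lemma~\ref{lem:linear-ineq}, I would decompose each column of $X$ as its orthogonal projection onto $U$ plus its projection onto $U^\perp$. This corresponds to decomposing $X = X_0 + X_1$ with $X_0 \in \mathcal{M}$ and $X_1 \in \mathcal{M}^\perp = U^\perp \otimes \mathbb{R}^C$, and $d_\mathcal{M}(X)=\|X_1\|_\mathrm{F}$. By Assumption~\ref{assump:invariance} together with Proposition~\ref{prop:invariance-of-u-perp}, $P$ preserves both $U$ and $U^\perp$, so $PX_0 \in \mathcal{M}$ and $PX_1 \in \mathcal{M}^\perp$; column-wise, $\|PX_1\|_\mathrm{F} \leq \lambda \|X_1\|_\mathrm{F}$ since $\lambda$ is the operator norm of $P|_{U^\perp}$. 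For Lemma~\ref{lem:linear-ineq-2}, the observation is that right-multiplication by $W_{lh}$ preserves $\mathcal{M}$ and $\mathcal{M}^\perp$ (it acts on the channel index only), and on $\mathcal{M}^\perp$ it is bounded in Frobenius norm by $s_{lh}$.

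\textbf{Lemma 3 (the main obstacle).} This is where Assumption~\ref{assump:positivity} must enter and where the argument becomes substantive. I would fix a column $a \in \mathbb{R}^N$ of $X$ (the inequality splits across columns) and decompose $a = a^+ - a^-$ with $a^+, a^- \geq 0$. Using the orthonormal basis $(e_m)$ of $U$, I expand
\begin{equation*}
d_\mathcal{M}(X)^2 - d_\mathcal{M}(\sigma(X))^2
= \bigl(\|X\|_\mathrm{F}^2 - \|\sigma(X)\|_\mathrm{F}^2\bigr) - \sum_m \bigl(\|X^\top e_m\|^2 - \|\sigma(X)^\top e_m\|^2\bigr).
\end{equation*}
The first parenthesis column-wise equals $\|a^-\|^2$. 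For the second, writing $p_m := \langle a^+, e_m\rangle$ and $q_m := \langle a^-, e_m\rangle$, the $m$-th term becomes $(p_m - q_m)^2 - p_m^2 = q_m^2 - 2 p_m q_m$. Summing over $m$ gives $\|P_U a^-\|^2 - 2\langle a^+, P_U a^-\rangle$. The desired inequality thus reduces to
\begin{equation*}
\|a^-\|^2 \geq \|P_U a^-\|^2 - 2\langle a^+, P_U a^-\rangle.
\end{equation*}
Since $\|a^-\|^2 \geq \|P_U a^-\|^2$ trivially, it suffices to show $\langle a^+, P_U a^-\rangle \geq 0$. But $\langle a^+, P_U a^-\rangle = \sum_m p_m q_m$, and \emph{this is where the non-negativity of each $e_m$ is used}: since $a^+, a^-, e_m \geq 0$, each $p_m$ and $q_m$ is non-negative, so the sum is non-negative.

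\textbf{Expected difficulty.} The linear lemmas are essentially bookkeeping once the column-wise decomposition is set up. The real content is Lemma~\ref{lem:non-linear-ineq}: ReLU is globally $1$-Lipschitz, but $\sigma$ does not in general map $\mathcal{M}$ into itself, so one cannot simply compare $\sigma(X)$ with $\sigma(P_\mathcal{M} X)$. The trick is to work not with $P_\mathcal{M}\sigma(X)$ directly but with the signed decomposition $a = a^+ - a^-$ and exploit that the $e_m$'s and the pieces $a^\pm$ all live in the non-negative orthant, which forces $\langle a^+, P_U a^-\rangle \geq 0$ and makes the bookkeeping close. This is the step I would expect to have to iterate a few times to get right.
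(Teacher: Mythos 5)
Your proof of Lemma~\ref{lem:non-linear-ineq} is correct, and it takes a genuinely different route from the paper's. Both arguments start from the same column-wise identity $d_\mathcal{M}(X)^2=\sum_c\bigl(\|X_{\cdot c}\|^2-\sum_{m=1}^M\langle X_{\cdot c},e_m\rangle^2\bigr)$, but they diverge at the key per-column inequality. The paper sorts the coordinates of $x$ so that the negative entries come first, observes that the supports of the non-negative orthonormal vectors $e_m$ must be pairwise disjoint, and then controls $\sum_m\langle x,e_m\rangle^2-\langle x^+,e_m\rangle^2$ by a Cauchy--Schwarz estimate on each support together with a sign argument for the cross terms. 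You instead split $a=a^+-a^-$ and reduce everything to two one-line facts: $\|P_Ua^-\|\le\|a^-\|$ because $P_U$ is an orthogonal projection, and $\langle a^+,P_Ua^-\rangle=\sum_m\langle a^+,e_m\rangle\langle a^-,e_m\rangle\ge 0$ because $a^\pm\ge 0$ and $e_m\ge 0$. Your computation checks out: $\|a\|^2-\|a^+\|^2=\|a^-\|^2$ by disjointness of the supports of $a^+$ and $a^-$, the $m$-th projection term contributes $q_m^2-2p_mq_m$, and the resulting bound $\|P_Ua^-\|^2-2\langle a^+,P_Ua^-\rangle\le\|a^-\|^2$ follows immediately. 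What your version buys is economy: you never need the disjoint-support structure of the basis or Cauchy--Schwarz, only the contraction property of the projection and the non-negativity of the inner products, so the role of Assumption~\ref{assump:positivity} is isolated in a single sign check. The paper's version, by contrast, makes visible the combinatorial structure of non-negative orthonormal systems (disjoint supports), which is mildly illuminating but not needed. Your handling of Lemmas~\ref{lem:linear-ineq} and~\ref{lem:linear-ineq-2} and the final composition matches the paper's.
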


\begin{proof}[Proof of Lemma \ref{lem:linear-ineq}]
    Since $P$ is a symmetric linear operator on $U^\perp$, we can choose the orthonormal basis $(e_m)_{m=M+1, \ldots, N}$ of $U^{\perp}$ consisting of the eigenvalue of $P|_{U^\perp}$.
    Let $\lambda_m$ be the eigenvalue of $P$ to which $e_m$ is associated ($m = M+1, \ldots, N$).
    Note that since the operator norm of $P|_{U^\perp}$ is $\lambda$, we have $|\lambda_m| \leq \lambda$ for all $m=M+1, \ldots, N$.
    Since $(e_m)_{m\in [N]}$ forms the orthonormal basis of $\bbR^N$, we can uniquely write $X\in \bbR^{N\times C}$ as $X = \sum_{m=1}^N e_m \otimes w_m$ for some $w_m \in \bbR^C$.
    Then, we have $d_{\mathcal{M}}^2(X) = \sum_{m=M+1}^N \|w_m\|^2$ where $\|\cdot\|$ is the 2-norm of a vector.
    On the other hand, we have
    \begin{align*}
        PX  &= \sum_{m=1}^N Pe_m\otimes w_m \\
            &= \sum_{m=1}^M Pe_m\otimes w_m + \sum_{m=M+1}^N Pe_m\otimes w_m \\
            &= \sum_{m=1}^M Pe_m\otimes w_m + \sum_{m=M+1}^N e_m\otimes (\lambda_m w_m)
    \end{align*}
    Since $U$ is invariant under $P$, for any $m\in [M]$, we can write $Pe_m$ as a linear combination of $e_n (n\in [M])$.
    Therefore, we have
    $d_{\mathcal{M}}^2(PX) =  \sum_{m=M+1}^N \|\lambda_m w_m\|^2$.
    Then, we obtain the desired inequality as follows:
    \begin{align*}
        d_{\mathcal{M}}^2(PX) &= \sum_{m=M+1}^N \|\lambda_m w_m\|^2\\
                     &\leq \lambda^2 \sum_{m=M+1}^N \|w_m\|^2\\
                     &\leq \lambda^2 \sum_{m=M+1}^N \|w_m\|^2 \\
                     &= \lambda^2 d_{\mathcal{M}}^2 (X).
    \end{align*}
\end{proof}

\begin{proof}[Proof of Lemma \ref{lem:linear-ineq-2}]
    Using the same decomposition of $X$ as the proof in Lemma \ref{lem:linear-ineq}, we have
    \begin{align*}
        XW_{lh} &= \sum_{m=1}^N e_m\otimes (W_{lh}^\top w_m) \\
                &= \sum_{m=1}^M e_m\otimes (W_{lh}^\top w_m) + \sum_{m=M+1}^N e_m\otimes (W_{lh}^\top w_m).
    \end{align*}
    Therefore, we have 
    \begin{align*}
        d_{\mathcal{M}}^2(XW_{lh})
            &= \sum_{m=M+1}^N \|W_{lh}^\top w_m\|^2 \\
            &\leq s_{lh}^2 \sum_{m=M+1}^N \| w_m\|^2 \\
            &= s_{lh}^2 d_{\mathcal{M}}^2(X).
    \end{align*}
\end{proof}

\begin{proof}[Proof of Lemma \ref{lem:non-linear-ineq}]
    We choose $(e_m)_{m=N-M+1, \ldots, N}$ as in the proof of Lemma \ref{lem:linear-ineq}.
    We denote $X = (X_{nc})_{n\in [N], c\in [C]}$ and $e_n = (e_{mn})_{m\in [N]}$, respectively.
    Let $(e'_c)_{c\in [C]}$ be the standard basis of $\bbR^C$.
    Then, $(e_n \otimes e'_c)_{n\in [N], c\in [C]}$ is the orthonormal basis of $\bbR^{N\times C}$, endowed with the standard inner product as a Euclid space.
    Therefore, we can decompose $X$ as $X = \sum_{n=1}^N\sum_{c=1}^C a_{nc} e_n \otimes e'_c$ where $a_{nc} = \langle X, e_n\otimes e'_c\rangle = \sum_{m=1}^N X_{mc} e_{mn}$.
    Then, we have $d_{\mathcal{M}}^2(X)=\sum_{n=M+1}^N \|\sum_{c=1}^C a_{nc} e'_c\|^2$, which is further transformed as
    \begin{align*}
        d_{\mathcal{M}}^2(X) &= \sum_{n=M+1}^N \left\|\sum_{c=1}^C a_{nc} e'_c\right\|^2\\
                 &= \sum_{n=M+1}^N \sum_{c=1}^C a_{nc}^2 \\
                 &= \sum_{c=1}^C \left( \sum_{n=1}^N a_{nc}^2 - \sum_{n=1}^M a_{nc}^2 \right)\\
                 &= \sum_{c=1}^C \left(\|X_{\cdot c}\|^2 - \sum_{n=1}^M \langle X_{\cdot c}, e_n\rangle^2 \right),
    \end{align*}
    where $X_{\cdot c}$ is the $c$-th column vector of $X$.
    Similarly, we have 
    \begin{align*}
        d^2_{\mathcal{M}}(\sigma(X)) = \sum_{c=1}^C \left( \|X^+_{\cdot c}\|^2 - \sum_{n=1}^M \langle X^+_{\cdot c}, e_n \rangle^2\right),
    \end{align*}
    where we denote $\sigma(X)=(X_{nc}^+)_{n\in [N], c\in[C]}$ in shorthand.
    Therefore, the inequality follow from the following lemma.
\end{proof}

\begin{lemma}
    Let $x\in \bbR^N$ and $v_1, \ldots, v_M \in \bbR^N$ be orthonormal vectors (i.e., $\langle v_m, v_n\rangle = \delta_{mn}$) satisfying $v_m \geq 0$ for all $m\in [M]$.
    Then, we have $\|x\|^2 - \sum_{m=1}^M \langle x, v_m\rangle^2 \geq \|x^+\|^2 - \sum_{m=1}^M \langle x^+, v_m\rangle^2$ where $x^+ := \max(x, 0)$ for $x\in \bbR$.
\end{lemma}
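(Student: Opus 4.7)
The plan is to reduce everything to standard facts about the positive and negative parts of $x$, together with Bessel's inequality. Write $x = x^+ - x^-$ where $x^-:=\max(-x,0)\geq 0$. Since $x^+$ and $x^-$ have disjoint support, $\|x\|^2=\|x^+\|^2+\|x^-\|^2$, so the left-hand side minus the right-hand side of the desired inequality equals
\begin{equation*}
\|x^-\|^2 - \sum_{m=1}^M\bigl(\langle x,v_m\rangle^2-\langle x^+,v_m\rangle^2\bigr).
\end{equation*}
So it suffices to show $\sum_{m=1}^M\bigl(\langle x,v_m\rangle^2-\langle x^+,v_m\rangle^2\bigr)\leq \|x^-\|^2$.

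Next I would expand each term using $\langle x,v_m\rangle=\langle x^+,v_m\rangle-\langle x^-,v_m\rangle$, which gives
\begin{equation*}
\langle x,v_m\rangle^2-\langle x^+,v_m\rangle^2 = \langle x^-,v_m\rangle^2 - 2\,\langle x^+,v_m\rangle\langle x^-,v_m\rangle.
\end{equation*}
Here the key structural input appears: because $v_m\geq 0$, $x^+\geq 0$, and $x^-\geq 0$, both inner products $\langle x^+,v_m\rangle$ and $\langle x^-,v_m\rangle$ are non-negative, so the cross term is non-positive. Hence each summand is bounded above by $\langle x^-,v_m\rangle^2$, and summing gives
\begin{equation*}
\sum_{m=1}^M\bigl(\langle x,v_m\rangle^2-\langle x^+,v_m\rangle^2\bigr) \;\leq\; \sum_{m=1}^M \langle x^-,v_m\rangle^2.
\end{equation*}

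Finally I would invoke Bessel's inequality: since $v_1,\ldots,v_M$ are orthonormal in $\bbR^N$, $\sum_{m=1}^M\langle x^-,v_m\rangle^2\leq \|x^-\|^2$, which closes the chain and yields the claim. No step is really an obstacle; the only subtle point is identifying where the non-negativity hypothesis $v_m\geq 0$ is used, namely to force the cross term $\langle x^+,v_m\rangle\langle x^-,v_m\rangle$ to be non-negative so that dropping it only increases the bound. This is precisely the ingredient that prevents a sign-indefinite eigenvector from being used (consistent with Case~2 of Section~\ref{sec:experiment-visualization}, where the non-negativity assumption fails and the conclusion breaks down).
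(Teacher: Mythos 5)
Your proof is correct. It shares the same skeleton as the paper's argument---decompose $x=x^+-x^-$, expand $\langle x,v_m\rangle^2-\langle x^+,v_m\rangle^2$, use non-negativity of $v_m$, $x^+$, $x^-$ to discard the cross term, and then bound $\sum_{m}\langle x^-,v_m\rangle^2$ by $\|x^-\|^2=\|x\|^2-\|x^+\|^2$---but the last step is handled differently, and more cleanly. The paper first observes that non-negative orthonormal vectors must have pairwise disjoint supports, and then applies Cauchy--Schwarz on each support to get $\sum_m\langle x^-,v_m\rangle^2\leq\|x^-\|^2$; you simply invoke Bessel's inequality, which holds for \emph{any} orthonormal system and makes the support-disjointness observation unnecessary. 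This sharpens the bookkeeping of where the hypothesis $v_m\geq 0$ is actually used: only to force $\langle x^+,v_m\rangle\langle x^-,v_m\rangle\geq 0$, exactly as you say. (Incidentally, your expansion also makes it easy to spot that the paper's displayed identity for $\sum_m\bigl(\langle x,v_m\rangle^2-\langle x^+,v_m\rangle^2\bigr)$ carries a sign slip on the cross term---it should be $+2\sum_{n\le L}\sum_{l>L}x_nx_lv_{nm}v_{lm}$, which is $\leq 0$ and is then correctly dropped.)
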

\begin{proof}
    The value $\|y\|^2 - \sum_{m=1}^M \langle y, u_m \rangle^2$ is invariant under simultaneous coordinate permutation of $y$ and $u_m$'s.
    Therefore, we can assume without loss of generality that the coordinate of $x$ are sorted: $x_1 \leq \ldots \leq x_L < 0 \leq x_{L+1} \leq \cdots \leq x_N$ for some $L \leq N$.
    Then, we have
    \begin{align}
        \|x\|^2 - \|x^+\|^2 = \sum_{n=1}^L x_n^2. \label{eq:relu-monotonicity-1}
    \end{align}
    When $L=0$, the sum in the right hand side is treated as $0$.
    On the other hand, writing as $v_m = (v_{nm})_{n\in [N]}$, direct calculation shows
    \begin{align}
        \sum_{m=1}^M \langle x, v_m\rangle^2 - \langle x^+, v_m\rangle^2 &= \sum_{m=1}^M \left( \left(\sum_{n=1}^L x_n v_{nm}  \right)^2 - 2 \sum_{n=1}^L \sum_{l=L+1}^N x_n x_l v_{nm} v_{lm}\right). \label{eq:relu-monotonicity-2}
    \end{align}
    Let $I_m := \{n\in [N]\mid v_{nm} > 0 \}$ be the support of $v_m$ for $m\in [M]$.
    We note that if $m\not= m' \in [M]$, we have $I_m \cap I_{m'} = \emptyset$ since if there existed $n\in I_m \cap I_{m'}$, we have
    \begin{align*}
        0 = \langle v_m, v_{m'} \rangle \geq v_{nm} v_{nm'} > 0,
    \end{align*}
    which is contradictory.
    Therefore,
    \begin{align}
        \sum_{m=1}^N \left(\sum_{n=1}^L x_n v_{nm} \right)^2
        &= \sum_{m=1}^N \left( \sum_{n\in I_m \cap [L]} x_n v_{nm} \right)^2 \nonumber\\ 
        &\leq \sum_{m=1}^N \left( \sum_{n\in I_m \cap [L]} x_n^2 \right) \left(\sum_{n\in I_m \cap [L]} v_{nm}^2 \right)  \quad \text{($\because$ Cauchy–Schwarz inequality)} \nonumber \\
        &\leq \sum_{m=1}^N \left( \sum_{n\in I_m \cap [L]} x_n^2 \right) \quad \text{($\because$ $\|v_m\|^2 = 1$)} \nonumber \\
        &\leq \sum_{n=1}^L x_n^2. \label{eq:relu-monotonicity-3}
    \end{align}
    We used the fact that $I_m$'s are disjoint and $v_{nm} = 0$ if $n\not \in \cup_m I_m$ in the first equality above.
    Further, we have $x_nx_lv_{nm}v_{lm} \leq 0$ for $1\leq n \leq L$ and $L+1 \leq l \leq N$ by the definition of $L$ and non-negativity of $v_m$.
    By combining (\ref{eq:relu-monotonicity-1}), (\ref{eq:relu-monotonicity-2}), and (\ref{eq:relu-monotonicity-3}), we have
    \begin{align*}
         \sum_{m=1}^M \langle x, v_m\rangle^2 - \langle x^+, v_m\rangle^2  \leq \sum_{n=1}^L x_n^2 = \|x\|^2 - \|x^+\|^2.
    \end{align*}
\end{proof}

\begin{proof}[Proof of Theorem \ref{thm:convergence-to-invariant-subspace}]
    By Lemma \ref{lem:linear-ineq}, \ref{lem:linear-ineq-2}, and \ref{lem:non-linear-ineq}, we have
    \begin{align*}
        d_\mathcal{M}(f_l(X))
            &= d_\mathcal{M}( \underbrace{\sigma(\cdots \sigma(\sigma(}_{H \ \text{times}}PX)W_{l1})W_{l2} \cdots W_{lH_l}))\\
            &\leq d_\mathcal{M}(\underbrace{\sigma(\cdots \sigma(\sigma(}_{H-1 \ \text{times}}PX)W_{l1})W_{l2} \cdots) W_{lH_l}) \\
            &\leq s_{lH_{l}-1}d_\mathcal{M}(\underbrace{\sigma(\cdots \sigma(\sigma(}_{H-1 \ \text{times}}PX)W_{l1})W_{l2} \cdots) W_{l{H_l-1}}))\\
            & \cdots\\
            &\leq \left(\prod_{h=1}^{H_l} s_{lh}\right) d_\mathcal{M}(PX)\\
            &\leq s_l d_\mathcal{M}(PX)\\
            &\leq s_l\lambda d_\mathcal{M}(X).
    \end{align*}
\end{proof}
\section{Proof of Proposition \ref{prop:spectra-of-gcn}}\label{sec:proof-of-prop-spectra-of-gcn}

\begin{proof}
    Let $\tilde{\mu}_1 \leq \cdots \leq \tilde{\mu}_N$ be the eigenvalue of the augmented normalized Laplacian $\tilde{\Delta}$, sorted in ascending order.
    Since $P = I_N - \tilde{\Delta}$, it is enough to show $\tilde{\mu}_1 = \cdots = \tilde{\mu}_M = 0$, $\tilde{\mu}_{M+1} > 0$, and $\tilde{\mu}_{N} < 2$.
    For the first two, the statements are equivalent to that $\tilde{\Delta}$ is positive semi-definite and that the multiplicity of the eigenvalue $0$ is same as the number of connected components \footnote{The former statement is identical to Lemma 1 and latter one is the extension of Lemma 2 of \citet{wu2019simplifying}.}.
    This is well-known for Laplacian or its normalized version (see, e.g., \citet{chung1997spectral}) and the proof for $\tilde{\Delta}$ is similar.
    By direct calculation, we have
    \begin{align*}
        x^\top \tilde{\Delta} x = \frac{1}{2}\sum_{i, j=1}^N  a_{ij} \left(\frac{x_i}{\sqrt{d_i + 1}} - \frac{x_j}{\sqrt{d_j + 1}}\right)^2
    \end{align*}
    for any $x = \begin{bmatrix} x_1 & \cdots & x_N \end{bmatrix}^\top \in \bbR^{N}$.
    Therefore, $\tilde{\Delta}$ is positive semi-definite and hence $\tilde{\mu}_1 \geq 0$.
    
    Suppose temporally that $G$ is connected.
    If $x\in \bbR^N$ is an eigenvector associated to $0$, then, by the aformentioned calculation, $\frac{x_i}{\sqrt{d_i + 1}}$ and $\frac{x_j}{\sqrt{d_j + 1}}$ must be same for all pairs $(i, j)$ such that $a_{ij} > 0$.
    However, since $G$ is connected, $\frac{x_i}{\sqrt{d_i + 1}}$ must be same value for all $i\in [N]$.
    That means the multiplicity of the eigenvalue $0$ is 1 and any eigenvector associated to $0$ must be proportional to $\tilde{D}^{\frac{1}{2}}\bm{1}$.
    Now, suppose $G$ has $M$ connected components $V_1, \ldots, V_M$.
    Let $\tilde{\Delta}_m$ be the augmented normalized Laplacians corresponding to each connected component $V_m$ for $m\in [M]$.
    By the aformentioned discussion, $\tilde{\Delta}_m$ has the eigenvalue $0$ with multiplicity $1$.
    Since $\tilde{\Delta}$ is the direct sum of $\tilde{\Delta}_m's$, the eigenvalue of $\tilde{\Delta}$ is the union of those for $\tilde{\Delta}_m$'s. Therefore, $\tilde{\Delta}$ has the eigenvalue $0$ with multiplicity $M$ and $e_m = \tilde{D}^{\frac{1}{2}}\bm{1}_m$'s are the orthogonal basis of the eigenspace.

    Finally, we prove $\tilde{\mu}_{N} < 2$.
    Let $\mu_N$ be the largest eigenvalue of the normalized Laplacian $\Delta = D^{-\frac{1}{2}}(D - A) D^{-\frac{1}{2}}$, where
    $D^{-\frac{1}{2}}\in \bbR^{N\times N}$ is the diagonal matrix defined by
    \begin{align*}
        D^{-\frac{1}{2}}_{ii} =
        \begin{cases}
            \mathrm{deg}(i)^{-\frac{1}{2}} & \text{(if $\mathrm{deg}(i) \not = 0$)}\\
            0 & \text{(if $\mathrm{deg}(i) = 0$)}
        \end{cases}.
    \end{align*}
    Note that $D^{-\frac{1}{2}} D^{\frac{1}{2}}$ nor $D^{\frac{1}{2}}D^{-\frac{1}{2}}$ are not equal to the identity matrix $I_N$ in general.
    However, we have
    \begin{align}\label{eq:laplacian-identity}
        L = D^{\frac{1}{2}} D^{-\frac{1}{2}} L D^{-\frac{1}{2}} D^{\frac{1}{2}}
    \end{align}
    where $L = D-A$ is the (unnormalized) Laplacian.
    Therefore, we have
    \begin{align*}
        \tilde{\mu}_N
        &= \max_{x\not = 0} \frac{x^\top \tilde{\Delta}x}{\|x\|}\\
        &= \max_{x\not = 0} \frac{x^\top \tilde{D}^{-\frac{1}{2}}L\tilde{D}^{-\frac{1}{2}}x }{\|x\|} \\
        &= \max_{x\not = 0} \frac{x^\top \tilde{D}^{-\frac{1}{2}}D^{\frac{1}{2}}D^{-\frac{1}{2}}LD^{-\frac{1}{2}}D^{\frac{1}{2}}\tilde{D}^{-\frac{1}{2}}x }{\|x\|} \quad \text{($\because$ (\ref{eq:laplacian-identity}))}\\
        &= \max_{x\not = 0} \frac{(D^{\frac{1}{2}}\tilde{D}^{-\frac{1}{2}}x)^\top \Delta (D^{\frac{1}{2}}\tilde{D}^{-\frac{1}{2}}x)}{\|x\|}\\
        &= \max_{\substack{x\not = 0\\ D^{\frac{1}{2}}\tilde{D}^{-\frac{1}{2}}x \not = 0}} \frac{(D^{\frac{1}{2}}\tilde{D}^{-\frac{1}{2}}x)^\top \Delta (D^{\frac{1}{2}}\tilde{D}^{-\frac{1}{2}}x)}{\|x\|} \\
        &= \max_{\substack{x\not = 0\\ D^{\frac{1}{2}}\tilde{D}^{-\frac{1}{2}}x \not = 0}} \frac{(D^{\frac{1}{2}}\tilde{D}^{-\frac{1}{2}}x)^\top \Delta (D^{\frac{1}{2}}\tilde{D}^{-\frac{1}{2}}x)}{\|D^{\frac{1}{2}}\tilde{D}^{-\frac{1}{2}}x\|} \frac{\|D^{\frac{1}{2}}\tilde{D}^{-\frac{1}{2}}x\|}{\|x\|} \\
        &\leq \max_{\substack{x\not = 0\\ D^{\frac{1}{2}}\tilde{D}^{-\frac{1}{2}}x \not = 0}} \frac{(D^{\frac{1}{2}}\tilde{D}^{-\frac{1}{2}}x)^\top \Delta (D^{\frac{1}{2}}\tilde{D}^{-\frac{1}{2}}x)}{\|D^{\frac{1}{2}}\tilde{D}^{-\frac{1}{2}}x\|} \max_{\substack{x\not = 0\\ D^{\frac{1}{2}}\tilde{D}^{-\frac{1}{2}}x \not = 0}} \frac{\|D^{\frac{1}{2}}\tilde{D}^{-\frac{1}{2}}x\|}{\|x\|} \\
        &\leq \max_{y\not = 0}\frac{y^\top \Delta y}{\|y\|} \max_{x\not =0} \frac{\|D^{\frac{1}{2}}\tilde{D}^{-\frac{1}{2}}x\|}{\|x\|} \\
        &= \mu_N \max_{n\in [N]} \left(\frac{\mathrm{deg}(i)}{\mathrm{deg}(i) + 1} \right)^{\frac{1}{2}}\\
        &\leq \mu_N.
    \end{align*}
    Therefore, we have $\tilde{\mu}_N \leq \mu_N$\footnote{Theorem 1 of \citet{wu2019simplifying} showed that this inequality strictly holds when $G$ is simple and connected. We do not require this assumption.}.
    Since $\max_{i\in [N]} \left(\frac{\mathrm{deg}(i)}{\mathrm{deg}(i) + 1} \right)^{\frac{1}{2}} < 1$, the equality $\tilde{\mu}_N = \mu_N$ holds if and only if $\mu_N = 0$, that is, $G$ has $N$ connected components.
    On the other hand, it is known that $\mu_N \leq 2$ and the equality holds if and only if $G$ has non-trivial bipartite graph as a connected component (see, e.g., \citet{chung1997spectral}).
    Therefore, $\tilde{\mu}_N = \mu_N$ and $\mu_N = 2$ does not hold simultaneously and we obtain $\mu_N < 2$.
\end{proof}

\section{Counterexample of Previous Study on Over-smoothing for Non-linear GNNs}\label{sec:luan-counter-example}

We restate Theorem 1 of the preprint (version2) of~\citet{NIPS2019_9276}\footnote{\url{https://arxiv.org/abs/1906.02174v2}}.
Let $G$ be a simple undirected graph with $N$ nodes and $k$ connected components such that it does not have a bipartite component. Let $L = \tilde{D}^{-1/2}\tilde{A}\tilde{D}^{-1/2} \in \mathbb{R}^{N\times N}$ be the augmented normalized Laplacian of $G$.
Let $F\in \mathbb{N}_+$ and $W_n\in \mathbb{R}^{F\times F}$ be the weight of the $n$-th layer for $n\in \mathbb{N}_+$. 
For the input $X\in \mathbb{R}^{N\times F}$, we define the output $Y_n\in \mathbb{R}^{N\times F}$ of the $n$-th layer of a GCN by $Y_n=\sigma(L\cdots \sigma(LXW_0)\cdots W_n)$ where $\sigma$ is the ReLU function.
We assume the input $X$ is drawn from a continuous distribution on $\mathbb{R}^{N\times F}$.
Then, the theorem claims that we have $\lim_{n\to \infty} \mathrm{rank}(Y_n) = k$ almost surely with respect to the distribution of $X$.

We construct a conterexample.
Consider a graph $G$ consisting of $N=4$ nodes whose adjacency matrix is
\begin{align*}
    A = \begin{bmatrix}
        1 & 1 & 1 & 1 \\
        1 & 1 & 1 & 0 \\
        1 & 1 & 1 & 0 \\
        1 & 0 & 0 & 1
    \end{bmatrix}.
\end{align*}
Note that $G$ is connected (i.e., $k=1$) and is not bipartite.
We make a GCN with $F=3$ channels and whose weight matrices are $W_n = I_3$ (the identity matrix of size $3$) for all $n \in \mathbb{N}$.
For the distribution of the input $X$, we consider an absolutely continuous distribution with respect to the Lebesgue measure on $\mathbb{R}^{4\times 3}$ such that $P(X\geq 0) > 0$
(here, $X\geq 0$ means the element-wise comparison).
For example, the standard Gaussian distribution satisfies the condition.

Since $L\geq 0$, we have $Y_n = L^nX$ if $X\geq 0$.
Let $L=P^\top\Lambda P$ be the diagonalization of $L$ where $P\in O(4)$ is an orthogonal matrix of size $4$.
Since $\mathrm{rank}(L) = 3$, we have $\mathrm{rank}(\Lambda^n) = 3$ for any $n$ (we can assume that $\Lambda_{44} = 0$ without loss of generality).
Therefore, under the condition $X\geq 0$, we have
\begin{align*}
    \mathrm{rank}(Y_n) = 3
    &\iff \mathrm{rank}(P^\top \Lambda^n PX)=3 \\
    & \iff X \in \{P^{-1}\begin{bmatrix}B & v\end{bmatrix}^\top \mid B\in \mathbb{R}^{3\times 3} \text{ is invertible}, v\in \mathbb{R}^3\}.
\end{align*}
Note that the last condition is independent of $n$.
Since the set of invertible matrices is dense in the set of all matrices of the same size (with respect to the standard topology of the Euclidean space), we have $P(\{\text{$\mathrm{rank}(Y_n) = 3$ for all $n\in \mathbb{N}$}\}) > 0$.
Therefore, we have $\lim_{n\to\infty} \mathrm{rank}(Y_n) = 3$ with a non-zero probability.
\qed
\section{Proof of Theorem \ref{thm:gcn-on-erdos-renyi-graph}}\label{sec:spectra-of-augmented-normalized-laplacian}

We follow the proof of Theorem 2 of~\citet{chung2011spectra}.
The idea is to relate the spectral distribution of the normalized Laplacian with that of its expected version.
Since we can compute the latter one explicitly for the Erd\H{o}s-R\'{e}nyi graph, we can derive the convergence of spectra.
We employ this technique and derive similar conclusion for the augmented normalized Laplacian.

First, we consider genral random graphs not restricted to Erd\H{o}s-R\'{e}nyi graphs.
Let $N\in \bbN_+$, and $P=(p_{ij})_{i, j\in [N]}$ be a non-negative symmetric matrix (meaning that $p_{ij} \geq 0$ for any $i, j\in [N]$).
Let $G$ be an undirected random graph with $N$ nodes such that an edge between $i$ and $j$ is independently present with probability $p_{ij}$.
Let $A$ and $D$ be the adjacency and the degree matrices of $G$, respectively (that is, $A_{ij} \sim \mathrm{Ber}(p_{ij})$, $\mathrm{i.i.d.}$).
Define the expected node degree of node $i$ by $t_i := \sum_{j=1}^N p_{ij}$.
Let $\tilde{A} := A + I_N$, $\tilde{D} := D + I_N$ and define $\bar{A} := \bbE[\tilde{A}] = P + I_N$ and $\bar{D} := \bbE[\tilde{D}] = \mathrm{diag}(t_1, \ldots, t_N) + I_N$ correspondingly.
We define the augmented normalized Laplacian $\tilde{\Delta}$ of $G$ by $\tilde{\Delta}:= I_N - \tilde{D}^{-\frac{1}{2}}\tilde{A}\tilde{D}^{-\frac{1}{2}}$ and its expected version by $\bar{\Delta}:= I_N - \bar{D}^{-\frac{1}{2}}\bar{A}\bar{D}^{-\frac{1}{2}}$ \footnote{Note that $\bbE[\tilde{\Delta}] \not = \bar{\Delta}$ in general due to the dependence between $\tilde{A}$ and $\tilde{D}$.}.
For a symmetric matrix $X\in \bbR^N$, we define its eigenvalues, sorted in ascending order by $\lambda_1(X) \leq \cdots \leq \lambda_N(X)$ and its operator norm by $\|X\| = \max_{n\in [N]} |\lambda_n(X)|$.

\begin{lemma}[Ref. \citet{chung2011spectra} Theorem 2]\label{lem:spectra-of-augmented-normalized-laplacian-diff}
    Let $\delta := \min_{n\in [N]} t_n$ be the minimum expected degree of $G$.
    Set $k(\varepsilon) := 3 (1 + \log (4 / \varepsilon))$.
    Then, for any $\varepsilon > 0$, if $\delta + 1 > k(\varepsilon)\log N$, we have
    \begin{align*}
        \max_{n\in [N]} \left|\lambda_n(\tilde{\Delta}) -\lambda_n(\bar{\Delta}) \right| \leq 4 \sqrt{\frac{3\log (4N/\varepsilon)}{\delta + 1}}
    \end{align*}
    with probability at least $1 - \varepsilon$.
\end{lemma}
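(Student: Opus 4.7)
The plan is to follow the approach of \citet{chung2011spectra}, reducing the eigenvalue comparison to a spectral norm bound on the perturbation $\tilde{\Delta} - \bar{\Delta}$. By Weyl's perturbation inequality for symmetric matrices, $\max_n |\lambda_n(\tilde{\Delta}) - \lambda_n(\bar{\Delta})| \leq \|\tilde{\Delta} - \bar{\Delta}\|$, so it suffices to show that the right-hand side is at most $4\sqrt{3\log(4N/\varepsilon)/(\delta+1)}$ with probability $\geq 1-\varepsilon$. I would then decompose
\begin{align*}
\tilde{\Delta} - \bar{\Delta}
= \bar{D}^{-\frac{1}{2}}(\bar{A}-\tilde{A})\bar{D}^{-\frac{1}{2}}
+ \left(\bar{D}^{-\frac{1}{2}}\tilde{A}\bar{D}^{-\frac{1}{2}} - \tilde{D}^{-\frac{1}{2}}\tilde{A}\tilde{D}^{-\frac{1}{2}}\right)
\end{align*}
by adding and subtracting $\bar{D}^{-1/2}\tilde{A}\bar{D}^{-1/2}$, and bound each summand separately.

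For the first term, I would write $\bar{A}-\tilde{A} = \sum_{i\leq j} Z_{ij}$, where each $Z_{ij}$ is a symmetric rank-$(\leq 2)$ matrix supported on the entries $(i,j), (j,i)$ with $\mathbb{E}[Z_{ij}]=0$ and $\|Z_{ij}\|\leq 1$. Conjugating by $\bar{D}^{-1/2}$ preserves independence, and each conjugated summand has operator norm at most $(\bar{d}_i \bar{d}_j)^{-1/2}\leq 1/(\delta+1)$ and variance proxy controlled by $\sum_j p_{ij}/((\bar{d}_i+1)(\bar{d}_j+1)) \leq 1/(\delta+1)$ in the appropriate sense. Applying the matrix Bernstein inequality (e.g.\ Tropp's version) with these parameters yields a bound of order $\sqrt{\log(N/\varepsilon)/(\delta+1)}$ with failure probability at most $\varepsilon/2$; the hypothesis $\delta+1 > k(\varepsilon)\log N$ is exactly what is needed so that the subgaussian tail dominates the large-deviation tail.

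For the second term, I would factor it as $\bar{D}^{-1/2}\tilde{A}(\bar{D}^{-1/2}-\tilde{D}^{-1/2}) + (\bar{D}^{-1/2}-\tilde{D}^{-1/2})\tilde{A}\tilde{D}^{-1/2}$ and use the fact that $\|\tilde{D}^{-1/2}\tilde{A}\tilde{D}^{-1/2}\|\leq 1$ (this is $\|I-\tilde{\Delta}\|\leq 1$, using $\tilde{\mu}_n\in[0,2)$ from Proposition \ref{prop:spectra-of-gcn}) together with the analogous bound for the mixed product. It then suffices to control $\|\tilde{D}^{1/2}\bar{D}^{-1/2}-I\|$, which reduces to showing that $|\tilde{d}_i/\bar{d}_i - 1|\leq c\sqrt{\log(N/\varepsilon)/(\delta+1)}$ uniformly in $i$. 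This follows from a scalar Chernoff bound on $d_i = \sum_{j\neq i}A_{ij}$, a sum of independent Bernoulli variables with mean $t_i\geq \delta$, followed by a union bound over $i\in[N]$, contributing the remaining $\varepsilon/2$ of the failure probability. A first-order Taylor expansion of $x\mapsto x^{-1/2}$ around $\bar{d}_i+1$ then transfers this to a bound on $\|\bar{D}^{-1/2}-\tilde{D}^{-1/2}\|$ of the same order.

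Combining the two bounds via the triangle inequality and tuning constants to absorb both contributions into the single factor $4$ produces the desired estimate. The main obstacle I anticipate is the bookkeeping of constants so that the final prefactor is exactly $4\sqrt{3}$: both the matrix Bernstein step and the scalar Chernoff step individually produce constants that must be balanced under the regime $\delta+1>k(\varepsilon)\log N$, and some care is required because $\tilde{D}$ and $\tilde{A}$ are not independent, so the second-term factorization must be handled without trying to take expectations prematurely. The added diagonals $+I_N$ in both $\tilde{A}$ and $\tilde{D}$ conveniently remove the singularity when $\bar{d}_i=0$ and simplify the constants relative to the normalized-Laplacian case in~\citet{chung2011spectra}.
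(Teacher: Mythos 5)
Your proposal is correct and follows essentially the same route as the paper's proof: after Weyl's inequality, your splitting of $\tilde{\Delta}-\bar{\Delta}$ by adding and subtracting $\bar{D}^{-1/2}\tilde{A}\bar{D}^{-1/2}$ is exactly the paper's intermediate matrix $C$, the first piece is handled by the same matrix Bernstein/Chung--Radcliffe bound with the same norm and variance parameters, and the second piece is bounded deterministically on the degree-concentration event via $\|I_N-\tilde{\Delta}\|\leq 1$ and a scalar Chernoff plus union bound, just as in the paper. The only cosmetic difference is that the paper bounds $|\sqrt{(d_i+1)/(t_i+1)}-1|$ directly via $|\sqrt{x}-1|\leq|x-1|$ rather than a Taylor expansion, and then collects $a+a(a+1)+a\leq 4a$ using $a<1$.
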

\begin{proof}
    By Weyl's theorem, we have $\max_{n\in [N]} \left|\lambda_n(\tilde{\Delta}) -\lambda_n(\bar{\Delta}) \right| \leq \|\tilde{\Delta} - \bar{\Delta}\|$.
    Therefore, it is enough to bound $\|\tilde{\Delta} - \bar{\Delta}\|$.
    Let $C := I_N - \bar{D}^{-\frac{1}{2}} \tilde{A} \bar{D}$.
    By the triangular inequality, we have $\|\tilde{\Delta} - \bar{\Delta}\| \leq \|\tilde{\Delta} - C \| + \| C - \bar{\Delta}\|$.
    We will bound these terms respectively.
    
    First, we bound $\|C - \bar{\Delta} \|$.
    Direct calculation shows $C - \bar{\Delta} = - \bar{D}^{-\frac{1}{2}} (A - P) \bar{D}^{-\frac{1}{2}}$.
    Let $E^{ij} \in \bbR^{N\times N}$ be a matrix defined by
    \begin{align*}
        (E^{ij})_{kl} =
        \begin{cases}
            1 & \text{if ($i = k$ and $i = l$) or ($i = l$ and $j = k$),}\\
            0 & \text{otherwise}.
        \end{cases}
    \end{align*}
    We define the random variable $Y_{ij}$ by
    \begin{align*}
        Y_{ij} := \frac{A_{ij} - p_{ij}}{\sqrt{t_i + 1} \sqrt{t_j + 1}} E^{ij}.
    \end{align*}
    Then, $Y_{ij}$'s are independent and we have $C - \bar{\Delta} = \sum_{i,j=1}^N Y_{ij}$.
    To apply Theorem 5 of \citet{chung2011spectra} to $Y_{ij}$'s, we bound $\|Y_{ij} - \bbE[Y_{ij}]\|$ and $\|\sum_{i, j=1}^N \bbE[Y^2_{ij}]\|$. First, we have
    \begin{align*}
        \|Y_{ij} - \bbE[Y_{ij}] \| = \|Y_{ij}\| \leq \frac{\|E^{ij}\|}{\sqrt{t_i + 1} \sqrt{t_j + 1}} \leq (\delta + 1)^{-1}.
    \end{align*}
    Since
    \begin{align*}
        \bbE[Y_{ij}^2] = \frac{p_{ij} - p_{ij}^2}{(t_i + 1)(t_j + 1)}
        \begin{cases}
            E^{ii} + E^{jj} & \text{(if $i \not = j$),} \\
            E^{ii} & \text{(if $i = j$),}
        \end{cases}
    \end{align*}
    we have
    \begin{align*}
        \left\|\sum_{i, j=1}^N \bbE[Y^2_{ij}]\right\|
        &= \left\|\sum_{i, j=1}^N \frac{p_{ij}-p_{ij}^2}{(t_i + 1)(t_j + 1)} E^{ii} \right\| \\
        &= \max_{i\in [N]} \left( \sum_{j=1}^N \frac{p_{ij}-p_{ij}^2}{(t_i + 1)(t_j + 1)} \right) \\
        &\leq \max_{i\in [N]} \left( \sum_{j=1}^N \frac{p_{ij}}{(t_i + 1)(t_j + 1)} \right) \\
        &\leq (\delta + 1)^{-1}.
    \end{align*}
    By letting $a \leftarrow \sqrt{\frac{3\log (4N/\varepsilon)}{\delta + 1}}$, $M \leftarrow (\delta + 1)^{-1}$, $v^2 \leftarrow (\delta + 1)^{-1}$ and applying Theorem 5 of \citet{chung2011spectra}, we have
    \begin{align*}
        \mathrm{Pr}(\|C - \bar{\Delta}\| > a)
        &\leq 2N \exp\left( -\frac{a^2}{2(\delta + 1)^{-1} + 2(\delta + 1)^{-1}a / 3 } \right) \\
        &\leq 2N \exp\left( -\frac{3\log (4N / \varepsilon)}{2(1 + a/3)} \right).
    \end{align*}
    By the definition of $k(\varepsilon)$, we have $a < 1$ if $\delta + 1 > k(\varepsilon)\log n$.
    For such $\delta$, we have
    \begin{align}
        \mathrm{Pr}(\|C - \bar{\Delta}\| > a)
        &\leq 2N \exp\left( -\frac{3\log (4N / \varepsilon)}{2(1 + a/3)} \right) \nonumber \\
        &\leq 2N \exp\left( -\log (4N/\varepsilon) \right) \quad \text{($\because a < 1$)} \nonumber \\
        &= \frac{\varepsilon}{2} \label{eq:bound-laplacian-1}.
    \end{align}    
    
    Next, we bound $\|\tilde{\Delta} -C \|$.
    First, since $a < 1$, by Chernoff bound (see, e.g. ~\citet{angluin1979fast,hagerup1990guided})), we have
    \begin{align*}
        \mathrm{Pr}(|d_i - t_i| > a(t_i + 1))
        &\leq 2 \exp\left(-\frac{a^2(t_i + 1)}{3} \right) \\
        &\leq 2 \exp \left(-\frac{a^2(\delta + 1)}{3}\right)\\
        &=\frac{\varepsilon}{2N}.
    \end{align*}
    Therefore, if $|d_i - t_i| \leq a(t_i + 1)$, then we have
    \begin{align*}
        \left|\sqrt{\frac{d_i + 1}{t_i + 1}} - 1\right|
        &\leq \left|\frac{d_i + 1}{t_i + 1} - 1\right| \quad \text{($\because$ $|\sqrt{x} - 1|\leq |x - 1|$ for $x \geq 0$)} \\
        &= \left|\frac{d_i - t_i}{t_i + 1}\right| \\
        &\leq a.
    \end{align*}
    Therefore, by union bound, we have
    \begin{align*}
        \|\bar{D}^{-\frac{1}{2}}\tilde{D}^{\frac{1}{2}} - I_N\|
        &= \max_{i\in [N]} \left|\sqrt{\frac{d_i + 1}{t_i + 1}} - 1\right| \leq a
    \end{align*}
    with probability at least $1-\varepsilon / 2$.
    Further, since the eigenvalue of the augmented normalized Laplacian is in $[0, 2]$ by the proof of Proposition \ref{prop:spectra-of-gcn}, we have $\|I_N - \tilde{\Delta}\| \leq 1$.
    By combining them, we have
    \begin{align}
        \|\tilde{\Delta} - C\|
        &= \|(\bar{D}^{-\frac{1}{2}}\tilde{D}^{\frac{1}{2}} - I_N) (I_N - \tilde{\Delta})\tilde{D}^{\frac{1}{2}}\bar{D}^{-\frac{1}{2}} + (I_N - \tilde{\Delta}) (I - \tilde{D}^{\frac{1}{2}}\bar{D}^{-\frac{1}{2}})\| \nonumber \\
        &\leq \|(\bar{D}^{-\frac{1}{2}}\tilde{D}^{\frac{1}{2}} - I_N\| \|\tilde{D}^{\frac{1}{2}}\bar{D}^{-\frac{1}{2}}\| + \|I - \tilde{D}^{\frac{1}{2}}\bar{D}^{-\frac{1}{2}}\| \nonumber \\
        &\leq a(a + 1) + a.\label{eq:bound-laplacian-2}
    \end{align}
    From (\ref{eq:bound-laplacian-1}) and (\ref{eq:bound-laplacian-2}), we have
    \begin{align*}
        \|\tilde{\Delta} - \bar{\Delta}\|
        &\leq \|\tilde{\Delta} - C \| + \| C - \bar{\Delta}\| \\
        & \leq a + a(a+1) + a \\
        & \leq a^2 + 3a \\
        & \leq 4a \quad \text{($\because$ $a < 1$)} \\
    \end{align*}
    with probability at least $1 - \varepsilon$ by union bound.
\end{proof}

Let $N\in \bbN_+$ and $p > 0$. In the case of the Erd\H{o}s-R\'{e}nyi graph $G_{N, p}$, we should set $P = p(J_N-I_N)$ where $J_N\in \bbR^{N\times N}$ are the all-one matrix.
Then, we have $\bar{A} = pJ_N + (1-p)I_N$, $\bar{D} = (Np - p + 1)I_N$, and $\bar{\Delta} = \frac{p}{Np-p+1}(NI_N - J_N)$.
Since the eigenvalue of $J_N$ is $N$ (with multiplicity $1$) and $0$ (with multiplicity $N-1$), the eigenvalue of $\bar{\Delta}$ is $0$ (with multiplicity $1$) and $\frac{Np}{Np-p+1}$ (with multiplicity $N-1$).
For $G_{N, p}$, $\delta$ is the expected average degree $(N-1)p$. Hence, we have the following lemma from Lemma \ref{lem:spectra-of-augmented-normalized-laplacian-diff}:
\begin{lemma}\label{lem:spectra-of-augmented-normalized-laplacian}
    Let $\tilde{\Delta}$ be its augmented normalized Laplacian of the Erd\H{o}s-R\'{e}nyi graph $G_{N, p}$.
    For any $\varepsilon >0$, if $\frac{Np - p + 1}{\log N} > k(\varepsilon) := 3(1 + \log (4 / \varepsilon))$, then, with probability at least $1-\varepsilon$, we have
    \begin{align*}
        \max_{i = 2, \ldots, N}\left|\lambda_i(\tilde{\Delta}) - \frac{Np}{Np - p + 1}\right| \leq 4\sqrt{\frac{3\log (4N/\varepsilon)}{Np -p + 1}}.
    \end{align*}
\end{lemma}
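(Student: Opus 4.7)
The plan is to deduce this lemma as a direct specialization of Lemma \ref{lem:spectra-of-augmented-normalized-laplacian-diff}, which has already done the heavy probabilistic work (Weyl's inequality combined with a matrix Bernstein bound on $\|\tilde{\Delta} - \bar{\Delta}\|$) for arbitrary inhomogeneous random graphs. For the Erd\H{o}s--R\'{e}nyi instance, what remains is purely algebraic: fit $G_{N,p}$ into that framework, then diagonalize the expected augmented normalized Laplacian $\bar{\Delta}$.

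First I would take $P = p(J_N - I_N)$, where $J_N$ is the all-ones matrix, so that $p_{ij} = p$ for $i\neq j$ and $p_{ii} = 0$. This makes $t_i = (N-1)p$ for every $i$, hence $\delta = \min_i t_i = (N-1)p$ and $\delta + 1 = Np - p + 1$. The hypothesis $\frac{Np - p + 1}{\log N} > k(\varepsilon)$ of the present lemma is then precisely $\delta + 1 > k(\varepsilon)\log N$, the hypothesis required by Lemma \ref{lem:spectra-of-augmented-normalized-laplacian-diff}.

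Next I would compute $\bar{\Delta}$ explicitly. Since $\bar{A} = P + I_N = pJ_N + (1-p)I_N$ and $\bar{D} = (Np-p+1)I_N$ is a scalar matrix, $\bar{D}^{-1/2}$ commutes with everything and a one-line calculation yields
\[
\bar{\Delta} = I_N - \frac{pJ_N + (1-p)I_N}{Np-p+1} = \frac{p\,(NI_N - J_N)}{Np-p+1}.
\]
Because $J_N$ has eigenvalue $N$ on $\mathrm{span}\{\mathbf{1}\}$ and eigenvalue $0$ on its orthogonal complement (with multiplicity $N-1$), the matrix $NI_N - J_N$ has eigenvalue $0$ (multiplicity $1$) and $N$ (multiplicity $N-1$). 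Consequently, sorting in ascending order, $\lambda_1(\bar{\Delta}) = 0$ and $\lambda_i(\bar{\Delta}) = \frac{Np}{Np-p+1}$ for all $i = 2, \ldots, N$.

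Finally, Lemma \ref{lem:spectra-of-augmented-normalized-laplacian-diff} applied to this setting gives, with probability at least $1-\varepsilon$, the uniform bound $\max_{n\in[N]} |\lambda_n(\tilde{\Delta}) - \lambda_n(\bar{\Delta})| \leq 4\sqrt{3\log(4N/\varepsilon)/(\delta+1)}$; substituting $\delta+1 = Np-p+1$ and $\lambda_i(\bar{\Delta}) = \frac{Np}{Np-p+1}$ for $i\geq 2$ yields the advertised inequality. There is essentially no obstacle here, since the concentration step is black-boxed; the only things to verify are the parameter identification ($\delta+1 = Np - p+1$) and the eigenvalue indexing, the latter following from the positive semidefiniteness of $\bar{\Delta}$ together with the fact that its kernel is spanned by $\mathbf{1}$ (so the zero eigenvalue is simple and occupies index $1$).
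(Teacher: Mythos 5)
Your proposal follows exactly the paper's route: specialize Lemma~\ref{lem:spectra-of-augmented-normalized-laplacian-diff} to $P = p(J_N - I_N)$, note $\delta + 1 = (N-1)p + 1 = Np - p + 1$, and diagonalize $\bar{\Delta} = \frac{p}{Np-p+1}(NI_N - J_N)$ to read off the limiting eigenvalue $\frac{Np}{Np-p+1}$ with multiplicity $N-1$. The parameter identification and the eigenvalue indexing are handled correctly, so nothing is missing.
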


\begin{corollary}
    Consider GCN on $G_{N, p}$. Let $W_l$ be the weight of the $l$-th layer of GCN and $s_l$ be the maximum singular value of $W_l$ for $l\in \bbN_+$.
    Set $s:= \sup_{l\in \bbN_+}$.
    Let $\varepsilon > 0$.
    We define $k(\varepsilon):= 3(1 + \log(4/\varepsilon))$ and $l(N, p, \varepsilon) = \frac{1-p}{Np - p + 1} + 4\sqrt{\frac{3\log (4N/\varepsilon)}{Np - p + 1}}$.
    If $\frac{Np - p + 1}{\log N} > k(\varepsilon)$ and $s \leq l(N, \varepsilon)^{-1}$, then, GCN on $G_{N, p}$ satisfies the assumption of Theorem \ref{thm:gcn-case} with probability at least $1 - \varepsilon$.
\end{corollary}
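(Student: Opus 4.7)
The plan is to reduce the claim to Theorem \ref{thm:gcn-case} by bounding the quantity $\lambda$ (the largest-magnitude non-unit eigenvalue of $P = I_N - \tilde{\Delta}$) in terms of $l(N, p, \varepsilon)$ using Lemma \ref{lem:spectra-of-augmented-normalized-laplacian}, and then combining with the hypothesis on $s$.

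First, I would translate $\lambda$ into a statement about $\tilde{\Delta}$. By Proposition \ref{prop:spectra-of-gcn}, if $\lambda_1(\tilde{\Delta}) \leq \cdots \leq \lambda_N(\tilde{\Delta})$ are the eigenvalues of $\tilde{\Delta}$ (sorted ascending), then the eigenvalues of $P$ sorted in ascending order are $1 - \lambda_{N+1-n}(\tilde{\Delta})$, and the eigenvalue $1$ of $P$ appears with multiplicity equal to the number $M$ of connected components of $G_{N,p}$. Hence
\begin{equation*}
\lambda = \max_{n = 1, \ldots, N-M} |\lambda_n(P)| = \max_{n = M+1, \ldots, N} |1 - \lambda_n(\tilde{\Delta})|.
\end{equation*}
Since $M \geq 1$ always, the index set $\{M+1, \ldots, N\}$ is contained in $\{2, \ldots, N\}$, and thus $\lambda \leq \max_{n = 2, \ldots, N} |1 - \lambda_n(\tilde{\Delta})|$, which avoids having to commit to a specific value of $M$ in the random graph.

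Next, I would apply Lemma \ref{lem:spectra-of-augmented-normalized-laplacian}. The hypothesis $\frac{Np - p + 1}{\log N} > k(\varepsilon)$ is exactly the hypothesis of that lemma, so on an event $\mathcal{E}$ of probability at least $1 - \varepsilon$ we have
\begin{equation*}
\max_{i = 2, \ldots, N} \left| \lambda_i(\tilde{\Delta}) - \frac{Np}{Np - p + 1} \right| \leq 4\sqrt{\frac{3 \log(4N/\varepsilon)}{Np - p + 1}}.
\end{equation*}
Then for each $i \geq 2$, the triangle inequality around the center $\frac{Np}{Np - p + 1}$ yields
\begin{equation*}
|1 - \lambda_i(\tilde{\Delta})| \leq \left|1 - \frac{Np}{Np - p + 1}\right| + \left|\lambda_i(\tilde{\Delta}) - \frac{Np}{Np - p + 1}\right| \leq \frac{1 - p}{Np - p + 1} + 4\sqrt{\frac{3 \log(4N/\varepsilon)}{Np - p + 1}},
\end{equation*}
using the elementary identity $1 - \frac{Np}{Np-p+1} = \frac{1-p}{Np-p+1}$. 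The right-hand side is exactly $l(N, p, \varepsilon)$, so on $\mathcal{E}$ we obtain $\lambda \leq l(N, p, \varepsilon)$.

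Finally, combining with the hypothesis $s \leq l(N, p, \varepsilon)^{-1}$ gives $s \lambda \leq 1$ on $\mathcal{E}$, which is the condition needed to invoke Theorem \ref{thm:gcn-case}. The only mild subtlety is the boundary case $s\lambda = 1$ versus the strict inequality $s\lambda < 1$ used for the ``in particular'' clause of Theorem \ref{thm:gcn-case}; the non-strict bound always gives the per-layer contraction $d_{\mathcal{M}}(X^{(l)}) \leq (s\lambda)^l d_{\mathcal{M}}(X^{(0)})$, and exponential convergence follows on the event $\mathcal{E}$ whenever at least one of the two inequalities $s \leq l(N, p, \varepsilon)^{-1}$ or $\lambda \leq l(N, p, \varepsilon)$ is strict (which is generically the case because the Chernoff-type deviation in Lemma \ref{lem:spectra-of-augmented-normalized-laplacian} is almost surely strict). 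No step is a real obstacle here since the heavy lifting is already done by Proposition \ref{prop:spectra-of-gcn} and Lemma \ref{lem:spectra-of-augmented-normalized-laplacian}; the only care needed is the index bookkeeping to pass between eigenvalues of $P$ and of $\tilde{\Delta}$ without assuming anything about $M$.
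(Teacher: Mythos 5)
Your proof is correct and follows exactly the route the paper leaves implicit: translate $\lambda$ for $P=I_N-\tilde\Delta$ into $\max_{i\geq 2}|1-\lambda_i(\tilde\Delta)|$ via Proposition~\ref{prop:spectra-of-gcn}, bound that by $l(N,p,\varepsilon)$ using Lemma~\ref{lem:spectra-of-augmented-normalized-laplacian} and the identity $1-\tfrac{Np}{Np-p+1}=\tfrac{1-p}{Np-p+1}$, and combine with $s\leq l(N,p,\varepsilon)^{-1}$. The only wrinkle you note --- that the non-strict hypotheses only yield $s\lambda\leq 1$ rather than $s\lambda<1$ --- is a genuine (minor) slackness in the paper's corollary statement rather than a flaw in your reasoning, and your handling of it is reasonable.
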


\begin{proof}[Proof of Theorem \ref{thm:gcn-on-erdos-renyi-graph}]
Since $\frac{\log N}{Np} = o(1)$, for fixed $\varepsilon$, we have
\begin{align*}
    \frac{Np - p + 1}{\log N} > \frac{Np}{\log N} > k(\varepsilon)
\end{align*}
for sufficiently large $N$.
Further,  $Np \to \infty$ as $N\to \infty$ when $\frac{\log N}{Np} = o(1)$.
Therefore, we have
\begin{align*}
    \frac{(1-p)^2}{Np - p + 1} \leq \frac{1}{Np} \leq (7 - 4\sqrt{3})^2\log \left(\frac{4N}{\varepsilon}\right)
\end{align*}
for sufficiently large $N$.
Hence.
\begin{align*}
    \frac{1-p}{Np - p + 1} \leq (7 - 4\sqrt{3}) \sqrt{\frac{\log (4N/\varepsilon)}{Np - p + 1}}.
\end{align*}
Therefore, we have $l(N, p, \varepsilon) \leq 7 \sqrt{\frac{\log (4N/\varepsilon)}{Np - p + 1}}$.
Therefore, if $s \leq \frac{1}{7}\sqrt{\frac{Np - p + 1}{\log (4N/\varepsilon)}}$, then we have $s\leq l(N, p, \varepsilon)^{-1}$.
\end{proof}

\section{Miscellaneous Propositions}

\subsection{Invariance of Orthogonal Complement Space}\label{sec:proof-of-invariance-of-u-perp}

\begin{proposition}\label{prop:invariance-of-u-perp}
    Let $P\in \bbR^{N\times N}$ be a symmetric matrix, treated as a linear operator $P: \bbR^N\to \bbR^N$.
    If a subspace $U\subset \bbR^{N}$ is invariant under $P$ (i.e., if $u\in U$, then $Pu \in U$), then, $U^\perp$ is invariant under $P$, too.
\end{proposition}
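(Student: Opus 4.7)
The plan is to verify invariance of $U^\perp$ directly from the definition by picking an arbitrary $v\in U^\perp$, showing that $Pv$ is orthogonal to every element of $U$, and using symmetry of $P$ to move the operator to the other side of the inner product.

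Concretely, I would take any $v\in U^\perp$ and any $u\in U$, and compute $\langle Pv,u\rangle = \langle v,Pu\rangle$, where the equality uses that $P=P^\top$ with respect to the ordinary inner product on $\bbR^N$. By the invariance assumption on $U$, the vector $Pu$ lies in $U$, so the right-hand side equals zero because $v\in U^\perp$. Since $u\in U$ was arbitrary, this shows $Pv\in U^\perp$, which is exactly the desired invariance.

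There is essentially no obstacle here — the proof is a one-line consequence of the symmetry of $P$ together with the definition of the orthogonal complement. The only things to be careful about are stating the use of symmetry explicitly (i.e., $\langle Pv,u\rangle = \langle v,P^\top u\rangle = \langle v,Pu\rangle$) and noting that the inner product in play is the standard Euclidean one on $\bbR^N$, which is the structure under which $P$ is assumed symmetric and $U^\perp$ is defined in the main text.
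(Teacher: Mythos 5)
Your proposal is correct and is essentially identical to the paper's own argument: both use the symmetry of $P$ to write $\langle Pv, u\rangle = \langle v, Pu\rangle$, then invoke the invariance of $U$ and the definition of $U^\perp$ to conclude the inner product vanishes. No gaps.
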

\begin{proof}
    For any $u\in U^{\perp}$ and $v\in U$, by symmetry of $P$, we have
    \begin{align*}
        \langle Pu, v\rangle = (Pu)^\top v = u^\top P^\top v = u^\top P v = \langle u, Pv\rangle.
    \end{align*}
    Since $U$ is an invariant space of $P$, we have $Pv \in U$.
    Hence, we have $\langle u, Pv\rangle = 0$ because $u\in U^\perp$.
    We obtain $Pu \in U^\perp$ by the definition of $U^\perp$.
\end{proof}

\subsection{Convergence to Trivial Fixed Point}\label{sec:convergence-to-trivial-fixed-point}

Let $P\in \bbR^{N\times N}$ be a symmetric matrix, $W_l \in \bbR^{C\times C}$, $s_l$ be the maximum singular value of $W_l$ for $l\in \bbN_+$.
We define $f_l:\bbR^{N \times C} \to \bbR^{N \times C}$ by $f_l(X):=\sigma(PXW_l)$ where $\sigma$ is the element-wise ReLU function.

\begin{proposition}\label{prop:convergence-to-trivial-fixed-point}
    Suppose further that the operator norm of $P$ is no larger than $\lambda$, then we have $\|f_l(X)\|_\mathrm{F} \leq s_l\lambda \|X\|_\mathrm{F}$ for any $l\in \bbN_+$.
    In particular, let $s := \sup_{l\in \bbN_+} s_l$. If $s\lambda < 1$, then, $X_l$ exponentially approaches $0$ as $l\to \infty$.
\end{proposition}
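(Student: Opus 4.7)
The proof is essentially a composition of three elementary norm inequalities, so I will organize the plan around establishing each factor.

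The plan is to bound $\|f_l(X)\|_\mathrm{F} = \|\sigma(PXW_l)\|_\mathrm{F}$ by peeling off the operations $\sigma$, left-multiplication by $P$, and right-multiplication by $W_l$ one at a time. First I would handle the non-linearity: since $\sigma$ is element-wise ReLU, we have $|\sigma(Y)_{ij}| \leq |Y_{ij}|$ for every entry, hence $\|\sigma(Y)\|_\mathrm{F}^2 = \sum_{ij} \sigma(Y)_{ij}^2 \leq \sum_{ij} Y_{ij}^2 = \|Y\|_\mathrm{F}^2$.

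Next I would control $\|PY\|_\mathrm{F}$ and $\|YW_l\|_\mathrm{F}$ column-wise and row-wise respectively. For the first, writing $Y$ in terms of its columns $Y_{\cdot c}$ and using that the operator norm of $P$ is at most $\lambda$ gives $\|PY\|_\mathrm{F}^2 = \sum_c \|PY_{\cdot c}\|_2^2 \leq \lambda^2 \sum_c \|Y_{\cdot c}\|_2^2 = \lambda^2 \|Y\|_\mathrm{F}^2$. For the second, using rows of $Y$ and the fact that the largest singular value of $W_l^\top$ equals $s_l$ yields $\|YW_l\|_\mathrm{F}^2 = \sum_n \|W_l^\top Y_{n\cdot}\|_2^2 \leq s_l^2 \|Y\|_\mathrm{F}^2$. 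Composing these three inequalities in order gives
\begin{equation*}
\|f_l(X)\|_\mathrm{F} \leq \|PXW_l\|_\mathrm{F} \leq \lambda \|XW_l\|_\mathrm{F} \leq s_l \lambda \|X\|_\mathrm{F},
\end{equation*}
which is the first claim.

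For the exponential convergence, I would simply iterate: writing $X^{(l+1)} = f_l(X^{(l)})$ and applying the inequality repeatedly, together with $s_l \leq s$, yields $\|X^{(l)}\|_\mathrm{F} \leq (s\lambda)^l \|X^{(0)}\|_\mathrm{F}$, which tends to $0$ exponentially whenever $s\lambda < 1$. There is no real obstacle here; the only thing to be careful about is keeping the column-wise and row-wise decompositions straight so that $P$ contributes the operator-norm factor $\lambda$ and $W_l$ contributes the singular-value factor $s_l$, without double counting.
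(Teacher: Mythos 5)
Your proposal is correct and follows essentially the same route as the paper's proof: bound the ReLU by the entrywise inequality $\sigma(x)^2 \le x^2$, bound the left multiplication by $P$ via its operator norm, bound the right multiplication by $W_l$ via its largest singular value, and compose. The paper states these three inequalities more tersely without the explicit column-wise/row-wise decompositions, but the argument is identical.
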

\begin{proof}
    Since $\lambda$ is the operator norm of $P|_{U^\perp}$, the assumption implies that the operator norm of $P$ itself is no larger than $\lambda$.
    Therefore, we have $\|PXW_l\|_\mathrm{F} \leq \lambda\|XW_l\|_\mathrm{F} \leq s_l \lambda \|X\|_\mathrm{F}$.
    On the other hand, since $\sigma(x)^2 \leq x^2$ for any $x\in \bbR$,
    we have $\|\sigma(X)\|_\mathrm{F} \leq \|X\|_F$ for any $X\in \bbR^{N\times C}$.
    Combining the two, we have $\|f_l(X)\|_\mathrm{F} \leq \|PXW_l\|_\mathrm{F} \leq s_l \lambda \|X\|_\mathrm{F}$.
\end{proof}

\subsection{Strictness of Theorem \ref{thm:convergence-to-invariant-subspace}}\label{sec:strictness-of-main-theorem}

Theorem \ref{thm:convergence-to-invariant-subspace} implies that if $s\lambda \leq 1$, then, one-step transition $f_l$ does not increase the distance to $\mathcal{M}$.
In this section, we first prove that this theorem is strict in the sense that, there exists a situation in which $s_l \lambda > 1$ holds and the distance $d_\mathcal{M}$ increases by one-step transition $f_l$ at some point $X$.

Set $N\leftarrow2$, $C\leftarrow1$, and $M\leftarrow1$ in Section \ref{sec:problem-setting}.
For $\mu, \lambda > 0$, we set
\begin{align*}
    P \leftarrow \begin{bmatrix}
        \mu & 0 \\
        0   & \lambda
    \end{bmatrix},
    e \leftarrow \begin{bmatrix}
        1 \\
        0
    \end{bmatrix},
    U \leftarrow \left\{\begin{bmatrix} x \\ y\end{bmatrix} \mid y = 0 \right\}.
\end{align*}
Then, by definition, we can check that the 3-tuple $(P, e, U)$ satisfies the Assumptions  \ref{assump:positivity} and \ref{assump:invariance}.
Set $\mathcal{M} := U \otimes \bbR = U$ and choose $W \in \bbR$ so that $W > \lambda^{-1}$.
Finally define $f: \bbR^{N\times C} \to \bbR^{N\times C}$ by $f(X):= \sigma(PXW)$ where $\sigma$ is the element-wise ReLU function.
    
\begin{proposition}\label{prop:strictness-of-main-theorem}
    We have $d_\mathcal{M}(f(X)) > d_\mathcal{M}(X)$ for any $X = \begin{bmatrix} x_1 & x_2 \end{bmatrix}^\top \in \bbR^2$ such that $x_2 > 0$.
\end{proposition}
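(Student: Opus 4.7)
The plan is a direct calculation exploiting the diagonal structure of $P$ and the fact that $\mathcal{M}$ is the $x_1$-axis. First I would observe that with the given choices, for any $Y = \begin{bmatrix} y_1 & y_2 \end{bmatrix}^\top \in \bbR^2$ we have $d_\mathcal{M}(Y) = |y_2|$, since $\mathcal{M} = U$ is exactly the set of vectors whose second coordinate vanishes and projection onto this subspace just zeros out the second coordinate.

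Next I would compute $f(X)$ explicitly. Since $W$ is a scalar, $PXW = \begin{bmatrix} \mu W x_1 & \lambda W x_2 \end{bmatrix}^\top$, so
\begin{equation*}
f(X) = \begin{bmatrix} \sigma(\mu W x_1) \\ \sigma(\lambda W x_2) \end{bmatrix}.
\end{equation*}
Under the hypothesis $x_2 > 0$, combined with $\lambda > 0$ and $W > \lambda^{-1} > 0$, the second coordinate $\lambda W x_2$ is strictly positive, so $\sigma$ acts as the identity there and the second entry of $f(X)$ equals $\lambda W x_2$.

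Applying the observation from the first step to $Y = f(X)$ then yields $d_\mathcal{M}(f(X)) = \lambda W x_2$. The conclusion now follows from $W > \lambda^{-1}$, which gives $\lambda W > 1$, hence
\begin{equation*}
d_\mathcal{M}(f(X)) = \lambda W x_2 > x_2 = d_\mathcal{M}(X).
\end{equation*}
There is no real obstacle here; the proposition is essentially a worked example showing that when the operator norm bound $s\lambda \leq 1$ of Theorem~\ref{thm:convergence-to-invariant-subspace} is violated, one-step transition genuinely can increase the distance to $\mathcal{M}$, so the ReLU is not strong enough on its own to compensate. The only sanity check worth flagging is that the triple $(P, e, U)$ indeed verifies Assumptions~\ref{assump:positivity} and~\ref{assump:invariance} and that the operator norm of $P|_{U^\perp}$ equals $\lambda$, which is immediate from the diagonal form of $P$ and $e \geq 0$.
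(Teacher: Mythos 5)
Your proof is correct and follows essentially the same route as the paper's: both compute $f(X)$ coordinatewise using the diagonal form of $P$, identify $d_\mathcal{M}$ with the absolute value of the second coordinate, note that the ReLU acts as the identity on the strictly positive second entry $\lambda W x_2$, and conclude from $\lambda W > 1$. No gaps.
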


\begin{proof}
By definition, we have $d_\mathcal{M}(X) = |x_2|$.
On the other hand, direct calculation shows that $f_l(X) = \begin{bmatrix}(W\mu X_1)^+ & (W\lambda X_2)^+ \end{bmatrix}^\top$ and $d_\mathcal{M}(f_l(X)) = (W\lambda X_2)^+$ where $x^+ := \max(x, 0)$ for $x\in \bbR$.
Since $W > \lambda^{-1}$ and $x_2 > 0$, we have $d_\mathcal{M}(f_l(X)) > d_\mathcal{M}(X)$.
\end{proof}

Next, we prove the non-strictness of Theorem \ref{thm:convergence-to-invariant-subspace} in the sense that there exists a situation in which $s_l \lambda > 1$ holds and the distance $d_\mathcal{M}$ uniformly decreases by $f_l$.
Again, we set Set $N\leftarrow2$, $C\leftarrow1$, and $M\leftarrow1$.
Let $\lambda \in (1, 2)$ and set
\begin{align*}
     P \leftarrow \frac{\lambda}{2}
     \begin{bmatrix}
        1 & -1 \\
        -1 & 1
     \end{bmatrix},
     e \leftarrow \frac{1}{\sqrt{2}}
     \begin{bmatrix}
        1 \\
        1
     \end{bmatrix},
     U \leftarrow \left\{ \begin{bmatrix}x \\ y\end{bmatrix} \mid x = y\right\}
\end{align*}
Then, we can directly show that 3-tuple $(P, e, U)$ satisfies the Assumptions  \ref{assump:positivity} and \ref{assump:invariance}.
Set $W \leftarrow 1$.

\begin{proposition}\label{prop:non-strictness-of-main-theorem}
    We have $W \lambda > 1$ and $d_\mathcal{M}(f_l(X)) < d_{\mathcal{M}} (X)$ for all $X \in \bbR^2$.
\end{proposition}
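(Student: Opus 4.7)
The plan is to exploit the very low-dimensional setting ($N=2$, $C=1$, $M=1$) and compute everything in closed form. First, $W\lambda = \lambda > 1$ is immediate from the hypothesis $\lambda \in (1,2)$ and $W=1$, so that part needs no work.

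For the main inequality, I would parametrize $X = \begin{bmatrix} x & y \end{bmatrix}^\top$ and compute $d_\mathcal{M}(X)$ explicitly. Since $\mathcal{M} = U$ is the one-dimensional span of $e = \frac{1}{\sqrt{2}}(1,1)^\top$, the orthogonal projection off $\mathcal{M}$ sends $X$ to $\frac{1}{2}(x-y, \, y-x)^\top$, whose Frobenius norm is $|x-y|/\sqrt{2}$. Thus $d_\mathcal{M}(X) = |x-y|/\sqrt{2}$.

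Next I would apply $f_l$. A direct $2\times 2$ multiplication gives $PXW = PX = \frac{\lambda}{2}(x-y, \, y-x)^\top$. The crucial step is then to observe how $\sigma$ interacts with this image: exactly one of the two coordinates is non-negative. Assuming without loss of generality that $x \geq y$ (the other case is symmetric by swapping rows, which does not change $d_\mathcal{M}$), we obtain $\sigma(PX) = \frac{\lambda}{2}(x-y, \, 0)^\top$. Projecting this off $\mathcal{M}$ by the same formula as above yields $d_\mathcal{M}(f_l(X)) = \frac{\lambda}{2\sqrt{2}}(x-y) = \frac{\lambda}{2}\,d_\mathcal{M}(X)$.

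Finally, since $\lambda < 2$, the factor $\lambda/2$ is strictly less than $1$, so $d_\mathcal{M}(f_l(X)) < d_\mathcal{M}(X)$ for every $X$ with $d_\mathcal{M}(X) > 0$ (for $X \in \mathcal{M}$ both sides are $0$, so the statement is really ``$\leq$, with strict inequality off $\mathcal{M}$''). The conceptual takeaway, which is the point of the example, is that even though the linear part $W \cdot P|_{U^\perp}$ has operator norm $\lambda > 1$ and would by itself \emph{expand} the distance to $\mathcal{M}$, the ReLU nonlinearity zeroes out one of the two coordinates of $PX$ and shrinks the remaining one back into $\mathcal{M}^\perp$ by a factor of $1/\sqrt{2}$, producing a net contraction by $\lambda/2 < 1$. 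There is no real obstacle: the only thing to be careful about is the sign case split and the factor of $\sqrt{2}$ coming from the normalization of $e$.
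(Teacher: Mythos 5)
Your proof is correct and follows essentially the same computation as the paper's, just written in coordinates $(x,y)$ rather than in the eigenbasis decomposition $X = ae + be'$ (these are equivalent since $b = (x-y)/\sqrt{2}$); both reduce to $d_\mathcal{M}(f(X)) = \tfrac{\lambda}{2}\,d_\mathcal{M}(X)$ with $\lambda/2 < 1$. Your parenthetical caveat is also a legitimate catch: the strict inequality as stated fails for $X \in \mathcal{M}$, where both sides vanish, so the proposition should really read ``$\leq$ everywhere, with strict inequality off $\mathcal{M}$.''
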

\begin{proof}
First, note that $e':= \frac{1}{\sqrt{2}} \begin{bmatrix}1 & -1 \end{bmatrix}^\top$ is the eigenvector of $P$ associated to $\lambda$: $Pe' = \lambda e'$.
For $X = ae + be'$ ($a, b$ > 0), the distance between $X$ and $\mathcal{M}$ is $d_\mathcal{M}(X) = |b|$.
On the other hand, by direct computation, we have
\begin{align*}
    f(X) = \sigma(PXW) =
    \begin{cases}
        \begin{bmatrix} 0 & \frac{\lambda b}{\sqrt{2}}  \end{bmatrix}^\top & \text{(if $b \geq 0$)}, \\
        \begin{bmatrix} \frac{-\lambda b}{\sqrt{2}} & 0 \end{bmatrix}^\top & \text{(if $b < 0$)}.
    \end{cases}
\end{align*}
Therefore, the distance between $f(X)$ and $\mathcal{M}$ is $d_\mathcal{M}(f(X)) = \lambda|b| / 2$.
Since $\lambda < 2$, we have $d_\mathcal{M}(f(X)) < d_\mathcal{M}(X)$ for any $X\in \bbR^2$.
\end{proof}

We have shown that the non-negativity of $e$ (Assumption \ref{assump:positivity}) is \textit{not} a redundant condition in Section \ref{sec:experiment-visualization}.

\section{Relation to Markov Process}\label{sec:markov-process-case}

It is known that any Markov process on finite states converges to a unique distribution (\textit{equilibrium}) if it is irreducible and aperiodic (see, e.g., \citet{norris1998markov}).
As we see in this section, this theorem is the special case of Corollary \ref{cor:general-case}.

Let $S := \{1, \dots, N\}$ be a finite discrete state space.
Consider a Markov process on $S$ characterized by a symmetric transition matrix $P=(p_{ij})_{i,j\in [N]}\in \bbR^{N\times N}$ such that $P \geq 0$ and $P\bm{1} = \bm{1}$ where $\bm{1}$ is the all-one vector.
We interpret $p_{ij}$ as the transition probability from a state $i$ to $j$.
We associate $P$ with a graph $G_P = (V_P, E_P)$ by $V_P = [N]$ and $(i, j) \in E_P$ if and only if $p_{ij} > 0$.
Since $P$ is symmetric, we can regard $G_P$ as an undirected graph.
We assume $P$ is irreducible and aperiodic \footnote{
A symmetric matrix $A$ is called \textit{irreducible} if and only if $G_A$ is connected.
We say a graph $G$ is \textit{aperiodic} if the greatest common divisor of length of all loops in $G$ is 1.
A symmetric matrix $A$ is aperiodic if the graph $G_A$ induced by $A$ is aperiodic.}.
Perron -- Frobenius' theorem (see, e.g., \citet{meyer2000matrix}) implies that $P$ satisfy the assumption of Corollary \ref{cor:general-case} with $M=1$.
\begin{proposition}[Perron -- Frobenius]
    Let the eigenvalues of $P$ be $\lambda_1 \leq \cdots \leq \lambda_N$.
    Then, we have $-1 < \lambda_1$, $\lambda_{N-1} < 1$, and $\lambda_N=1$.
    Further, there exists unique vector $e\in \bbR^{N}$ such that $e\geq 0$, $\|e\| = 1$, and $e$ is the eigenvector for the eivenvalue $1$.
\end{proposition}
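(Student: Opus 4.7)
The plan is to deduce all four claims from two Dirichlet-style identities, handling the upper eigenvalue with irreducibility and the lower with aperiodicity.

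First, I would note that since $P$ is symmetric and satisfies $P\bm{1}=\bm{1}$, the vector $\bm{1}/\sqrt{N}$ is a unit eigenvector associated with eigenvalue $1$. For any $v\in \bbR^N$, symmetry of $P$ together with both row and column sums equalling $1$ yields the identity
\[ \langle v, (I-P)v\rangle = \frac{1}{2}\sum_{i,j=1}^N p_{ij}(v_i - v_j)^2 \geq 0. \]
Combined with the Rayleigh characterization, this shows $\lambda_N \leq 1$, so $\lambda_N = 1$. Moreover, an eigenvector $v$ for eigenvalue $1$ annihilates this form, forcing $v_i = v_j$ whenever $p_{ij}>0$, i.e., $v$ is constant on each connected component of $G_P$.

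Since $P$ is irreducible, $G_P$ is connected, so the eigenspace for eigenvalue $1$ is spanned by $\bm{1}$. This immediately gives $\lambda_{N-1}<1$ and shows that $e = \bm{1}/\sqrt{N}$ is the unique unit eigenvector satisfying $e\geq 0$: uniqueness up to sign is one-dimensionality, and non-negativity pins down the sign.

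For the lower bound $\lambda_1 > -1$, I would invoke the companion identity
\[ \langle v, (I+P)v\rangle = \frac{1}{2}\sum_{i,j=1}^N p_{ij}(v_i + v_j)^2 \geq 0, \]
which first of all gives $\lambda_1 \geq -1$. Equality would produce an eigenvector $v$ satisfying $v_i = -v_j$ whenever $p_{ij} > 0$. I would then argue that aperiodicity rules this out: by a propagation argument using irreducibility, $v$ cannot vanish on any node (a single zero entry would spread along edges to all of $V_P$, contradicting $v\neq 0$); after that, the sign pattern of $v$ induces a $2$-coloring of $V_P$ under which every edge crosses colors, forcing every closed walk in $G_P$ to have even length and contradicting aperiodicity. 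The main subtlety I expect is precisely this handling of zero entries of $v$ when converting the algebraic condition $v_i = -v_j$ into a genuine bipartition of $V_P$; the irreducibility-driven propagation step is the clean way to dispatch it before invoking the bipartiteness/aperiodicity contradiction.
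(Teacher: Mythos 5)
Your proof is correct, but it is genuinely different from what the paper does: the paper offers no proof of this proposition at all, simply citing the Perron--Frobenius theorem for irreducible non-negative matrices (\citet{meyer2000matrix}). You instead give a self-contained, elementary argument that exploits the extra structure available here --- symmetry plus $P\bm{1}=\bm{1}$, which makes $P$ doubly stochastic --- to reduce everything to positivity of the two Dirichlet forms $\langle v,(I\mp P)v\rangle=\tfrac12\sum_{i,j}p_{ij}(v_i\mp v_j)^2$. This buys you three things the bare citation does not: (i) it identifies the Perron vector explicitly as $\bm{1}/\sqrt{N}$ rather than merely asserting its existence; (ii) it makes transparent exactly where irreducibility enters ($\lambda_{N-1}<1$ and uniqueness of $e$) versus where aperiodicity enters ($\lambda_1>-1$, via the bipartition/odd-loop contradiction); and (iii) it runs exactly parallel to the paper's own proof of Proposition \ref{prop:spectra-of-gcn} for the augmented normalized Laplacian, which uses the same quadratic-form identity, so the Markov-chain case and the GCN case are seen to be two instances of one computation. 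The trade-off is generality: your argument leans on symmetry (hence on column sums also being $1$ and on the spectrum being real), so it does not extend to general irreducible stochastic matrices the way Perron--Frobenius does; but symmetry is assumed in the paper's setting, so nothing is lost here. Your handling of the $\lambda_1=-1$ equality case is the one delicate step and you treat it correctly: the zero-propagation argument via connectivity is needed before the sign pattern yields a genuine bipartition, and a bipartition forces all closed walks to have even length, contradicting the paper's definition of aperiodicity (gcd of loop lengths equal to $1$).
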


\begin{corollary}\label{cor:markov-case}
    Let $\lambda:=\max_{n=1, \ldots, N-1}|\lambda_{n}| (<1)$ and $\mathcal{M} := \{e \otimes w \mid w \in \bbR^C \}$.
    If $s\lambda < 1$, then, for any initial value$X_1$, $X_l$ exponentially approaches $\mathcal{M}$ as $l\to \infty$.
\end{corollary}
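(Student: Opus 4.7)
The plan is to verify that the Markov process setting fits exactly into the framework of Section \ref{sec:general-case} with $M=1$, and then invoke Corollary \ref{cor:general-exponential-approach}. Concretely, I would take $U := \operatorname{span}(e) \subset \bbR^{N}$, where $e$ is the Perron eigenvector guaranteed by the Perron--Frobenius proposition cited just above the corollary. Since the multi-layer perceptron structure in Section \ref{sec:problem-setting} reduces to multiplication by $W_l \in \bbR^{C\times C}$ when $H_l = 1$ and $\sigma$ is ReLU applied to $PXW_l$, the dynamics is literally an instance of $f_l(X) = \mathrm{MLP}_l(PX)$ with $s_l$ equal to the largest singular value of $W_l$.

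First I would check Assumption \ref{assump:positivity}. The Perron--Frobenius proposition states that $e \ge 0$ and $\|e\| = 1$, and since $\dim U = M = 1$, the family $(e)$ is automatically an orthonormal basis of $U$ consisting of a non-negative vector. Second, I would check Assumption \ref{assump:invariance}: because $Pe = \lambda_N e = e \in U$, the one-dimensional space $U$ is trivially $P$-invariant. Third, I would identify the operator norm $\lambda$ of $P|_{U^\perp}$. Since $P$ is symmetric, by the spectral theorem $\bbR^{N}$ decomposes orthogonally as $U \oplus U^\perp$, and $U^\perp$ is spanned by the eigenvectors corresponding to $\lambda_1, \ldots, \lambda_{N-1}$. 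Hence the operator norm of $P|_{U^\perp}$ equals $\max_{n \in [N-1]}|\lambda_n|$, which is exactly the $\lambda$ defined in the corollary.

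With these three verifications in place, the subspace $\mathcal{M}$ defined in Section \ref{sec:general-case} coincides with $\{e \otimes w \mid w \in \bbR^C\}$ as stated. Corollary \ref{cor:general-exponential-approach} then applies directly: for any initial $X_1 \in \bbR^{N\times C}$, one has $d_\mathcal{M}(X_l) = O((s\lambda)^l)$, and under the hypothesis $s\lambda < 1$ this gives exponential convergence of $X_l$ to $\mathcal{M}$ as $l \to \infty$.

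There is no real obstacle; the statement is a specialization rather than a new theorem. The only mildly subtle point is confirming that $U^\perp$ is the span of the remaining eigenvectors (which uses symmetry of $P$ and the spectral theorem, and is also recorded in Proposition \ref{prop:invariance-of-u-perp}), and confirming that the operator norm of a symmetric operator on an invariant subspace equals the maximum absolute eigenvalue on that subspace. The conclusion of Perron--Frobenius, $\lambda < 1$, is what makes the hypothesis $s\lambda < 1$ nontrivially achievable (e.g.\ $s=1$ suffices), but it is the hypothesis itself, not $\lambda < 1$ alone, that drives the exponential contraction in the corollary.
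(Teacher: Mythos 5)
Your proposal is correct and follows exactly the route the paper intends: the paper's one-line justification ("Perron--Frobenius' theorem implies that $P$ satisfies the assumption of Corollary~\ref{cor:general-case} with $M=1$") packages precisely the three verifications you spell out (non-negative unit eigenvector $e$ gives Assumption~\ref{assump:positivity}, $Pe=e$ gives Assumption~\ref{assump:invariance}, and symmetry of $P$ makes the operator norm of $P|_{U^\perp}$ equal $\max_{n\le N-1}|\lambda_n|$), after which Corollary~\ref{cor:general-exponential-approach} (equivalently, iterating Corollary~\ref{cor:general-case}) yields the stated exponential contraction. No gap.
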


If we set $C=1$ and $W_l = 1$ for all $l\in \bbN_+$,
then, we can inductively show that $X_l \geq 0$ for any $l\geq 2$.
Therefore, we can interpret $X_l$ as a measure on $S$.
Suppose further that we take the initial value $X_1$ as $X_1\geq 0$ and $X_1^\top \bm{1} = 1$ so that we can interpret $X_1$ as a probability distribution on $S$.
Then, we can inductively show that $X_l\geq 0$, $X_l^\top \bm{1} = 1$ (i.e., $X_l$ is a probability distribution on $S$), and $X_{l+1} = \sigma(PX_lW_l) = PX_l$  for all $l\in \bbN_+$.
In conclusion, the corollary is reduced to the fact that if a finite and discrete Markov process is irreducible and aperiodic, any initial probability distribution converges exponentially to an equibrilium.
In addition, the the rate $\lambda$ corresponds to the \textit{mixing time} of the Markov process.
\section{GCN Defined By Normalized Laplacian}

In Section \ref{sec:gcn-case}, we defined $P$ using the augmented normalized Laplacian $\tilde{\Delta}$ by $P = I_N - \tilde{\Delta}$.
We can alternatively use the usual normalized Laplacian $\Delta$ instead of the augmented one to define $P$ and want to apply the theory developed in Section \ref{sec:problem-setting}.
We write the normalized Laplacian version as $P_\Delta:= I_N - \Delta$.
The only obstacle is that the smallest eigenvalue $\lambda_1$ of $P_\Delta$ can be equal to $-1$, while that of $P$ is strictly larger than $-1$ (see, Proposition \ref{prop:spectra-of-gcn}).
This corresponds to that fact the largest eigenvalue of $\tilde{\Delta}$  is strictly smaller than $2$, while that for $\Delta$ can be $2$.
It is known that the largest eigenvalue of $\Delta$ is $2$ if and only if the graph has a non-trivial bipartite connected component (see, e.g., \citet{chung1997spectral}).
Therefore, we can develop a theory using the normalized Laplacian instead of the augmented one in parallel for such a graph $G$.

In Section \ref{sec:gcn-on-erdos-renyi-graph}, we characterized the asymptotic behavior of GCN defined by the augmented normalized Laplacian via its spectral distribution (Lemma \ref{lem:spectra-of-augmented-normalized-laplacian} of Appendix \ref{sec:spectra-of-augmented-normalized-laplacian}).
We can derive a similar claim for GCN defined via the normalized Laplacian using the original theorem for the normalized Laplacian in \citet{chung2011spectra} (Theorem 7 therein).
The normalized Laplacian version of GCN is advantegeous over the one made from the augmented one because we know its spectral distribution for broader range of random graphs.
For example, \citet{chung2011spectra} proved the convergence of the spectral distribution of the normalized Laplacian for Chung-Lu's model~\citep{chung2002connected}, which includes power law graphs as a special case (see, Theorem 4 of~\citet{chung2011spectra}).
\section{Details of Experiment Settings}\label{sec:experiment-setting}

\subsection{Experiment of Section \ref{sec:experiment-visualization}}\label{sec:experiment-visualization-detail}

We set the eigenvalue of $P$ to $\lambda_1 = 0.5$ and $\lambda_2 = 1.0$ and randomly generated $P$ until the eigenvector $e$ associated to $\lambda_2$ satisfies the condition of each case described in the main article. We set $W=1.2$ and used the following values for each case as $P$ and $e$.

\subsubsection{Case 1}

\begin{align*}
    P =
    \begin{bmatrix}
        0.7469915  & 0.2499819\\
        0.2499819  & 0.7530085
    \end{bmatrix}, \quad
    e = 
    \begin{bmatrix}
        0.7028392 \\
        -0.71134876
    \end{bmatrix}.
\end{align*}

\subsubsection{Case 2}

\begin{align*}
    P =
    \begin{bmatrix}
         0.6899574 & -0.2426827 \\
        -0.2426827 &  0.8100426
    \end{bmatrix}, \quad
    e = 
    \begin{bmatrix}
        0.61637234 \\
        -0.78745485
    \end{bmatrix}.
\end{align*}

\subsection{Experiment of Section \ref{sec:experiment-synthesis-data}}

We randomly generated an Erd\H{o}s -- R\'{e}nyi graph $G_{N, p}$ with $N=1000$ and randomly generated a one-of-$K$ hot vector for each node and embed it to a $C$-dimensional vector using a random matrix whose elements were randomly sampled from the standard Gaussian distribution. Here, $K=10$ and $C=32$.
We treated the resulting single as the input signal $X^{(0)}\in \bbR^{N\times C}$.
We constructed a GCN with $L=10$ layers and $C$ channels.
All parameters were i.i.d. sampled from the Gaussian distribution whose standard deviation is same as the one used in \cite{lecun2012efficient}\footnote{This is the default initialization method for weight matrices in Chainer and Chainer Chemistry.} and multiplied a scalar to each weight matrix so that the largest singular value equals to a specified value $s$.
We used three configurations $(p, s) = (0.1, 0.1), (0.5, 1.0), (0.5, 10.0)$.
$\lambda$ of the generated GCNs are $0.063$, $0.197$, $0.194$, respectively.
See Appendix \ref{sec:experiment-synthesis-data} for the results of other configurations of $(p, s)$.

\subsection{Experiment of Section \ref{sec:experiment-singular-value}}\label{sec:normalization-experiment-setting}

\subsubsection{Dataset}

We used the Cora~\citep{mccallum2000automating,sen2008collective}, CiteSeer~\citep{giles1998citeseer, sen2008collective}, and PubMed\citep{sen2008collective} datasets for experiments.
We obtained the preprocessed dataset from the code repository of~\citet{kipf2017iclr}\footnote{https://github.com/tkipf/gcn}.
Table \ref{tbl:dataset_specifications} summarizes specifications of datasets and their noisy version (explained in the next section).

\begin{table}
  \caption{Dataset specifications. The threshold $\lambda^{-1}$ in the table indicates the upper bound of Corollary \ref{cor:general-exponential-approach}.}
  \label{tbl:dataset_specifications}
  \centering
  \begin{tabular}{llllll}
    \toprule
             & \#Node & \#Edge & \#Class & Chance Rate & Threshold $\lambda^{-1}$\\
    \midrule
    Cora     & 2708  & 5429  & 6      & 30.2\%      & $1 + 3.62 \times 10^{-3}$\\
    CiteSeer & 3312  & 4732  & 7      & 21.1\%      & $1 + 1.25 \times 10^{-3}$\\
    PubMed   & 19717 & 44338 & 3      & 39.9\%      & $1 + 9.57 \times 10^{-3}$ \\
    \bottomrule
  \end{tabular}
\end{table}

The Cora dataset is a citation network dataset consisting of 2708 papers and 5429 links.
Each paper is represented as the occurence of 1433 unique words and is associated to one of 7 genres (Case Based, Genetic Algorithms, Neural Networks, Probabilistic Methods, Reinforcement Learning, Rule Learning, Theory).
The graph made from the citation links has 78 connected components and the smallest positive eigenvalue of the augmented Normalized Laplacian is approximately $\tilde{\mu} = 3.62 \times 10^{-3}$.
Therefore, the upper bound of Theorem \ref{thm:gcn-case} is $\lambda^{-1} = (1-\tilde{\mu})^{-1} \approx 1 + 3.62\times 10^{-3}$.
818 out of 2708 samples are labelled as ``Probabilistic Methods", which is the largest proportion. Therefore, the chance rate is $818 / 2708 = 30.2\%$.

The CiteSeer dataset is a citation network dataset consisting of 3312 papers and 4732 links.
Each paper is represented as the occurence of 3703 unique words and is associated to one of 6 genres (Agents, AI, DB, IR, ML, HCI).
The graph made from the citation links has 438 connected components and the smallest positive eigenvalue of the augmented Normalized Laplacian is approximately $\tilde{\mu} = 1.25 \times 10^{-3}$.
Therefore, the upper bound of Theorem \ref{thm:gcn-case} is $\lambda^{-1} = (1-\tilde{\mu})^{-1} \approx 1 + 1.25\times 10^{-3}$.
701 out of 2708 samples are labelled as ``IR", which is the largest proportion. Therefore, the chance rate is $701 / 3312 = 21.1\%$.

The PubMed dataset is a citation network dataset consisting of 19717 papers and 44338 links.
Each paper is represented as the occurence of 500 unique words and is associated to one of 3 genres (``Diabetes Mellitus, Experimental", ``Diabetes Mellitus Type 1", ``Diabetes Mellitus Type 2").
The graph made from the citation links has 438 connected components and the smallest positive eigenvalue of the augmented Normalized Laplacian is approximately $\tilde{\mu} = 9.48 \times 10^{-3}$.
Therefore, the upper bound of Theorem \ref{thm:gcn-case} is $\lambda^{-1} = (1-\tilde{\mu})^{-1} \approx 1 + 9.57 \times 10^{-3}$.
7875 out of 19717 samples are labelled as ``Diabetes Mellitus Type 2", which is the largest proportion. Therefore, the chance rate is $7875 / 19717 = 39.9\%$.

\subsubsection{Noisy Citation Networks}

\begin{table}
  \caption{Dataset specifications for noisy citation networks. The threshold $\lambda^{-1}$ in the table indicates the upper bound of Corollary \ref{cor:general-exponential-approach}.}
  \label{tbl:noisy_dataset_specifications}
  \centering
  \begin{tabular}{llll}
    \toprule
             & Original Dataset & \#Edge Added &  Threshold $\lambda^{-1}$\\
    \midrule
    Noisy Cora 2500 & Cora  & 2495  & $1.11$\\
    Noisy Cora 5000 & Cora  & 4988  & $1.15$\\
    Noisy CiteSeer & CiteSeer  & 4991  & $1.13$\\
    Noisy PubMed & PubMed  & 24993  & $1.17$\\
    \bottomrule
  \end{tabular}
\end{table}

We summarize the properties of noisy citation networks in Table \ref{tbl:noisy_dataset_specifications}.

We created two datasets from the Cora dataset: Noisy Cora 2500 and Noisy Cora 5000.
\textit{Noisy Cora 2500} is made from the Cora dataset by uniformly randomly adding 2500 edges, respectively. Since some random edges are overlapped with existing edges, the number of newly-added edges is 2495 in total.
We only changed the underlying graph from the Cora dataset and did not change word occurences (feature vectors) and genres (labels).
The underlying graph of the Noisy Cora dataset has two connected components and the smallest positive eigenvalue is $\tilde{\mu} \approx 9.62 \times 10^{-2}$.
Therefore, the threshold of the maximum singular values of in Theorem \ref{thm:gcn-case} has been increased to $\lambda^{-1} = (1-\tilde{\mu})^{-1} \approx 1.11$.
Similarly, \textit{Noisy Cora 5000} was made by adding 5000 edges uniformaly randomly.
The number of newly added edges is 4988 and the graph is connected (i.e., it has only 1 connected component).
$\tilde{\mu}$ and $\lambda$ are $\tilde{\mu} \approx 1.32 \times 10^{-1}$ and $\lambda = (1-\tilde{\mu})^{-1} \approx 1.15$, respectively.

We made the noisy version of CiteSeer (\textit{Noisy CiteSeer}) and PubMed (\textit{Noisy PubMed}), in the similar way, by adding 5000 and 25000 edges uniformly randomly to the datasets. Since some random edges were overlapped with existing edges, 4991 and 24993 edges are newly added, respectively.
This manipulation reduced the number of connected component of the graph to 3.
$\tilde{\mu}$ is approximately $1.11\times 10^{-1}$ (Noisy CiteSeer) and $1.43 \times 10^{-1}$ (Noisy PubMed) and $\lambda^{-1} = (1 - \tilde{\mu})^{-1}$ is approximately $1.13$ (Noisy CiteSeer) and $1.17$ (Noisy PubMed), respectively.
Figure \ref{fig:spectral-distribution} (right) shows the spectral distribution of the augmented normalized Laplacian
For comparison, we show in Figure \ref{fig:spectral-distribution} (left) the spectral distribution of the normalized Laplacian for these datasets\footnote{Due to computational resource problems, we cannot compute the spectral distributions for PubMed and Noisy Pubmed.}.

\begin{figure}[t]
  \centering
  \includegraphics[width=0.47
  \linewidth]{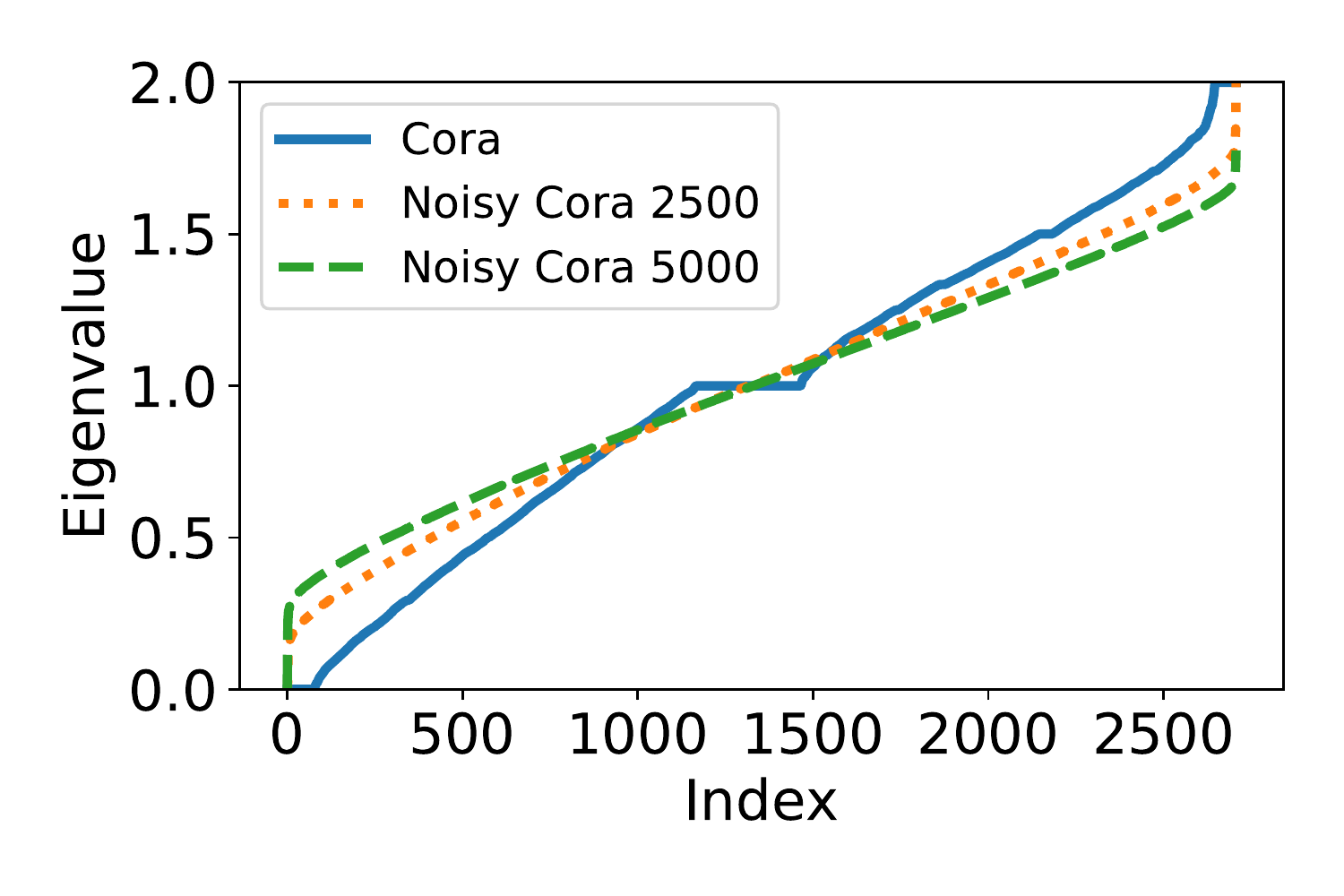}
  \includegraphics[width=0.47
  \linewidth]{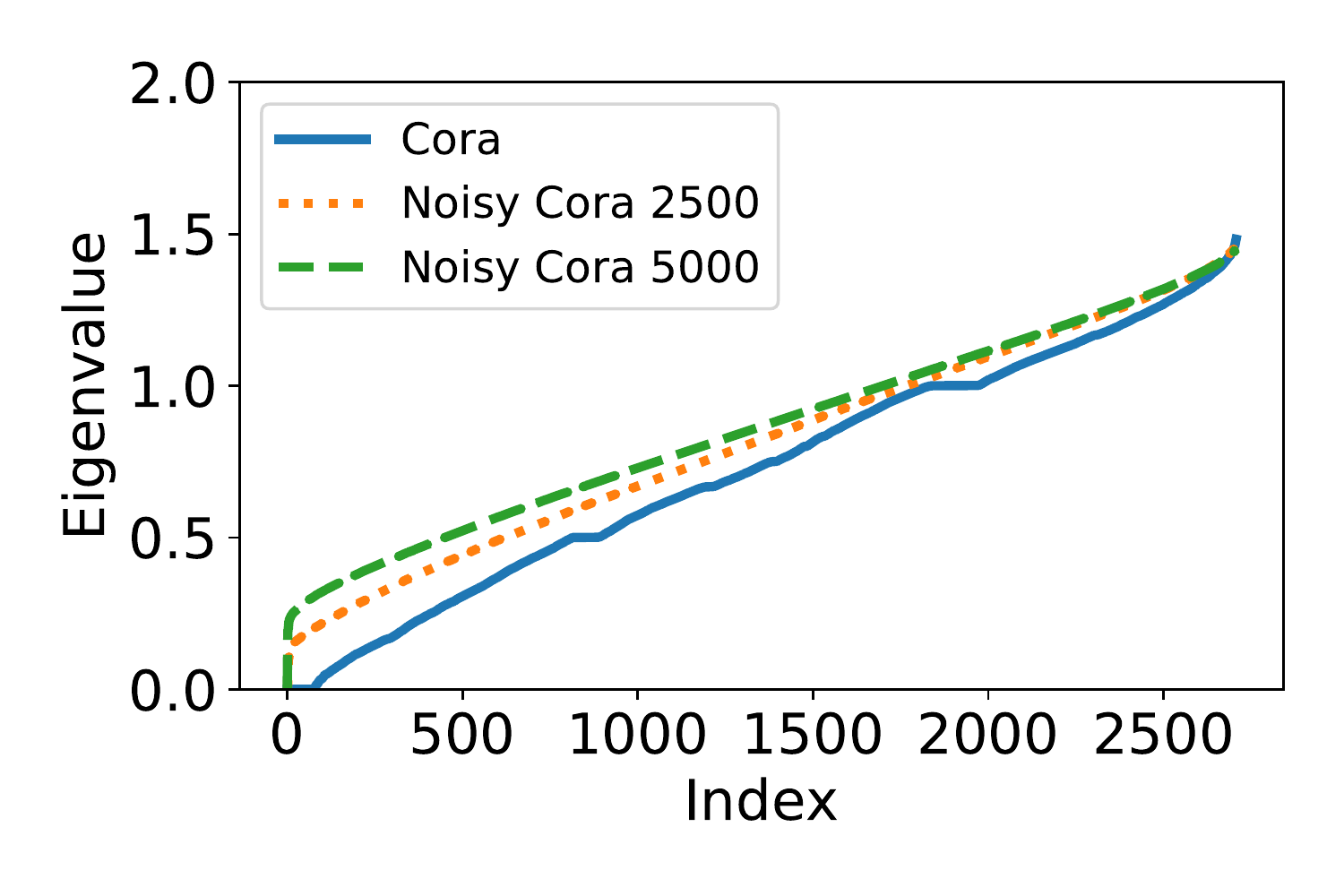}
  \includegraphics[width=0.47\linewidth]{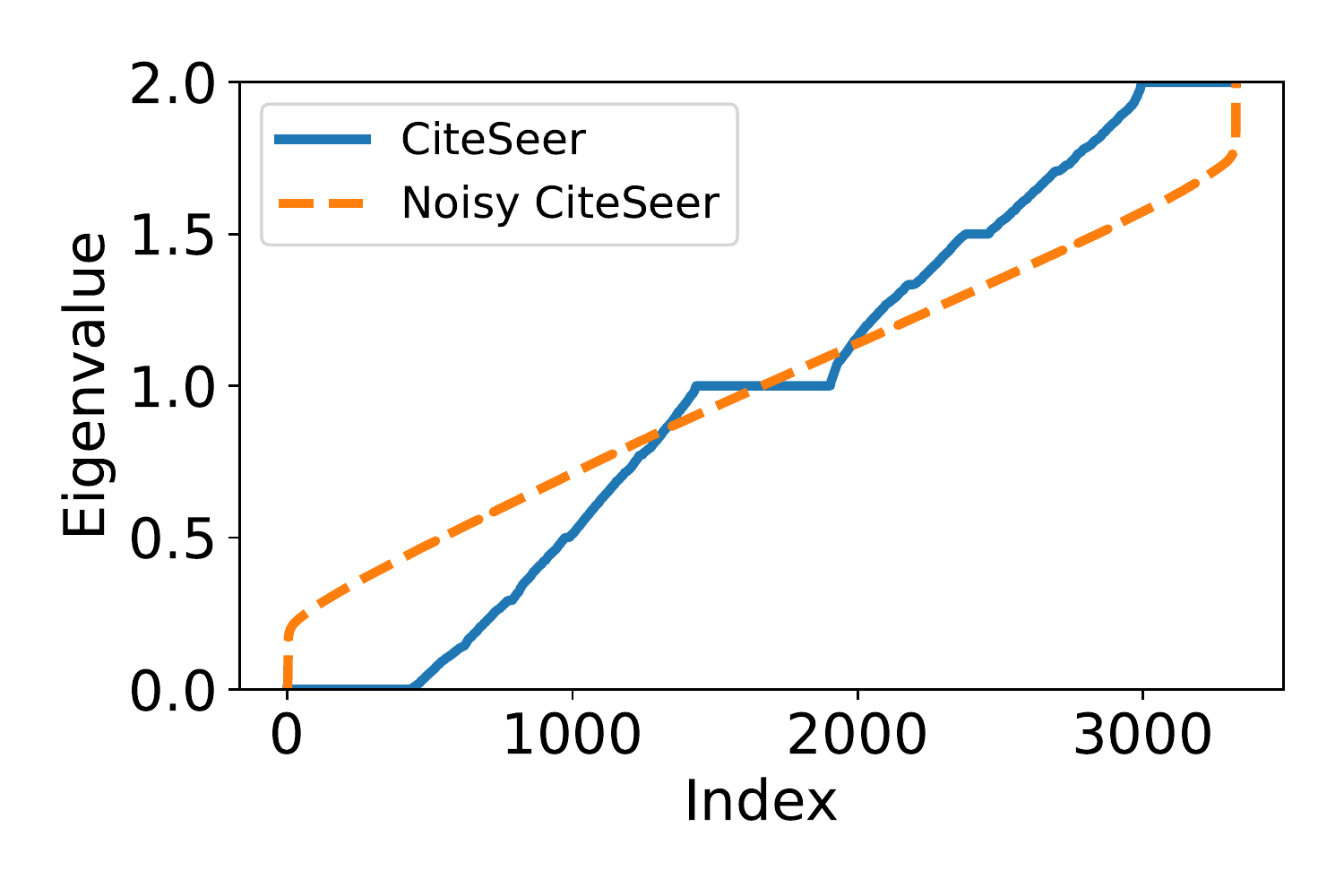}
  \includegraphics[width=0.47\linewidth]{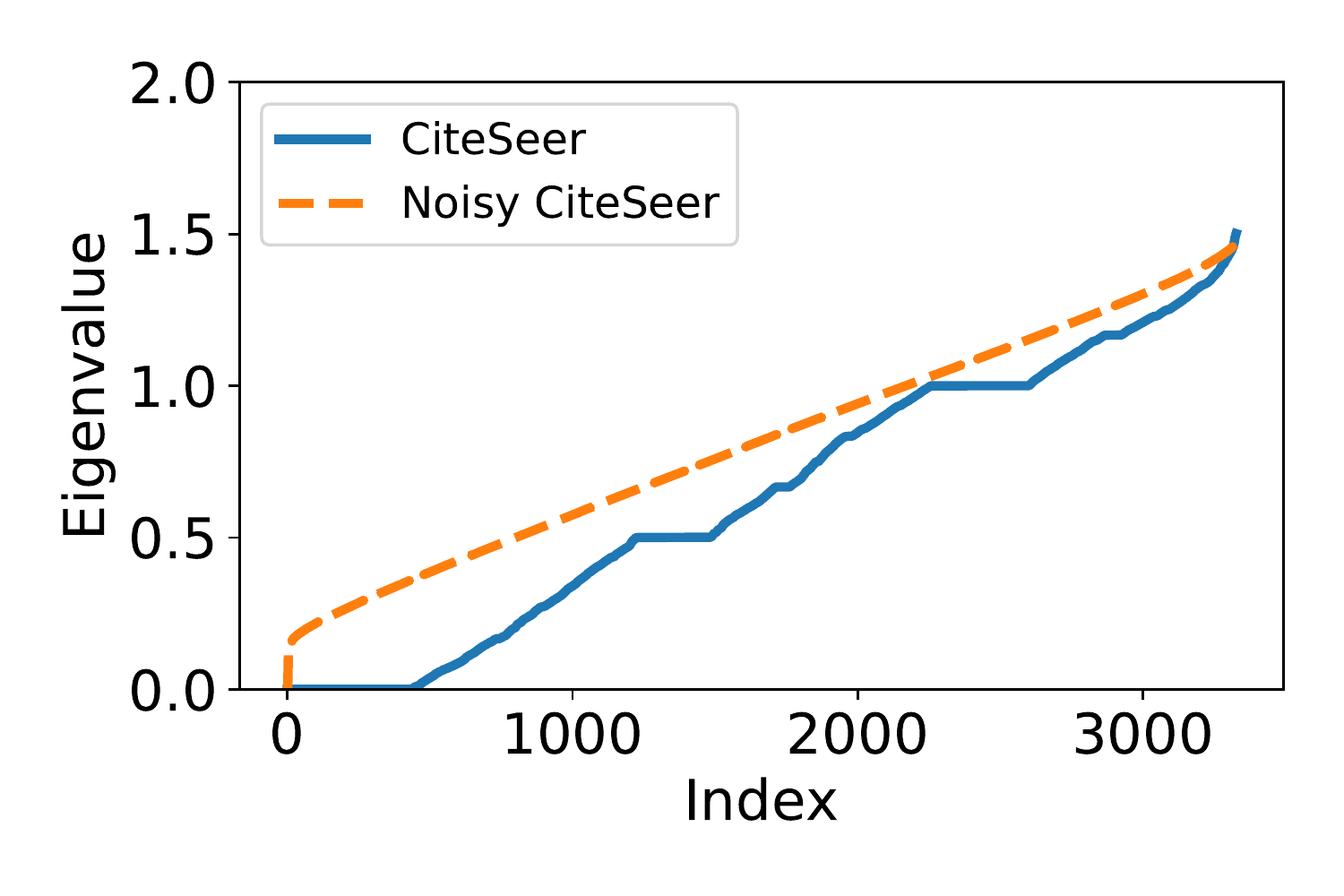}  
  \caption{Spectral distribution of Laplacian for the citation network datasets. Left: normalized Laplacian. Right: augmented normalized Laplacian. Top: Cora and Noisy Cora (2500, 5000). Bottom: CiteSeer and Noisy CiteSeer.}\label{fig:spectral-distribution}
\end{figure}

\subsubsection{Model Architecture}

We used a GCN consisting of a single node embedding layer, one to nine graph convolution layers, and a readout operation~\citep{gilmer17a}, which is a linear transformation common to all nodes in our case.
We applied softmax function to the output of GCN.
The output dimension of GCN is same as the number of classes (i.e., seven for Noisy Cora 2500/5000, six for Noisy CiteSeer, and three for Noisy PubMed).
We treated the number of units in each graph convolution layer as a hyperparameter.
Optionally, we specified the maximum singular values $s$ of graph convolution layers.
The choice of $s$ is either $0.5$ (smaller than $1$), $s_1$ (in the interval $\{1 \leq s < \lambda^{-1}\}$), $3$ and $10$ (larger  than $\lambda^{-1}$).
We used $s_1 = 1.05$ for Noisy Cora 2500, Noisy CiteSeer, and Noisy PubMed, and $s_1 = 1.1$ for Noisy Cora 5000 and Noisy CiteSeer so that $s_1$ is not close to the edges of the the interval $\{1 \leq s < \lambda^{-1}\}$.

\subsubsection{Performance Evaluation Procedure}

We split all nodes in a graph (either Noisy Cora 2500/5000 or Noisy CiteSeer) into training, validation, and test sets.
Data split is the same as the one done by~\cite{kipf2017iclr}.
This is a transductive learning \citep{pan2010survey} setting because we can use node properties of the validation and test data during training.
We trained the model three times for each choice of hyperparemeters using the training set and defined the objective function as the average accuracy on the validation set.
We chose the combination of hyperparameters that achieves the best value of objective function. We evaluate the accuracy of the test dataset three times using the chosen combination of hyperparameters and computed their average and the standard deviation.

\subsubsection{Training}

At initialization, we sampled parameters from the i.i.d. Gaussian distribution.
If the scale of maximum singular values $s$ was specified, we subsequently scaled weight matrices of graph convolution layers so that their maximum singular values were normalized to $s$.
The loss function was defined as the sum of the cross entropy loss for all training nodes.
We train the model using the one of gradient-based optimization methods described in Table \ref{tbl:hyperparameters}.

\subsubsection{Hyperprameters}

\begin{table}
  \caption{Hyperparameters of the experiment in Section \ref{sec:experiment-singular-value}. $X \sim \mathrm{LogUnif}[10^a, 10^b]$ denotes the random variable $\log_{10} X$ obeys the uniform distribution over $[a, b]$. ``Learning rate" corresponds to $\alpha$ when ``Optimization algorithm" is Adam~\citep{kingma2014iclr}.}
  \label{tbl:hyperparameters}
  \centering
  \begin{tabular}{lll}
    \toprule
    Name     & Value \\
    \midrule
    Unit size & $\{10, 20,\ldots, 500\}$ \\
    Epoch & $\{10, 20, \ldots, 100\}$ \\
    Optimization algorithm & $\{\mathrm{SGD}, \mathrm{Momentum SGD}, \mathrm{Adam}\}$ \\
    Learning rate  & $\mathrm{LogUnif}[10^{-5}, 10^{-2}]$ \\
    \bottomrule
  \end{tabular}
\end{table}

Table \ref{tbl:hyperparameters} shows the set of  hyperparameters from which we chose.
Since we compute the representations of all nodes at once at each iteration, each epoch consists of 1 iteration.
We employ Tree-structured Parzen Estimator~\citep{NIPS2011_4443} for hyperparameter optimization.

\subsubsection{Implementation}

We used Chainer Chemistry\footnote{\url{https://github.com/pfnet-research/chainer-chemistry}}, which is an extension library for the deep learning framework Chainer \citep{chainer_learningsys2015,tokui2019chainer}, to implement GCNs and Optuna \citep{optuna} for hyperparameter tuning.
We conducted experiments in a signel machine which has 2 Intel(R) Xeon(R) Gold 6136 CPU@3.00GHz (24 cores), 192 GB memory (DDR4), and 3 GPGPUs (NVIDIA Tesla V100).
Our implementation achieved 68.1\% with Dropout~\citep{JMLR:v15:srivastava14a} (2 graph convolution layers) and 64.2\% without Dropout (1 graph convolution layer) on the test dataset.
These are slightly worse than the accuracy reported in \citet{kipf2017iclr}, but are still comparable with it.


\subsection{Experiment of Section \ref{sec:experiment-tangent}}

The experiment settings are almost same as the experiment in Section \ref{sec:experiment-singular-value}.
The only difference is that we did not train the node embedding layer, which we put before convolution layers of a GCN, while we did in Section \ref{sec:experiment-singular-value}.
This is because we wanted to see the the effect of convolution operations on the perpendicular component of signals, while we interested in the prediction accuracy in real training settings in the previous experiment.
\section{Additional Experiment Results}

\subsection{Experiment of Section \ref{sec:experiment-visualization}}

\begin{figure}[p]
  \centering
  \includegraphics[width=0.49\linewidth]{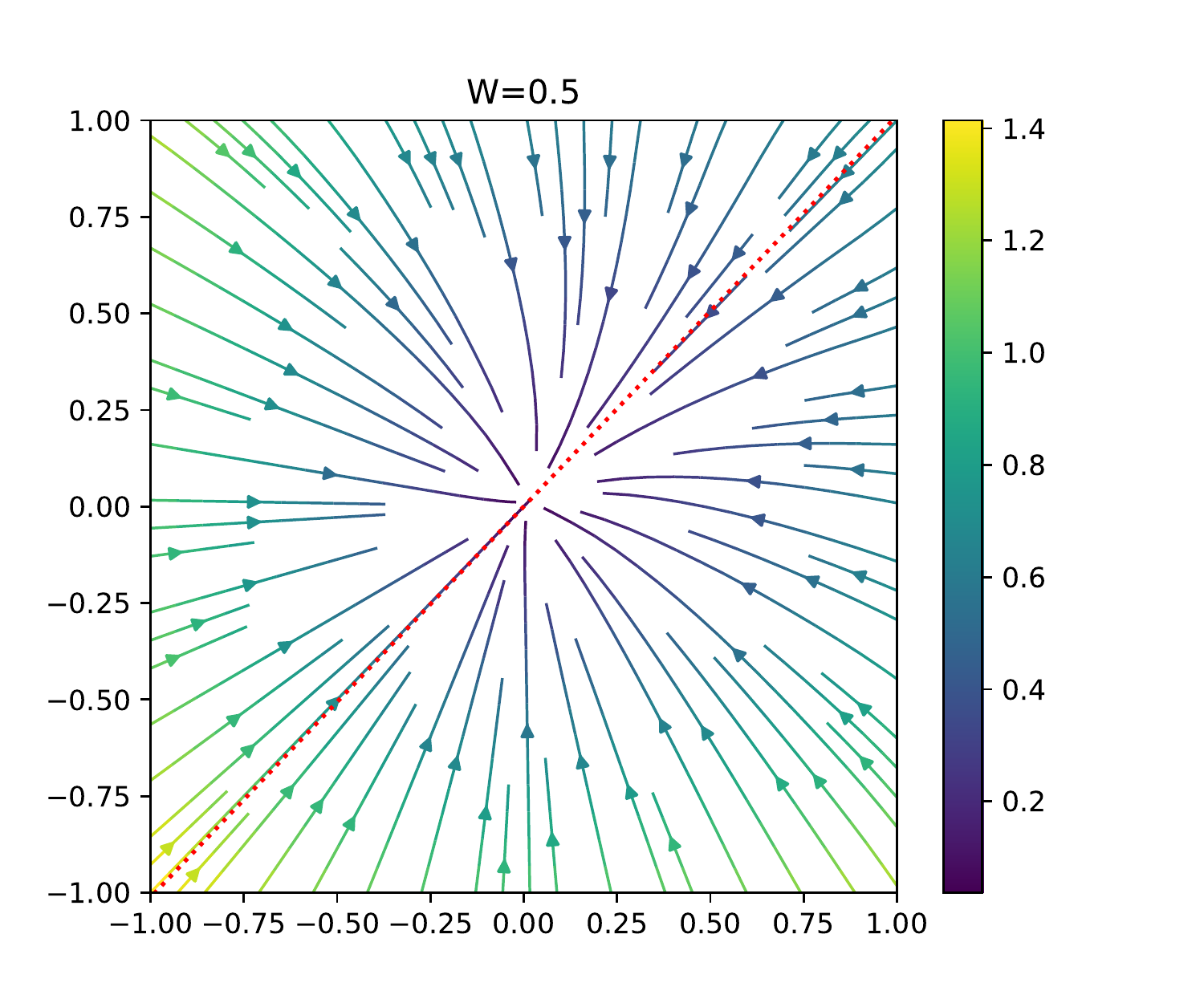}
  \includegraphics[width=0.49\linewidth]{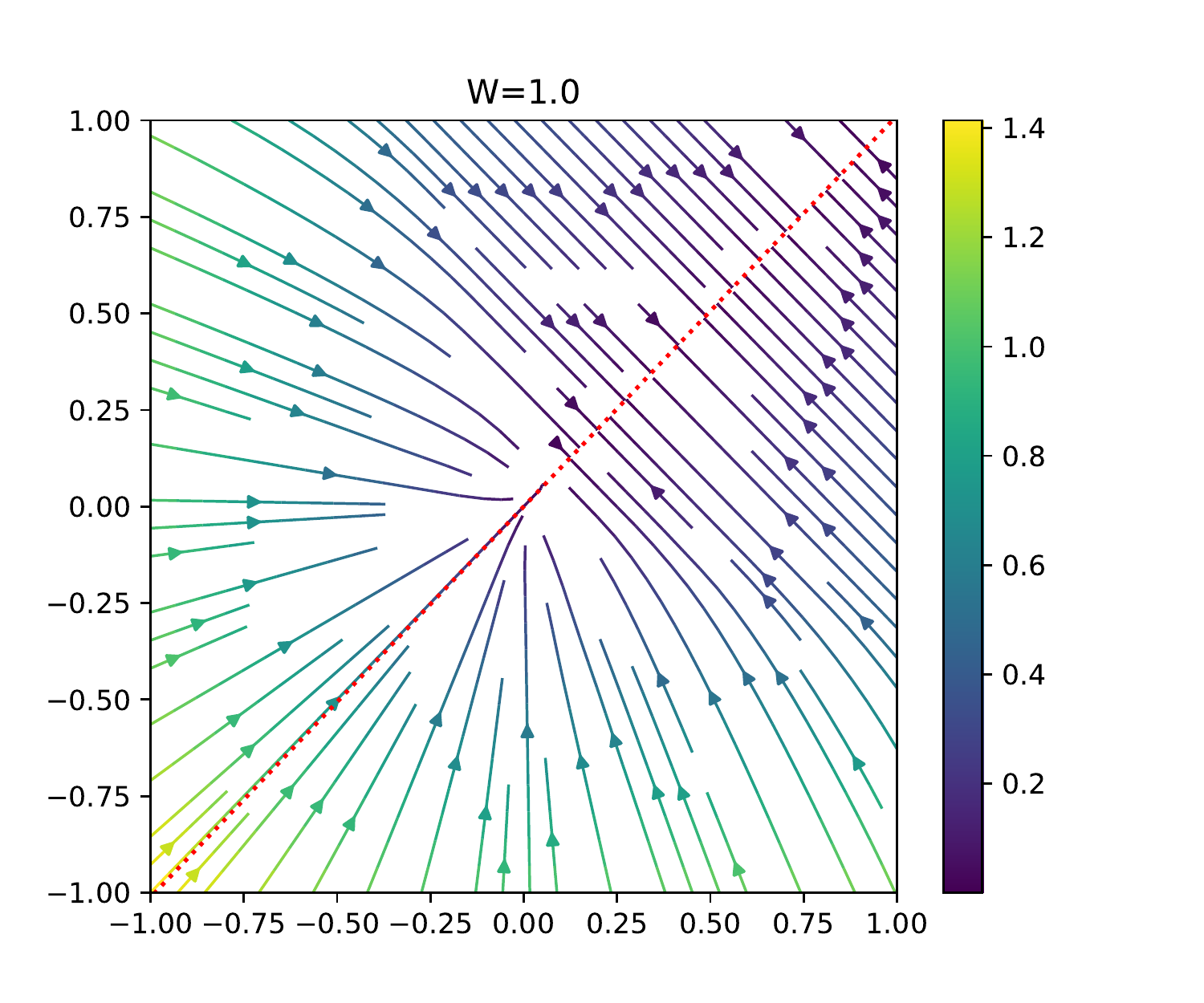}\\
  \includegraphics[width=0.49\linewidth]{image/vector_field/case1/1_2.pdf}
  \includegraphics[width=0.49\linewidth]{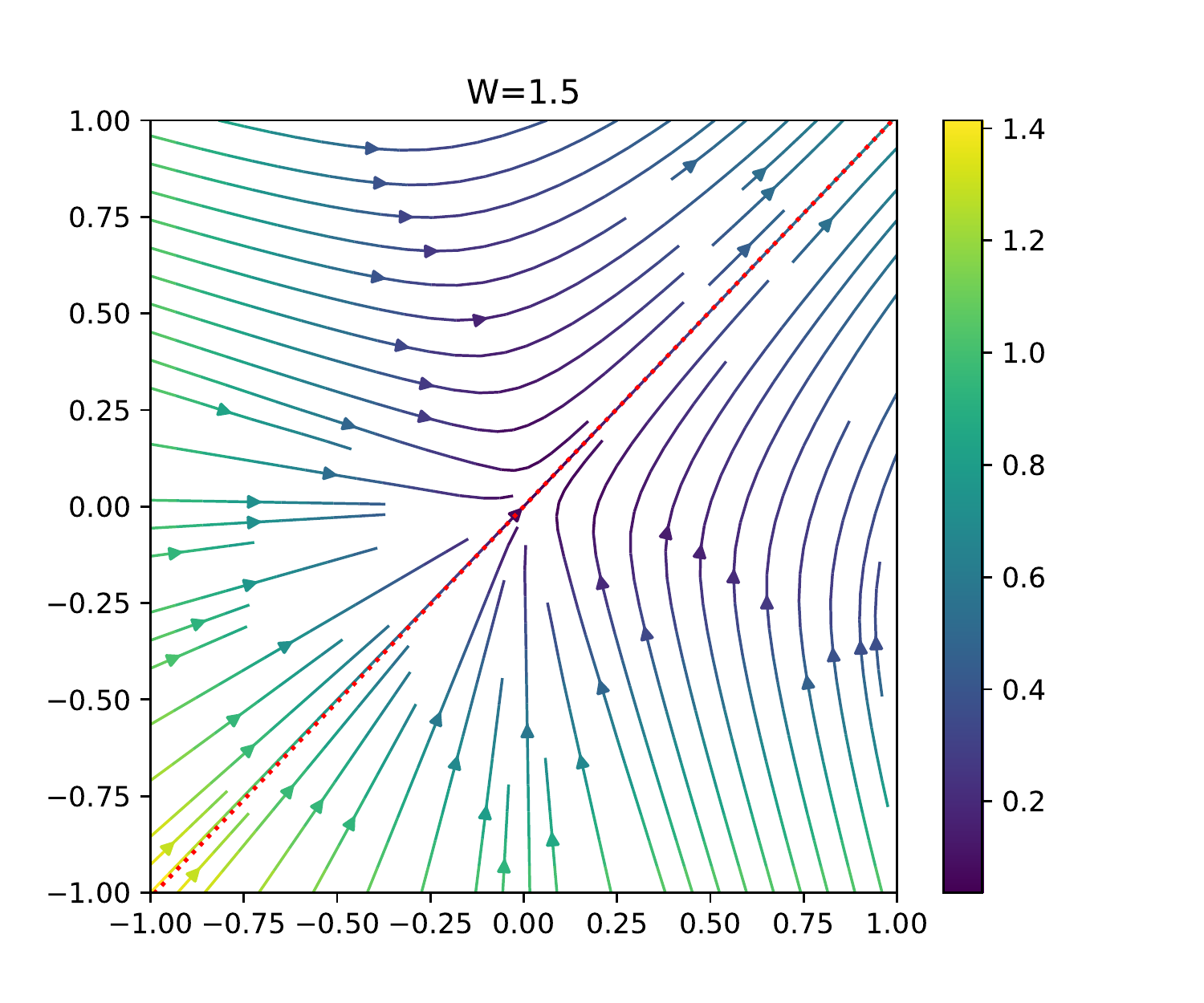}\\
  \includegraphics[width=0.49\linewidth]{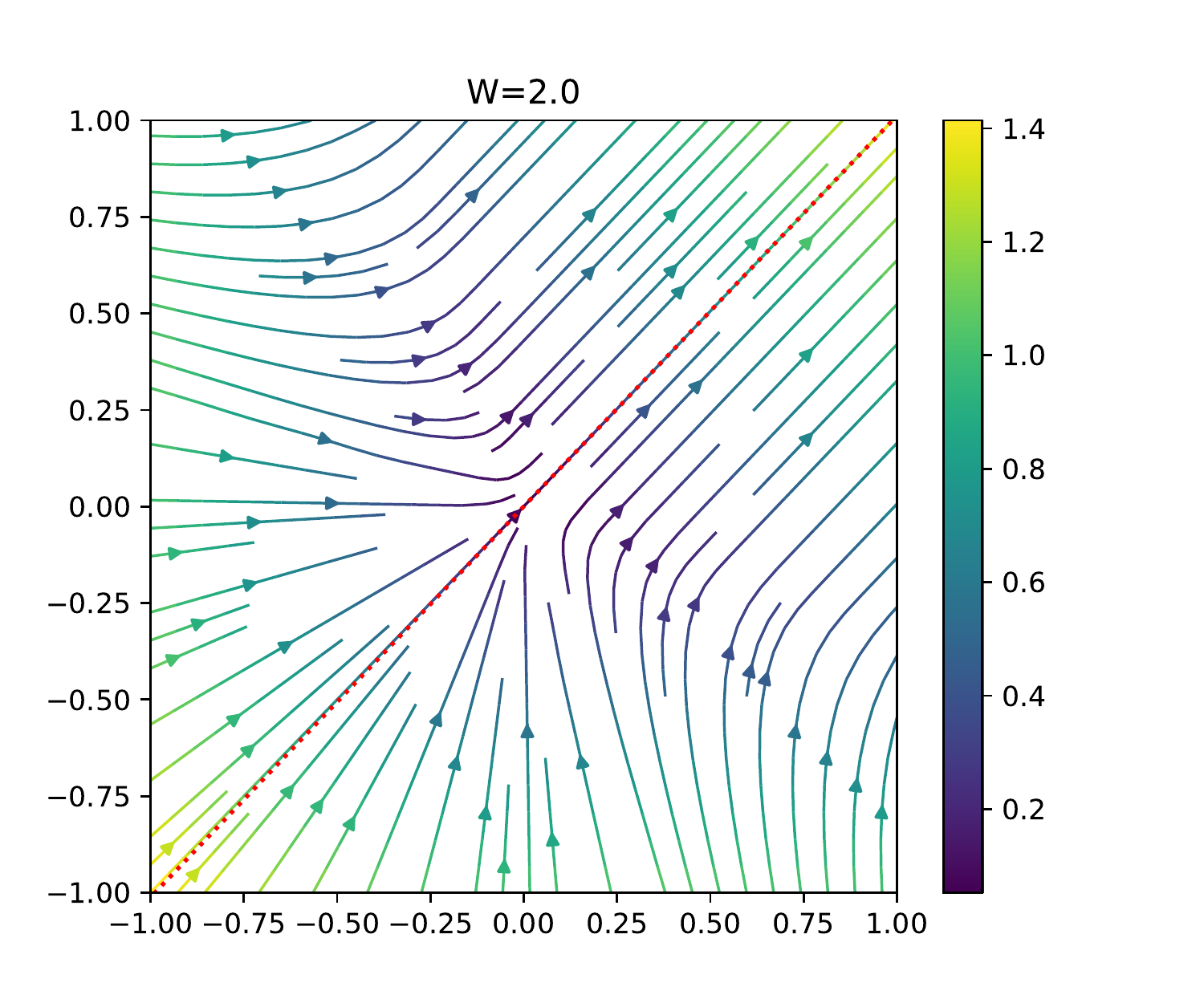}
  \includegraphics[width=0.49\linewidth]{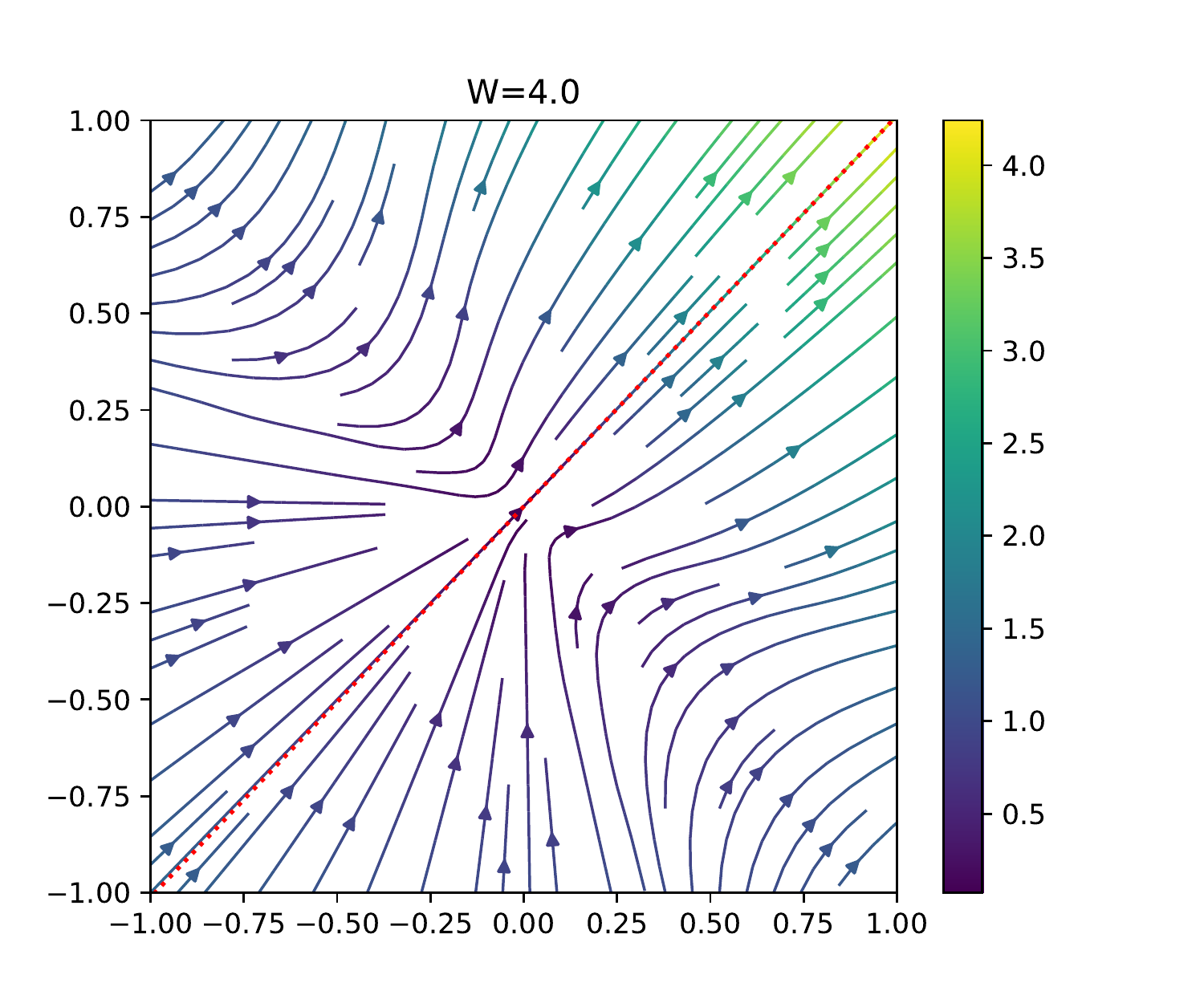}
  \caption{Vector field $V$ for various $W$ for \textbf{Case 1}. Top left: $W=0.5$. Top right: $W=1.0$.
  Middle left: $W=1.2$ (same as Figure \ref{fig:vec-field} in the main article).
  Middle right: $W=1.5$.
  Bottom left: $W=2.0$.
  Bottom right: $W=4.0$. Best view in color.}\label{fig:vec-field-full1}
\end{figure}

\begin{figure}[p]
  \centering
  \includegraphics[width=0.49\linewidth]{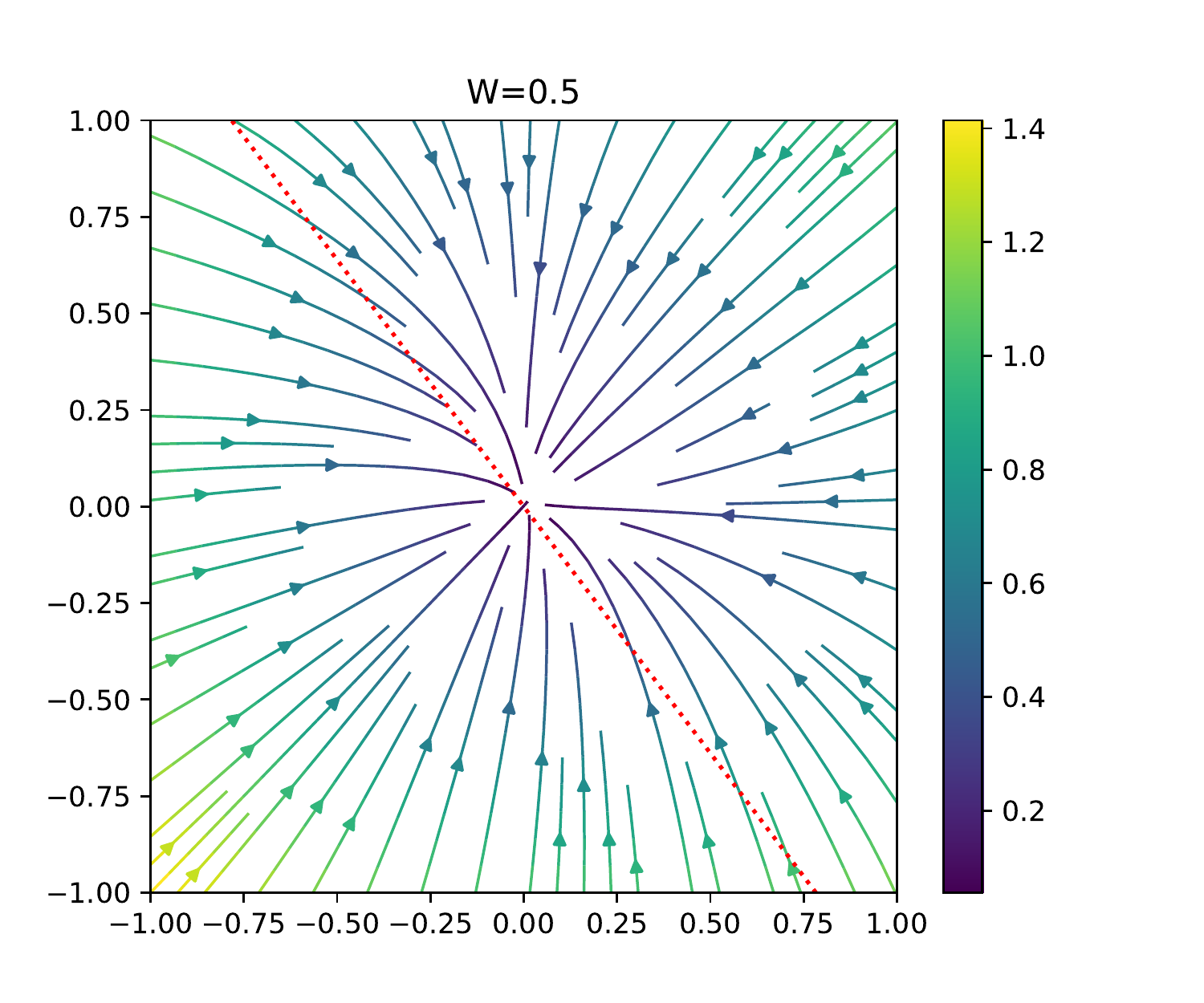}
  \includegraphics[width=0.49\linewidth]{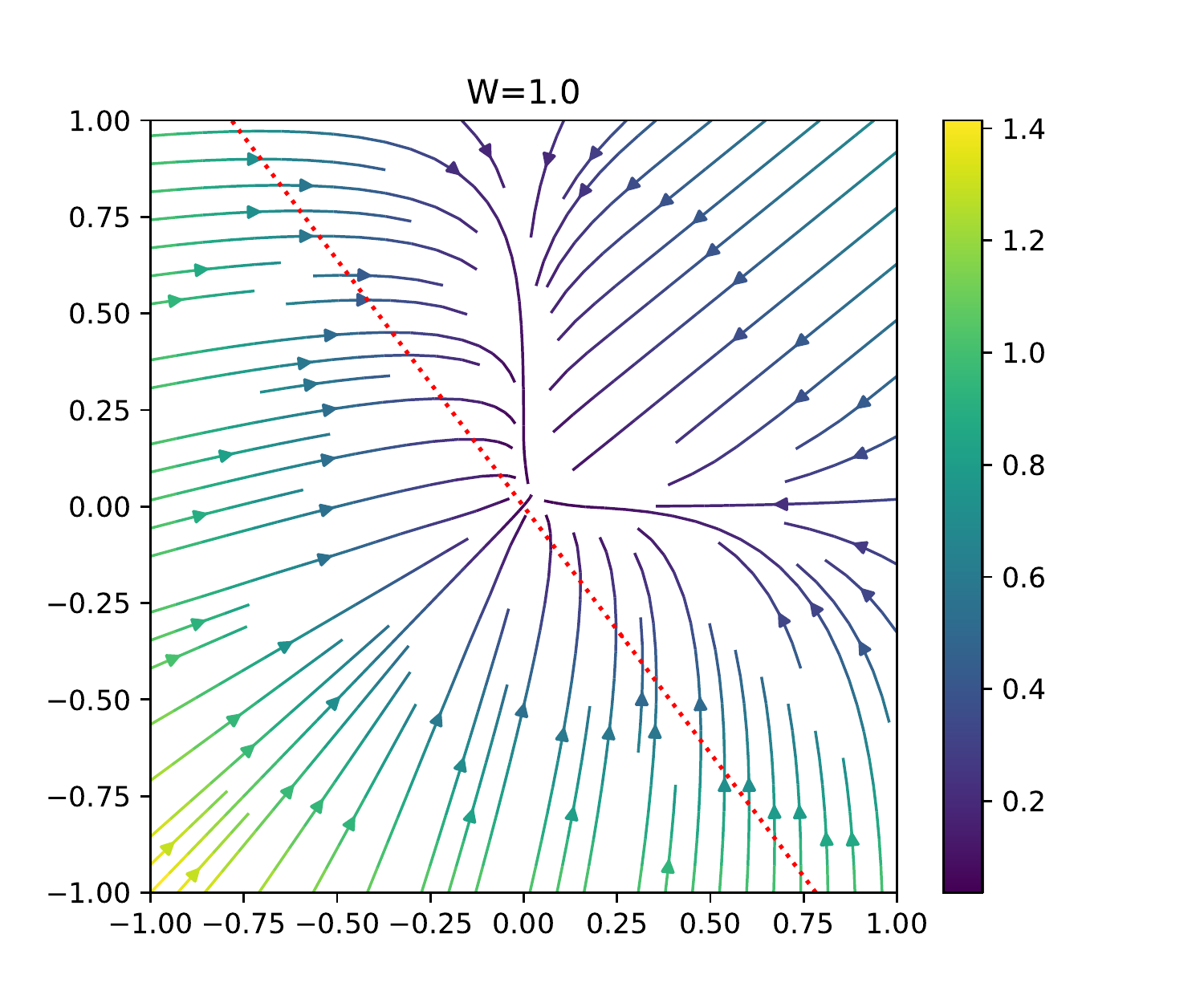}\\
  \includegraphics[width=0.49\linewidth]{image/vector_field/case2/1_2.pdf}
  \includegraphics[width=0.49\linewidth]{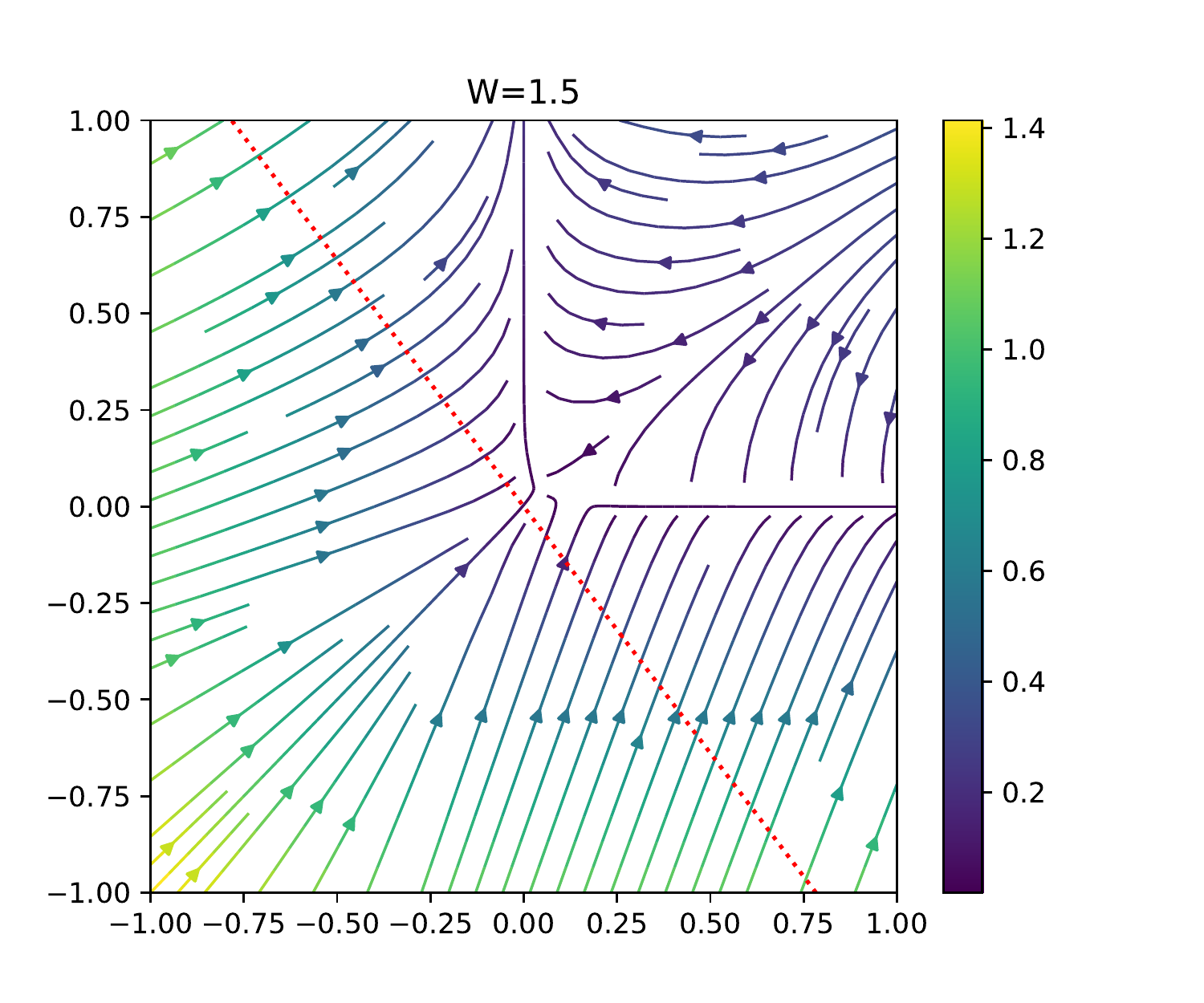}\\
  \includegraphics[width=0.49\linewidth]{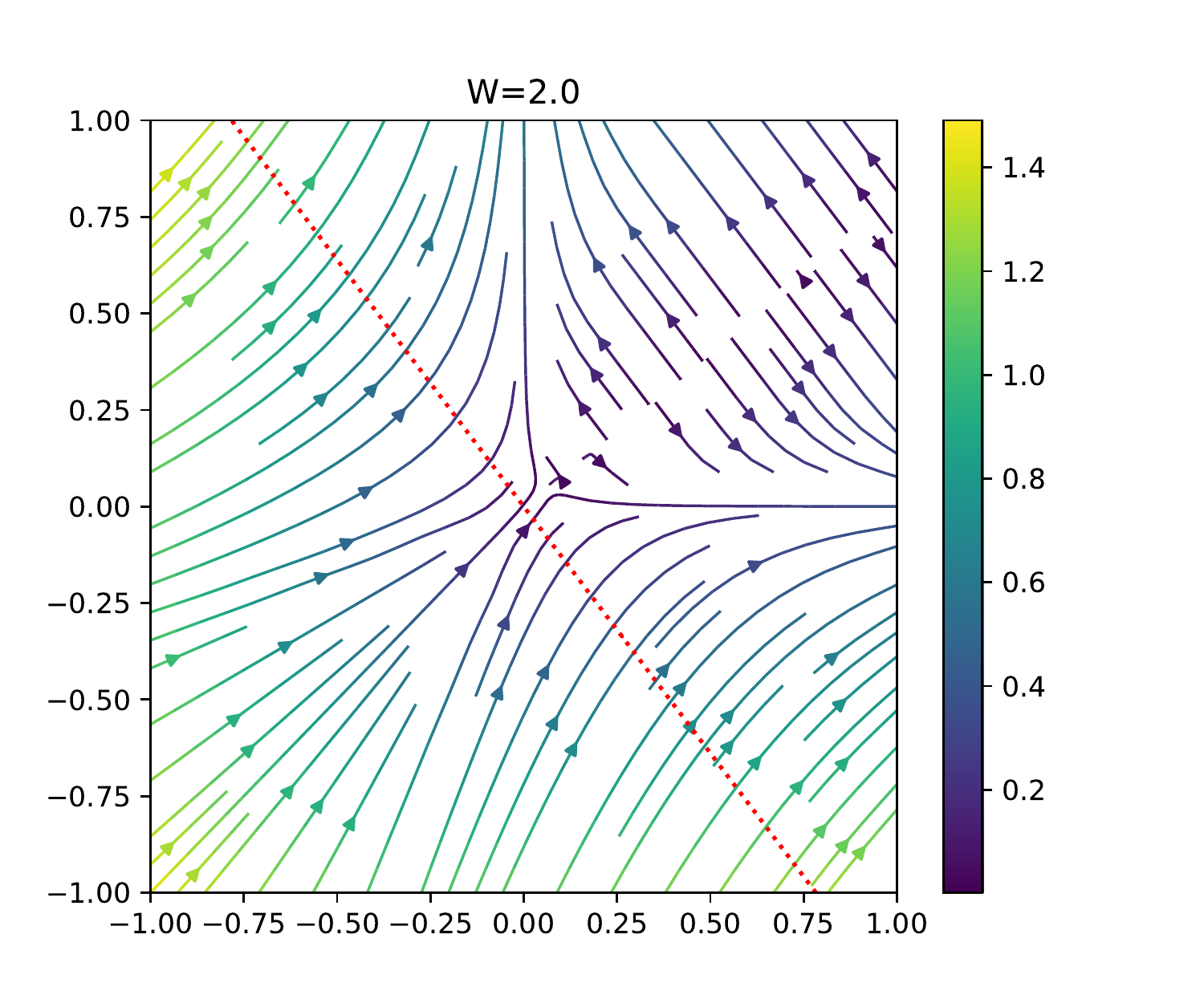}
  \includegraphics[width=0.49\linewidth]{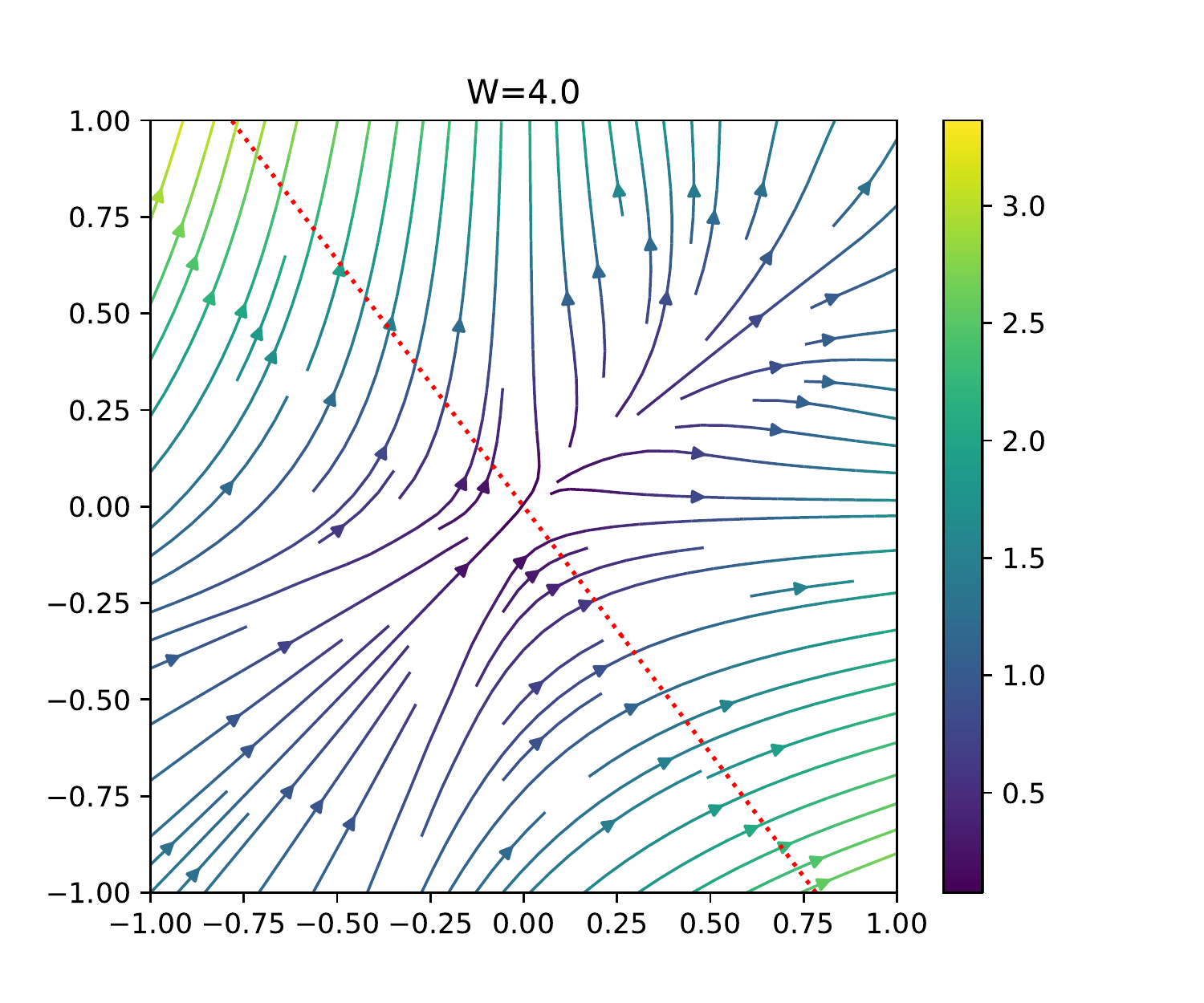}
  \caption{Vector field $V$ for various $W$ for \textbf{Case 2}. Top left: $W=0.5$. Top right: $W=1.0$.
  Middle left: $W=1.2$ (same as Figure \ref{fig:vec-field} in the main article).
  Middle right: $W=1.5$.
  Bottom left: $W=2.0$.
  Bottom right: $W=4.0$. Best view in color.}\label{fig:vec-field-full2}
\end{figure}

We show the vector field $V$ for various $W$
in Figure \ref{fig:vec-field-full1} (\textbf{Case 1}) and Figure \ref{fig:vec-field-full2} (\textbf{Case 2}).
Parameters other than $W$ are same as experiments in Section \ref{sec:experiment-visualization} (detail values are available in Appendix \ref{sec:experiment-visualization-detail}).

\subsection{Experiment of Section \ref{sec:experiment-synthesis-data}}

Figure \ref{fig:distance-comparison-all} shows the relative log distance of signals and their upper bound
for various edge probability $p$ and the maximum singular value $s$.
Note that we generate a new graph for each configuration of $(p, s)$.
Therefore, different configurations may have different graphs and hence different $\lambda$ even they have a same edge probability $p$ in common.

\begin{figure}[t]
  \centering
    \includegraphics[width=0.97\linewidth]{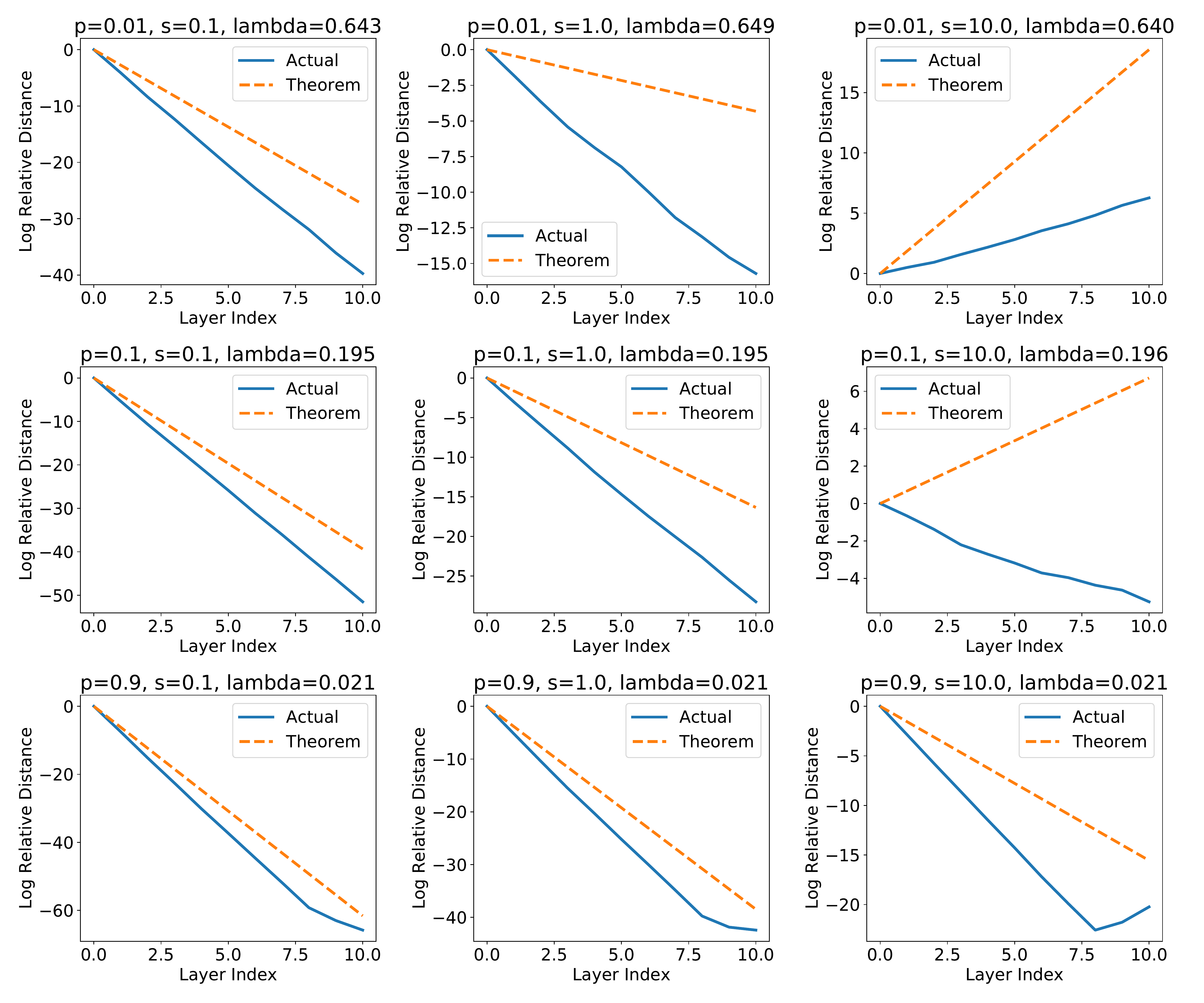}    
  \caption{The actual distance $d_\mathcal{M}$ to the invariant space $\mathcal{M}$ and the upper bound inferred by Theorem \ref{thm:convergence-to-invariant-subspace}. The edge probability $p$ takes $0.01\mathrm{(top)}, 0.1, 0.9 \mathrm{(bottom)}$ and the maximum singular value $s$ takes $0.1\mathrm{(left)}, 1.0, 10 \mathrm{(right)}$.
  Blue lines are the log relative distance defined by $y(l):= \log \frac{d_\mathcal{M}(X^{(l)})}{d_\mathcal{M}(X^{(0)})}$ and orange dotted lines are upper bound $y(l):= l\log (s\lambda)$, where $X^{(0)}$ is the input signal and $X^{(l)}$ is the output of the $l$-th layer. Best view in color.}\label{fig:distance-comparison-all}
\end{figure}

\subsection{Experiment of Section \ref{sec:experiment-singular-value}}\label{sec:experiment-normalization-detail}

\subsubsection{Predictive Accuracy} \label{sec:experiment-normalization-detail-citeseer}

Figure \ref{fig:accuracy-noisy-cora-5000-noisy-citeseer} shows the comparison of predictive performance in terms the maximum singular value and layer size when the dataset is Noisy Cora 5000 (left) and Noisy Citeseer (right), respectively.
Concrete values are available in Table \ref{tbl:accuracy}.

\begin{figure}[t]
  \centering
    \includegraphics[width=0.94\linewidth]{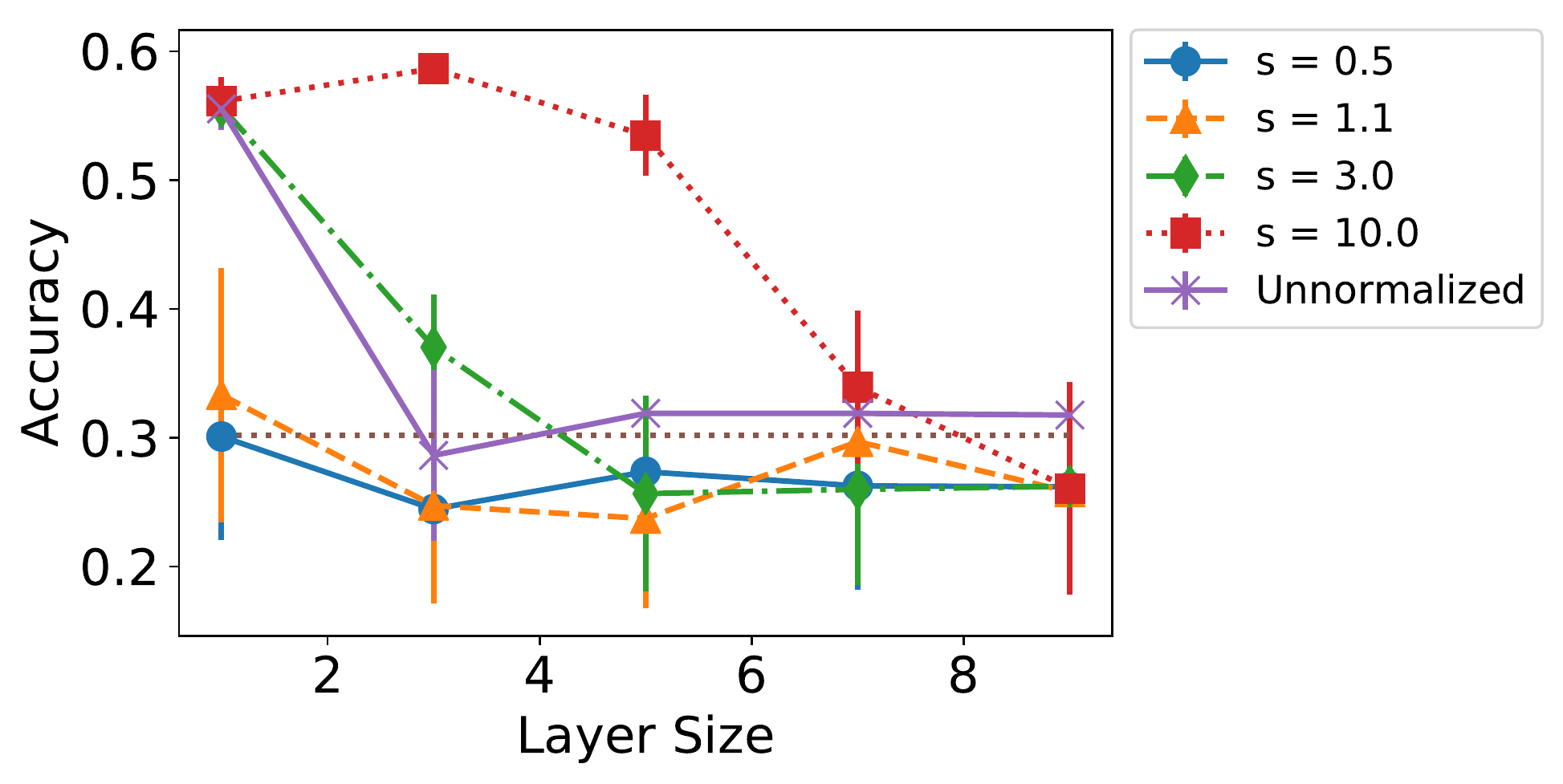}\\
    \includegraphics[width=0.94\linewidth]{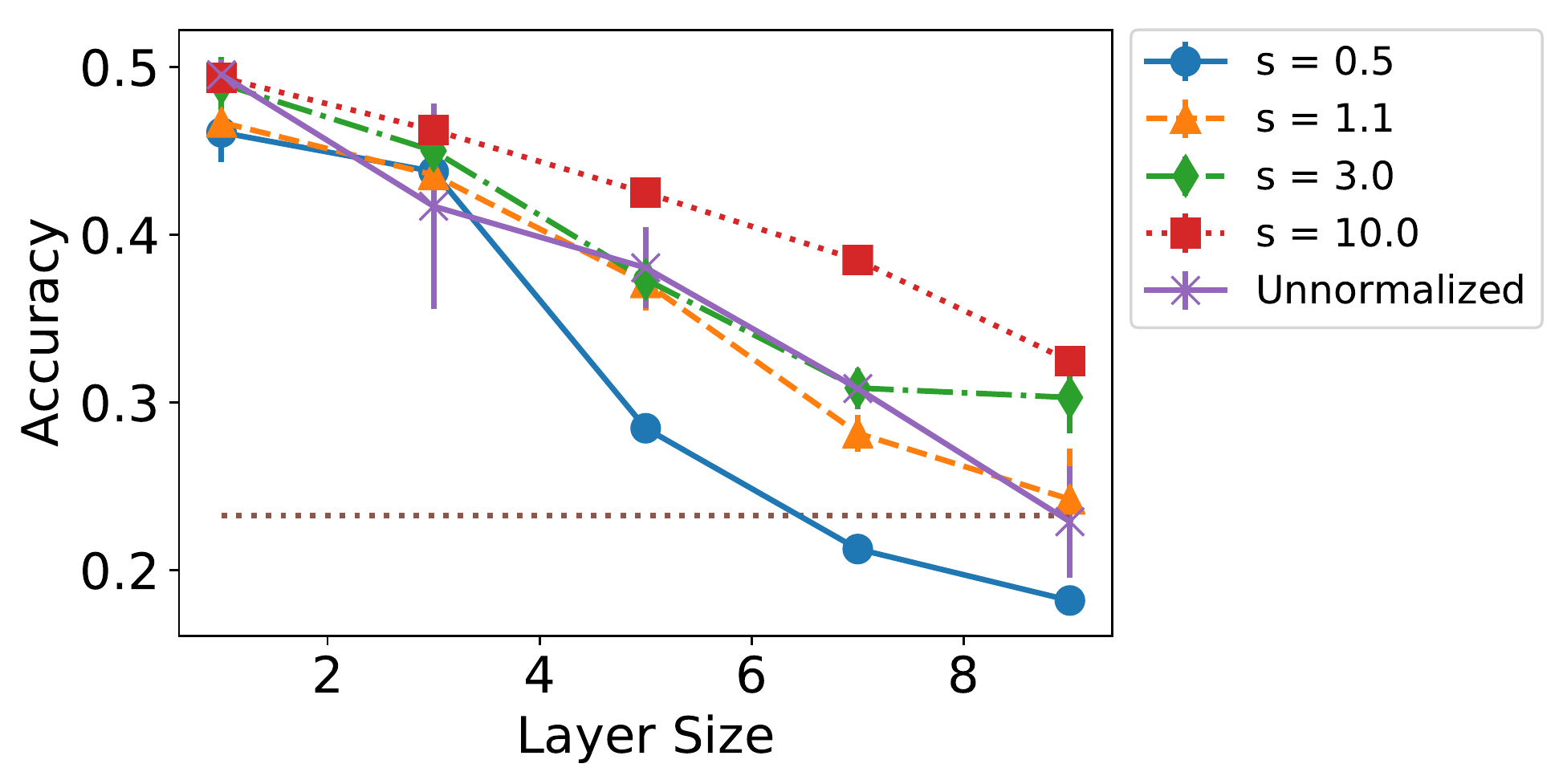}\\
    \includegraphics[width=0.94\linewidth]{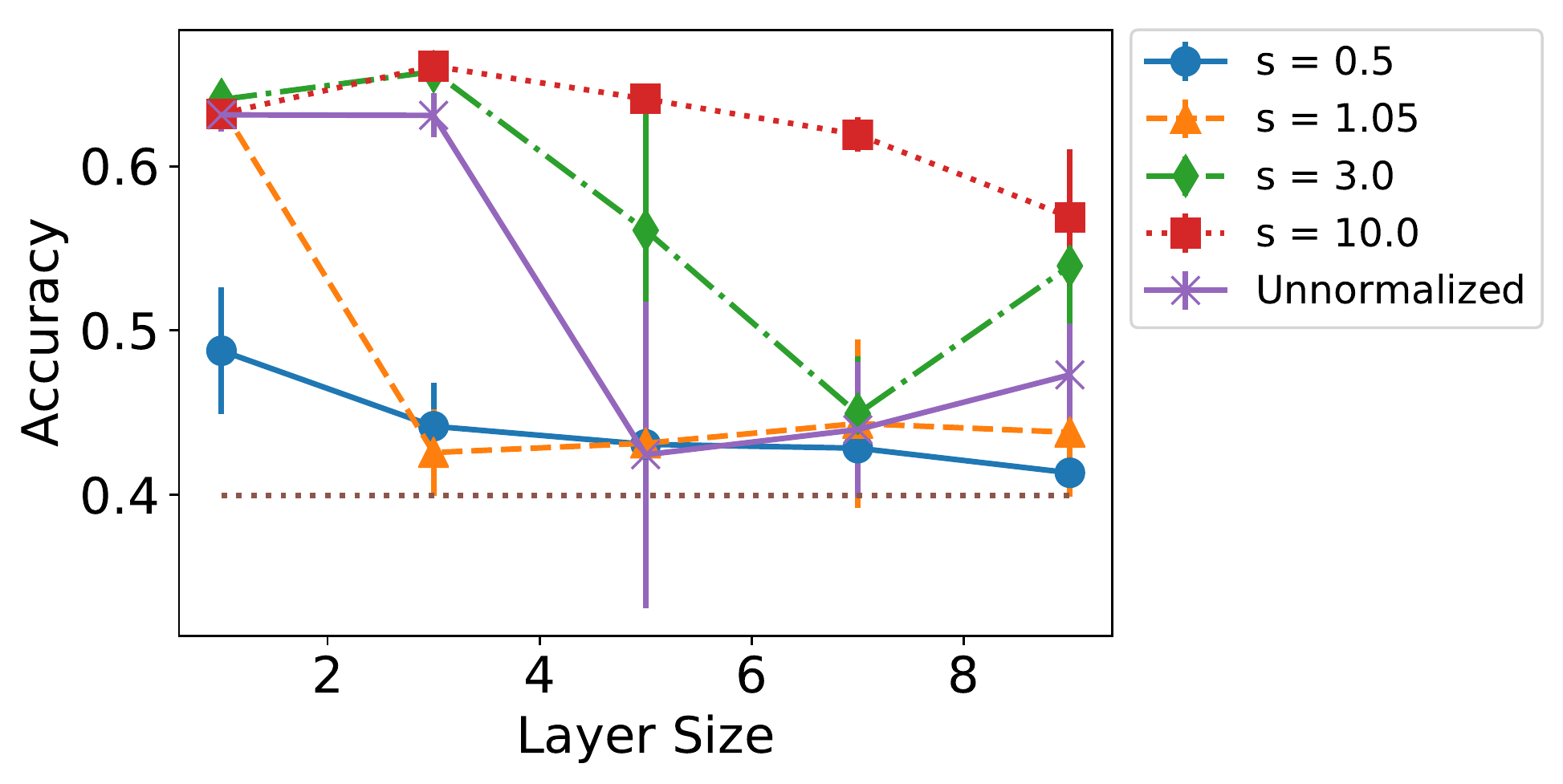}    
  \caption{Effect of the maximum singular values of weights on predictive performance. Horizontal dotted lines indicate the chance rates (30.2\% for Noisy Cora 5000, 21.2\% for Noisy CiteSeer, and 39.9\% for Noisy PubMed). The error bar is the standard deviation of 3 trials. Left: Noisy Cora 5000. Right: Noisy CiteSeer. Bottom: Noisy Pubmed. Best view in color.}\label{fig:accuracy-noisy-cora-5000-noisy-citeseer}
\end{figure}

\begin{table}[p]
  \caption{Comparison of performance in terms of maximum singular value of weights and layer size. ``U" in the right most column indicates the accuracy of GCN without weight normalization.}
  \label{tbl:accuracy}
  \centering
  Noisy Cora 2500\\
  \begin{tabular}{rlllll}
    \toprule
    & \multicolumn{5}{c}{Maximum Singular Value}\\
    \cmidrule(r){2-6}
    Depth & 1& 1.05 & 3& 10& U \\
    \midrule
1&$0.389\pm 0.101$&$0.429\pm 0.090$&$0.552\pm 0.014$&$0.632\pm 0.007$&$0.587\pm 0.008$\\
3&$0.273\pm 0.051$&$0.309\pm 0.017$&$0.580\pm 0.058$&$0.661\pm 0.003$&$0.494\pm 0.041$\\
5&$0.319\pm 0.000$&$0.267\pm 0.059$&$0.462\pm 0.065$&$0.602\pm 0.004$&$0.326\pm 0.029$\\
7&$0.261\pm 0.076$&$0.262\pm 0.080$&$0.407\pm 0.021$&$0.501\pm 0.017$&$0.279\pm 0.129$\\
9&$0.261\pm 0.080$&$0.319\pm 0.000$&$0.284\pm 0.109$&$0.443\pm 0.014$&$0.319\pm 0.000$\\
    \bottomrule
  \end{tabular}  
  \\
  \vspace{1em}
  Noisy Cora 5000\\
  \begin{tabular}{rlllll}
    \toprule
    & \multicolumn{5}{c}{Maximum Singular Value}\\
    \cmidrule(r){2-6}
    Depth & 1& 1.1& 3& 10& U \\
    \midrule
1&$0.301\pm 0.080$&$0.333\pm 0.099$&$0.557\pm 0.004$&$0.561\pm 0.019$&$0.555\pm 0.016$\\
3&$0.245\pm 0.066$&$0.247\pm 0.076$&$0.370\pm 0.041$&$0.587\pm 0.009$&$0.286\pm 0.066$\\
5&$0.274\pm 0.048$&$0.237\pm 0.070$&$0.257\pm 0.076$&$0.535\pm 0.031$&$0.319\pm 0.000$\\
7&$0.263\pm 0.080$&$0.297\pm 0.031$&$0.260\pm 0.074$&$0.339\pm 0.060$&$0.319\pm 0.000$\\
9&$0.262\pm 0.081$&$0.258\pm 0.064$&$0.262\pm 0.080$&$0.261\pm 0.082$&$0.318\pm 0.002$\\
    \bottomrule
  \end{tabular}
    \\
  \vspace{1em}
    Noisy CiteSeer\\
  \begin{tabular}{rlllll}
    \toprule
    & \multicolumn{5}{c}{Maximum Singular Value}\\
    \cmidrule(r){2-6}    
    Depth & 0.5& 1.1& 3& 10& U \\
    \midrule
1&$0.461\pm 0.018$&$0.467\pm 0.012$&$0.490\pm 0.016$&$0.494\pm 0.006$&$0.495\pm 0.009$\\
3&$0.438\pm 0.027$&$0.436\pm 0.010$&$0.450\pm 0.019$&$0.462\pm 0.007$&$0.417\pm 0.061$\\
5&$0.285\pm 0.008$&$0.371\pm 0.016$&$0.373\pm 0.011$&$0.425\pm 0.007$&$0.380\pm 0.024$\\
7&$0.213\pm 0.006$&$0.282\pm 0.011$&$0.309\pm 0.012$&$0.385\pm 0.007$&$0.308\pm 0.012$\\
9&$0.182\pm 0.005$&$0.242\pm 0.030$&$0.303\pm 0.021$&$0.325\pm 0.003$&$0.229\pm 0.033$\\
    \bottomrule
  \end{tabular}
    \\
      \vspace{1em}
    Noisy Pubmed\\
  \begin{tabular}{rlllll}
    \toprule
    & \multicolumn{5}{c}{Maximum Singular Value}\\
    \cmidrule(r){2-6}    
    Depth & 0.5& 1.1& 3& 10& U \\
    \midrule
1&$0.488\pm 0.039$&$0.636\pm 0.006$&$0.641\pm 0.010$&$0.632\pm 0.002$&$0.631\pm 0.010$\\
3&$0.442\pm 0.027$&$0.426\pm 0.026$&$0.658\pm 0.004$&$0.661\pm 0.005$&$0.631\pm 0.013$\\
5&$0.431\pm 0.033$&$0.431\pm 0.034$&$0.561\pm 0.083$&$0.641\pm 0.004$&$0.424\pm 0.093$\\
7&$0.428\pm 0.032$&$0.443\pm 0.051$&$0.449\pm 0.035$&$0.619\pm 0.011$&$0.440\pm 0.041$\\
9&$0.413\pm 0.009$&$0.438\pm 0.039$&$0.539\pm 0.052$&$0.569\pm 0.042$&$0.473\pm 0.031$\\
    \bottomrule
  \end{tabular}

\end{table}

\subsubsection{Transition of Maximum Singular Values}

Figure \ref{fig:maximum-singular-value-trainsition-detail-noisy-cora-2500} -- \ref{fig:maximum-singular-value-trainsition-detail-noisy-pubmed} show the transition of weight of graph convolution layers during training when the dataset is Noisy Cora 2500, Noisy Cora 5000, and Noisy CiteSeer, respectively. We note that the result of 3-layered GCN from the Noisy Cora 2500 is identical to Figure \ref{fig:noisy-cora} (right) of the main article.

\begin{figure}[p]
  \centering
  Noisy Cora 2500\\
  \includegraphics[width=0.49\linewidth]{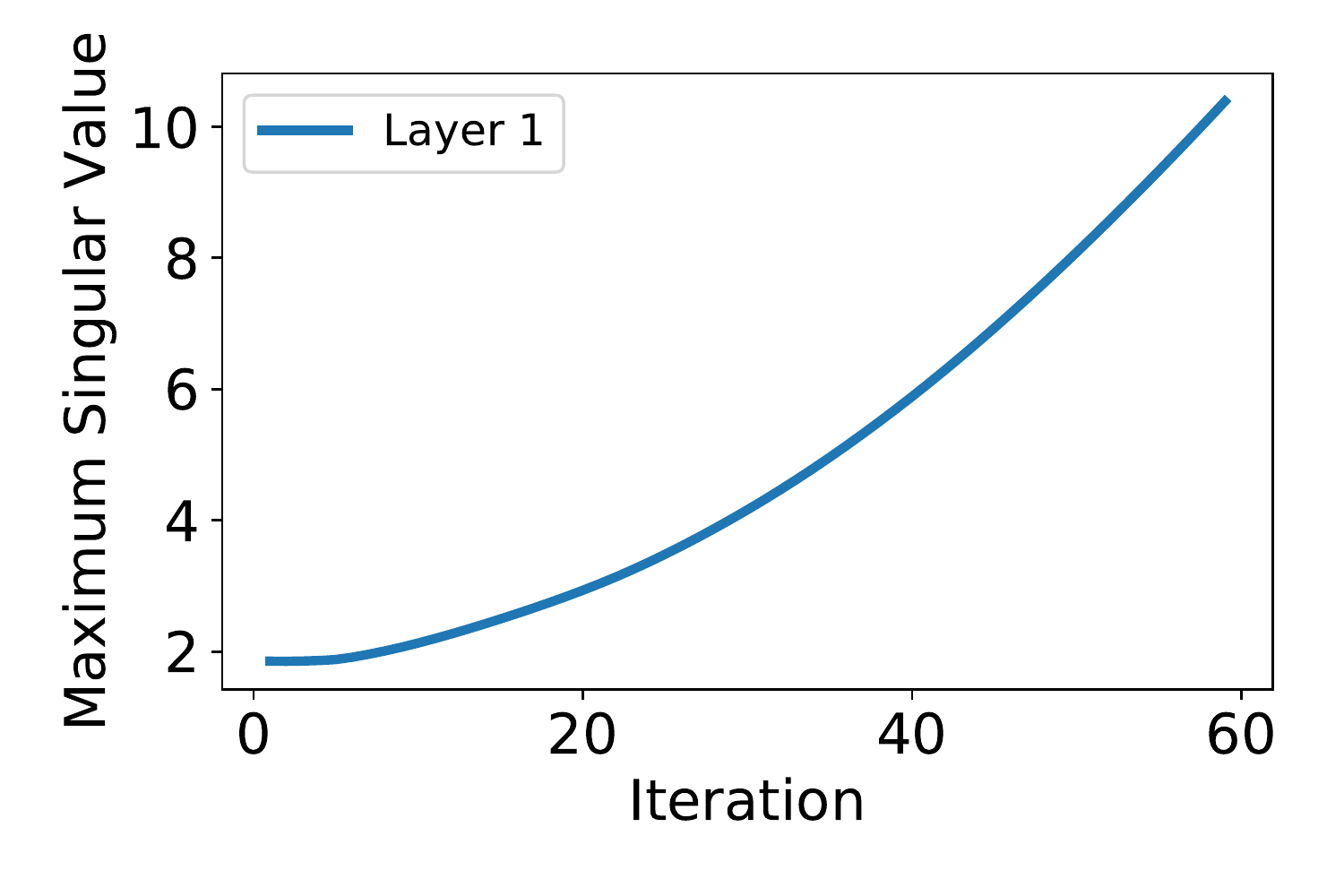}
  \includegraphics[width=0.49\linewidth]{image/normalization/noisy-cora-2500/singular-value/noisy-cora-2500-3-layer.pdf}\\
  \includegraphics[width=0.49\linewidth]{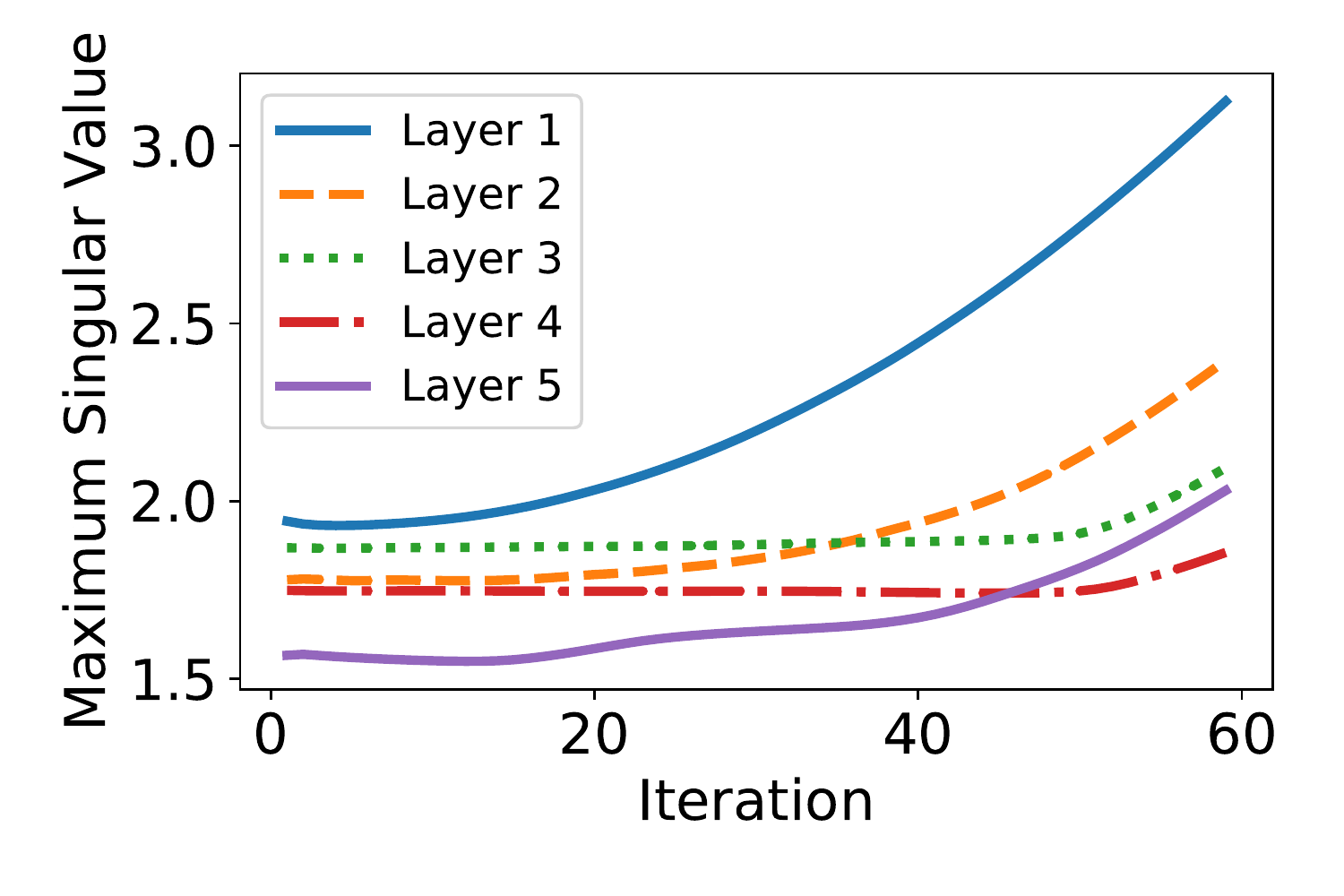}
  \includegraphics[width=0.49\linewidth]{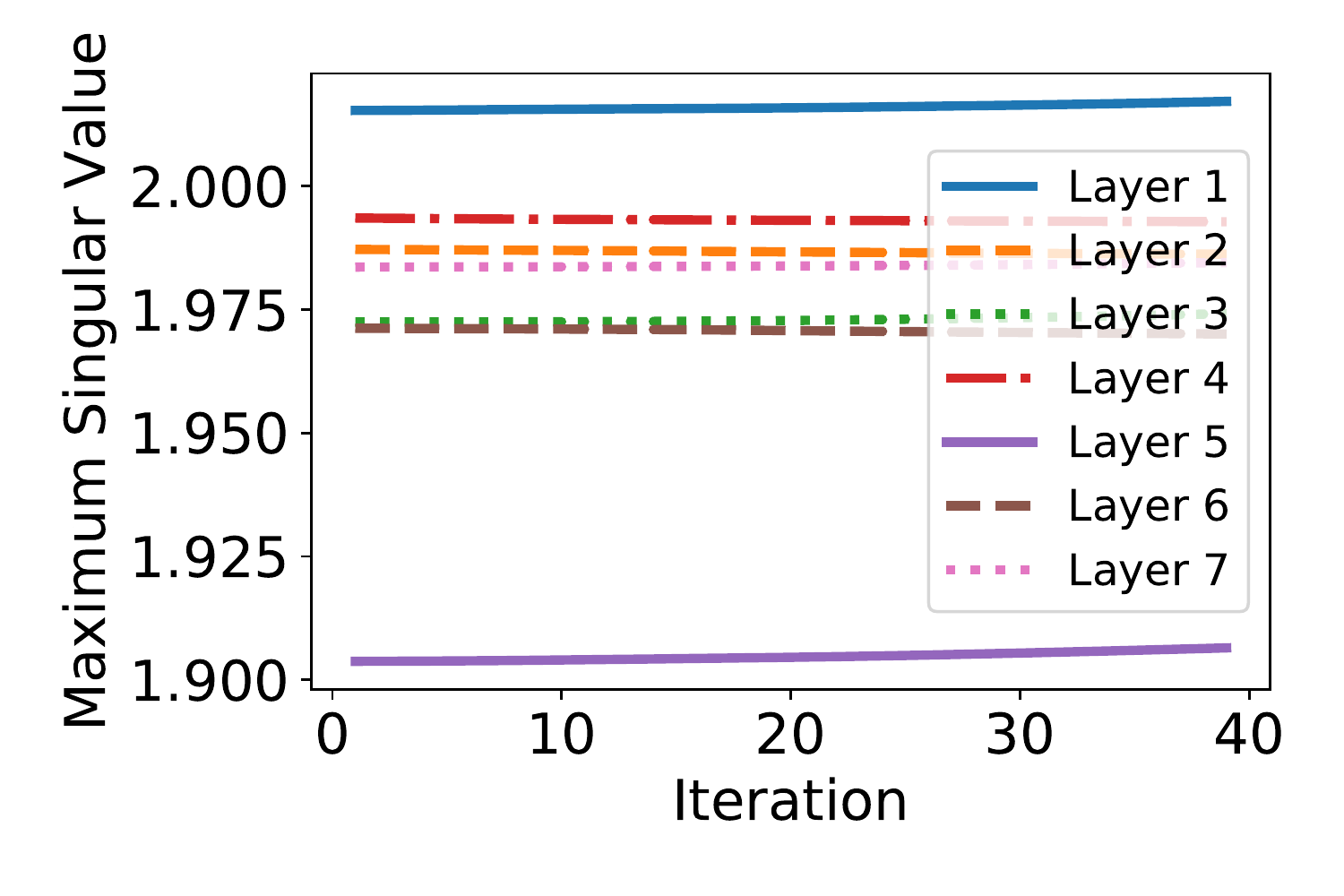}\\
  \includegraphics[width=0.49\linewidth]{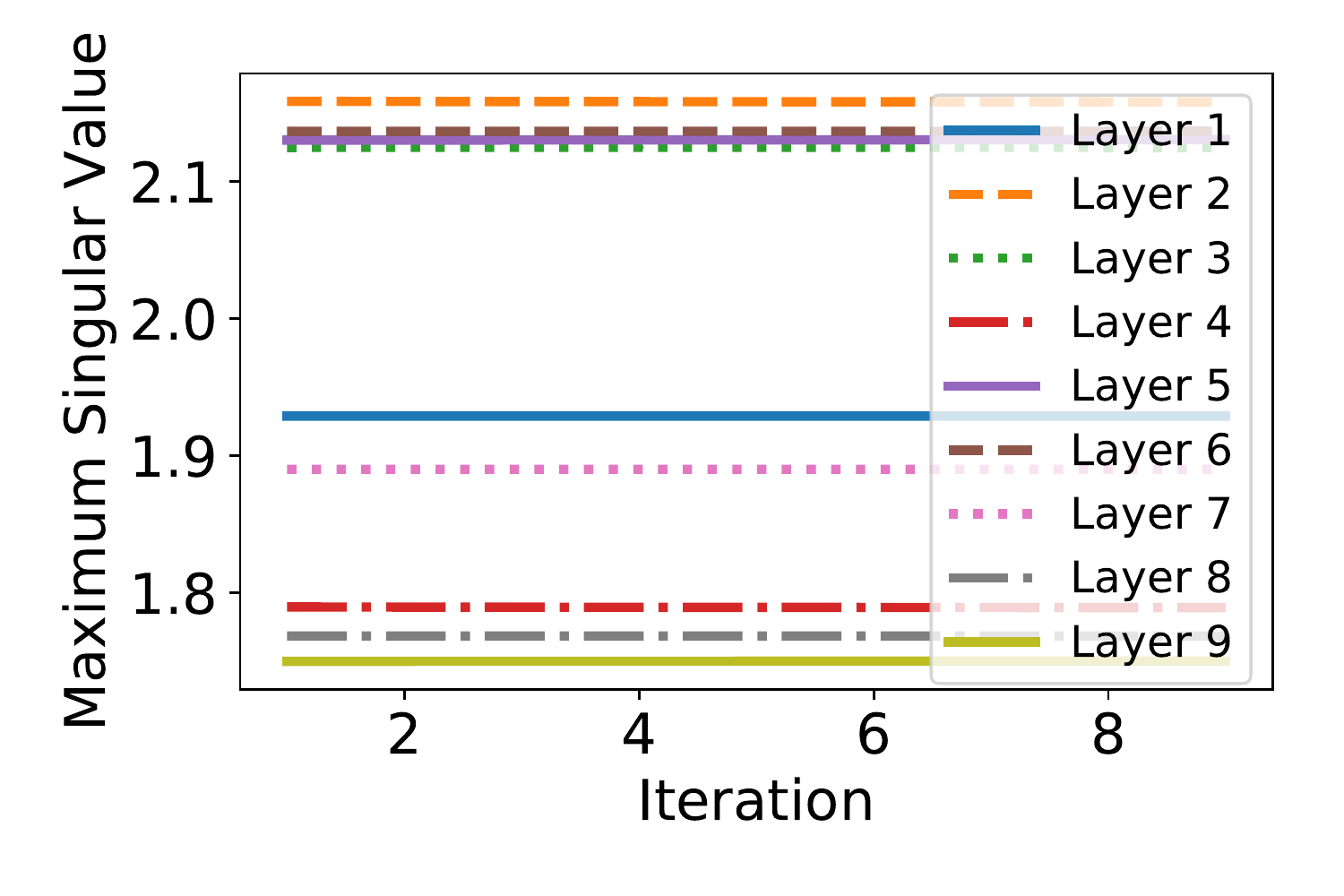}\\
  \caption{Transition of maximum singular values of GCN during training using Noisy Cora 2500. Top left: 1 layer. Top right: 5 layers. Bottom left: 7 layers. Bottom right: 9 layers. }\label{fig:maximum-singular-value-trainsition-detail-noisy-cora-2500}
\end{figure}

\begin{figure}[p]
  \centering
  Noisy Cora 5000 \\
  \includegraphics[width=0.49\linewidth]{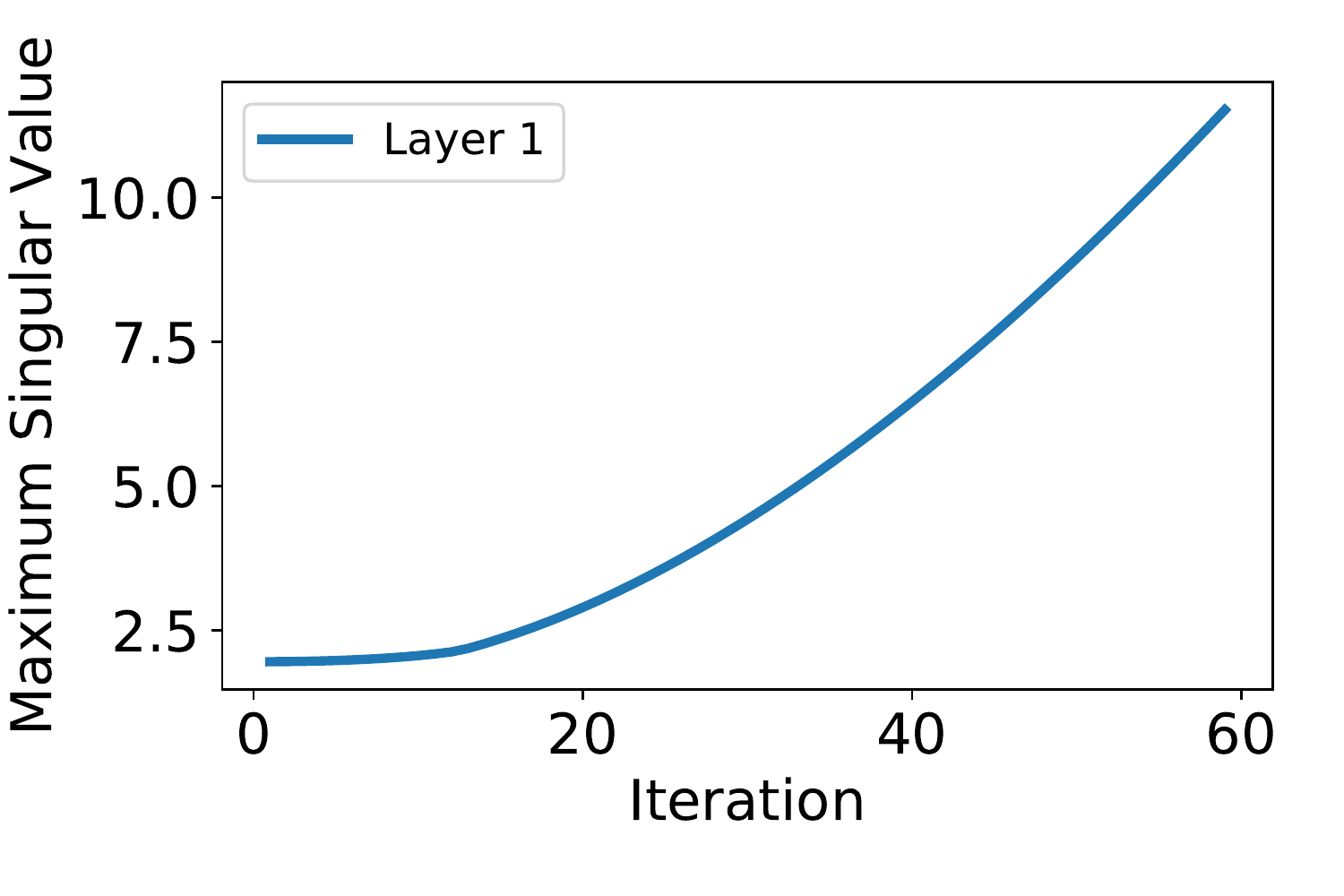}
  \includegraphics[width=0.49\linewidth]{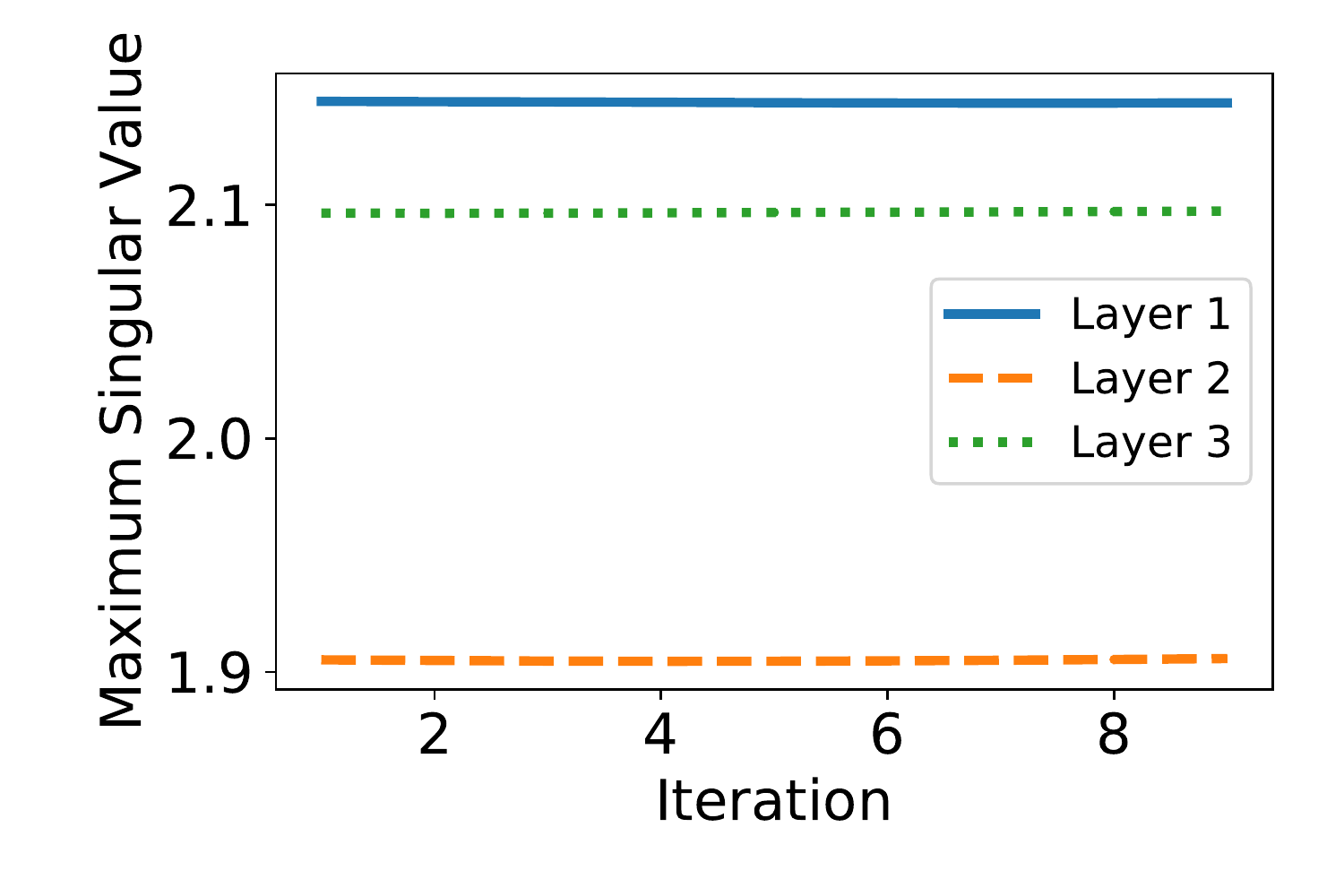}\\
  \includegraphics[width=0.49\linewidth]{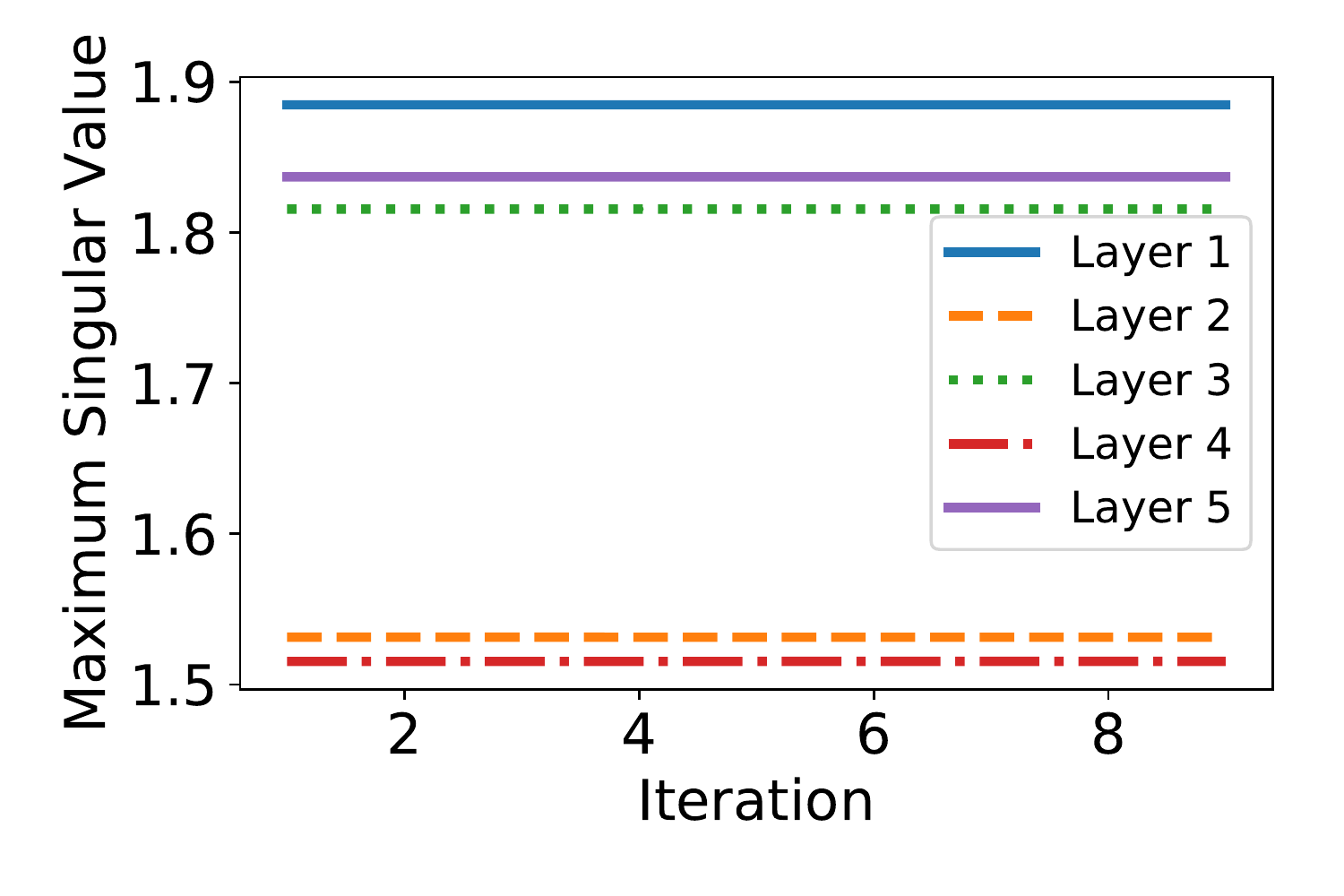}
  \includegraphics[width=0.49\linewidth]{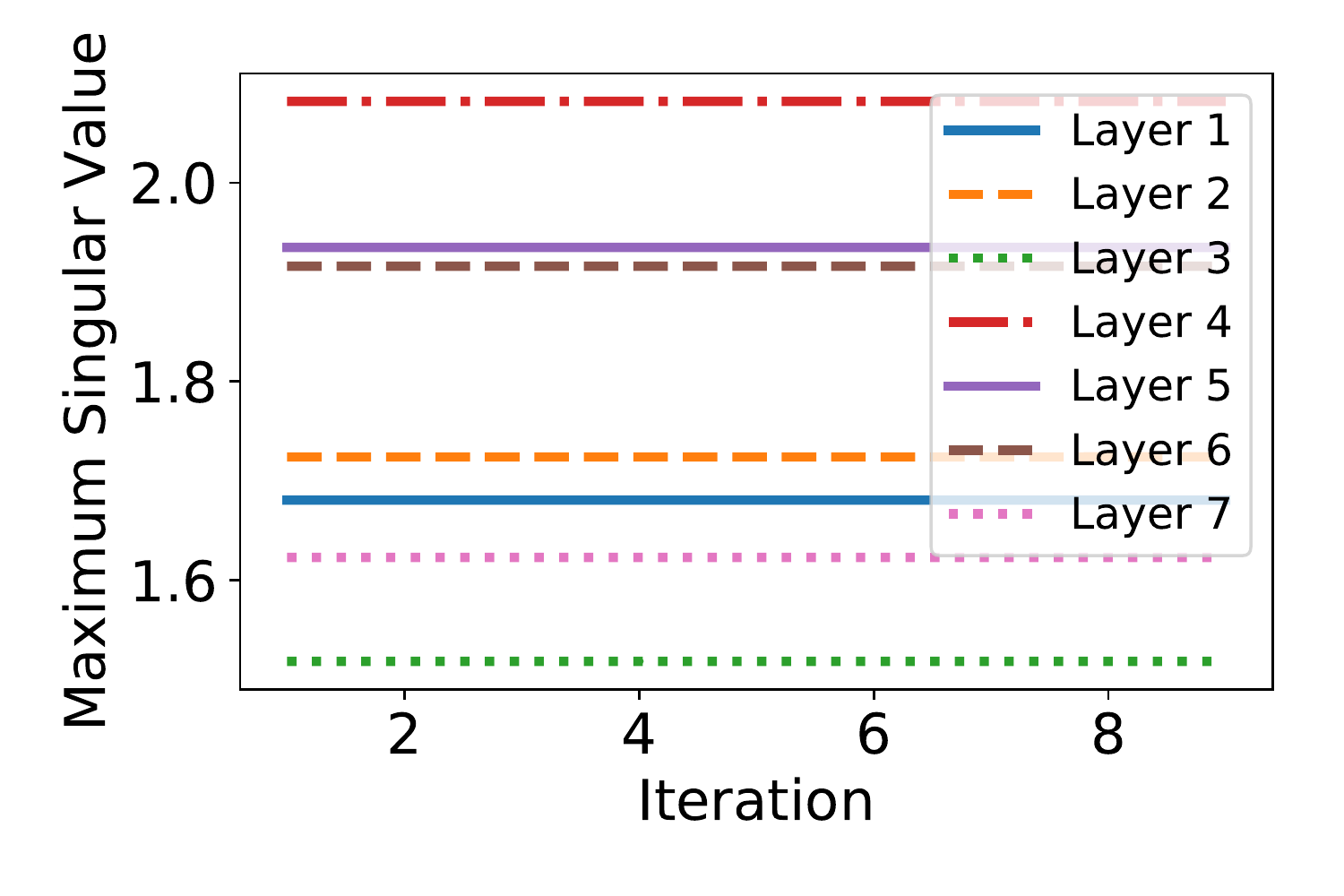}\\
  \includegraphics[width=0.49\linewidth]{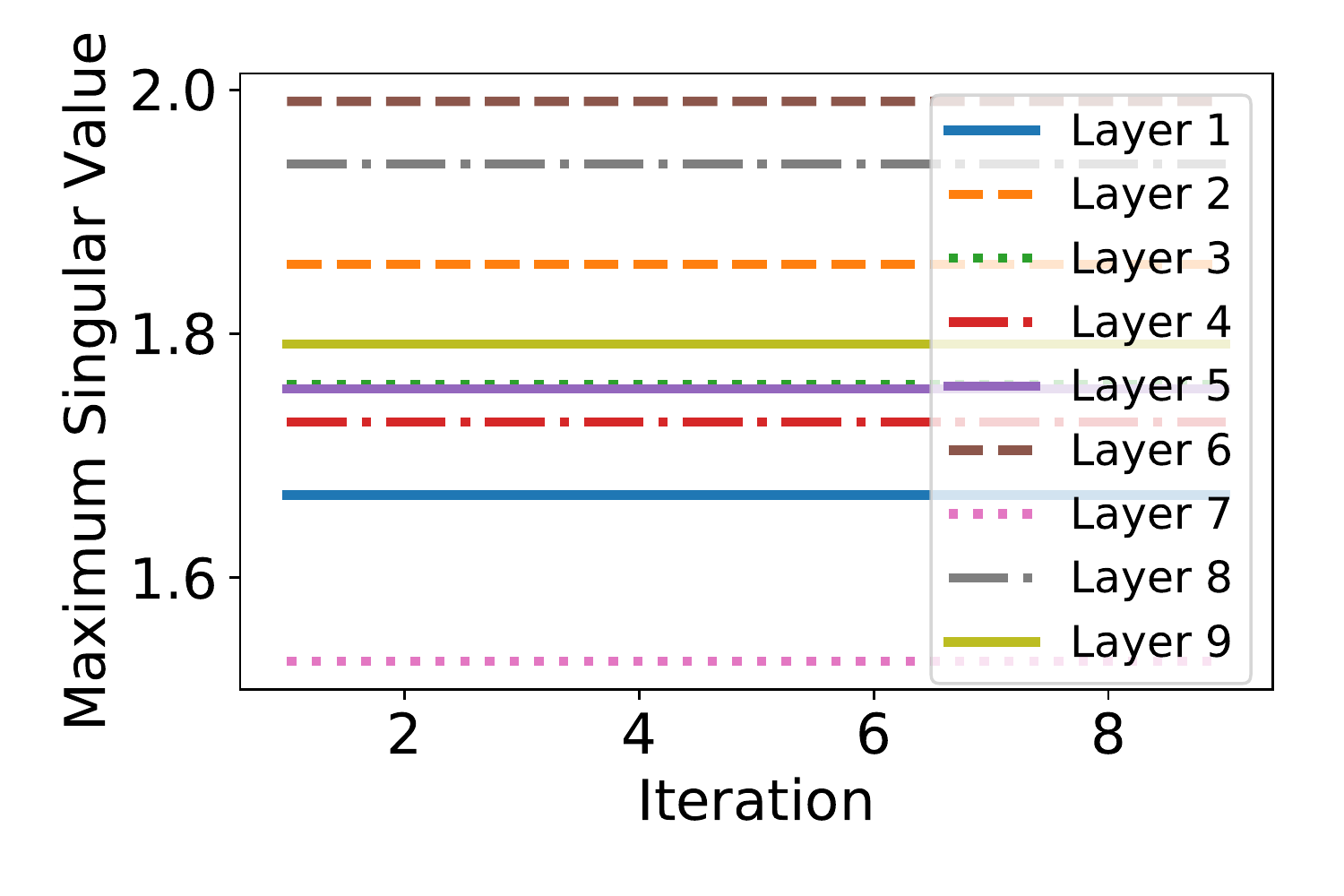}\\
  \caption{Transition of maximum singular values of GCN during training using Noisy Cora 5000. Top left: 1 layer. Top right: 3 layers. Middle left: 5 layers. Middle right: 7 layers. Bottom: 9 layers. }\label{fig:maximum-singular-value-trainsition-detail-noisy-cora-5000}
\end{figure}

\begin{figure}[p]
  \centering
  Noisy CiteSeer\\
  \includegraphics[width=0.49\linewidth]{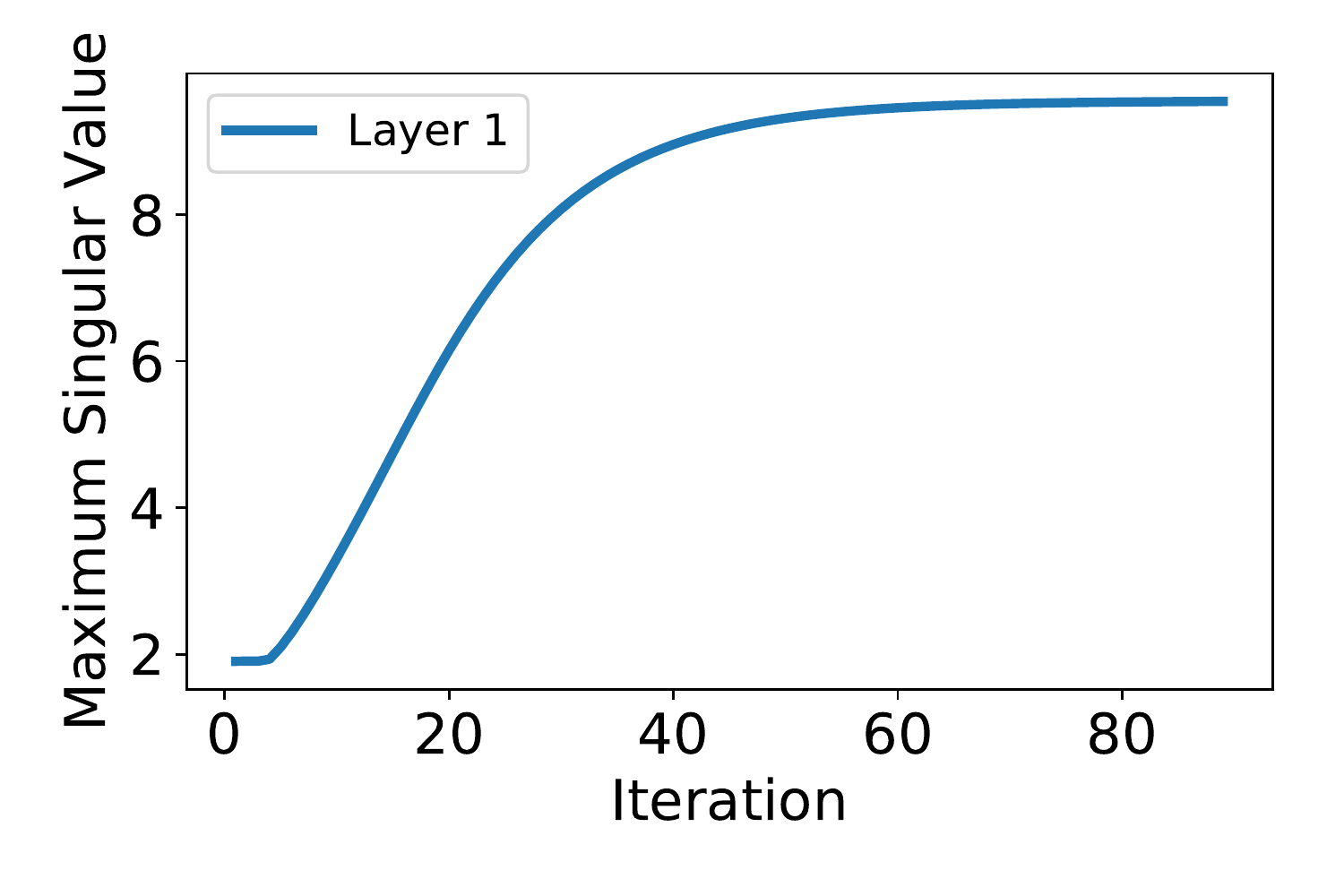}
  \includegraphics[width=0.49\linewidth]{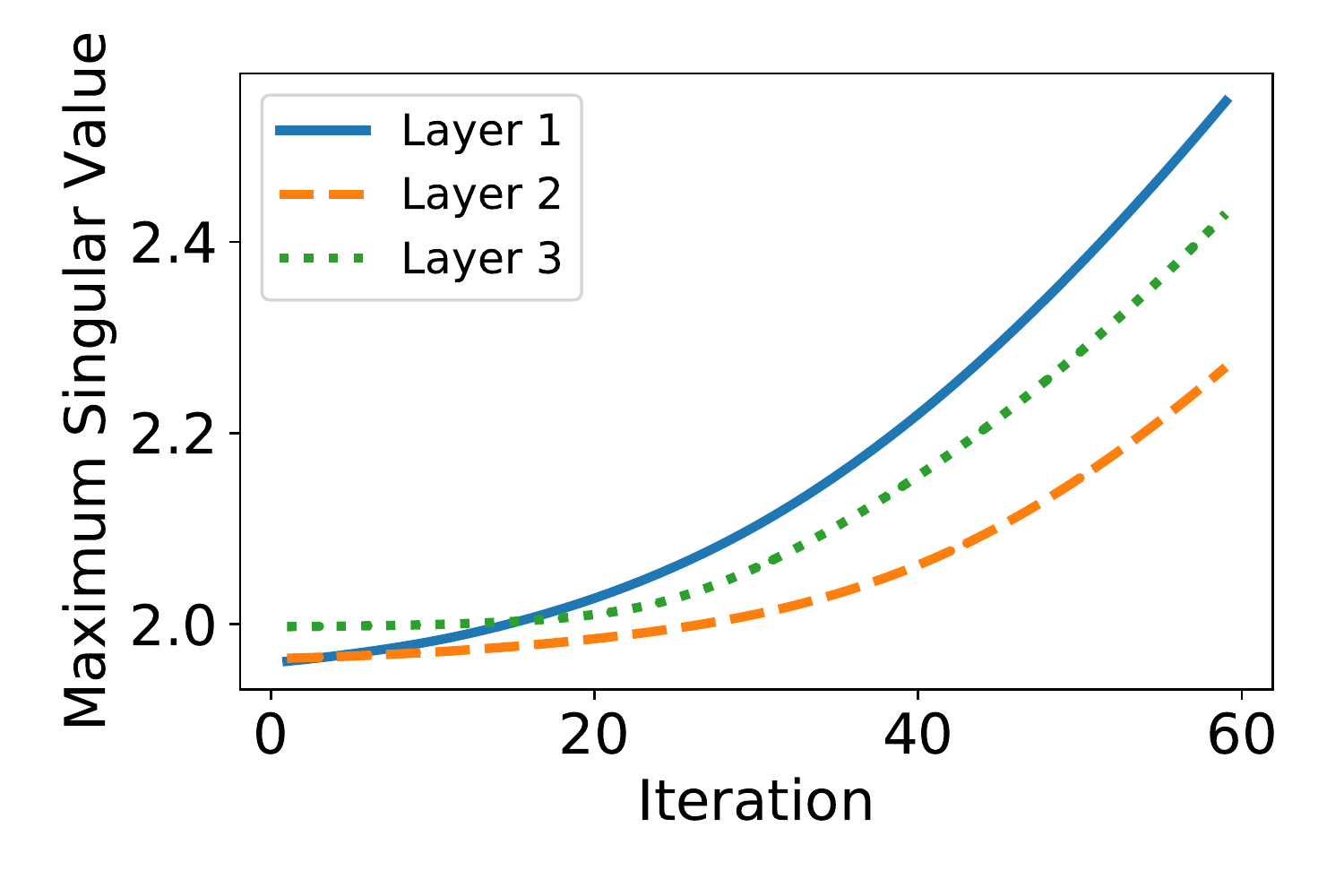}\\
  \includegraphics[width=0.49\linewidth]{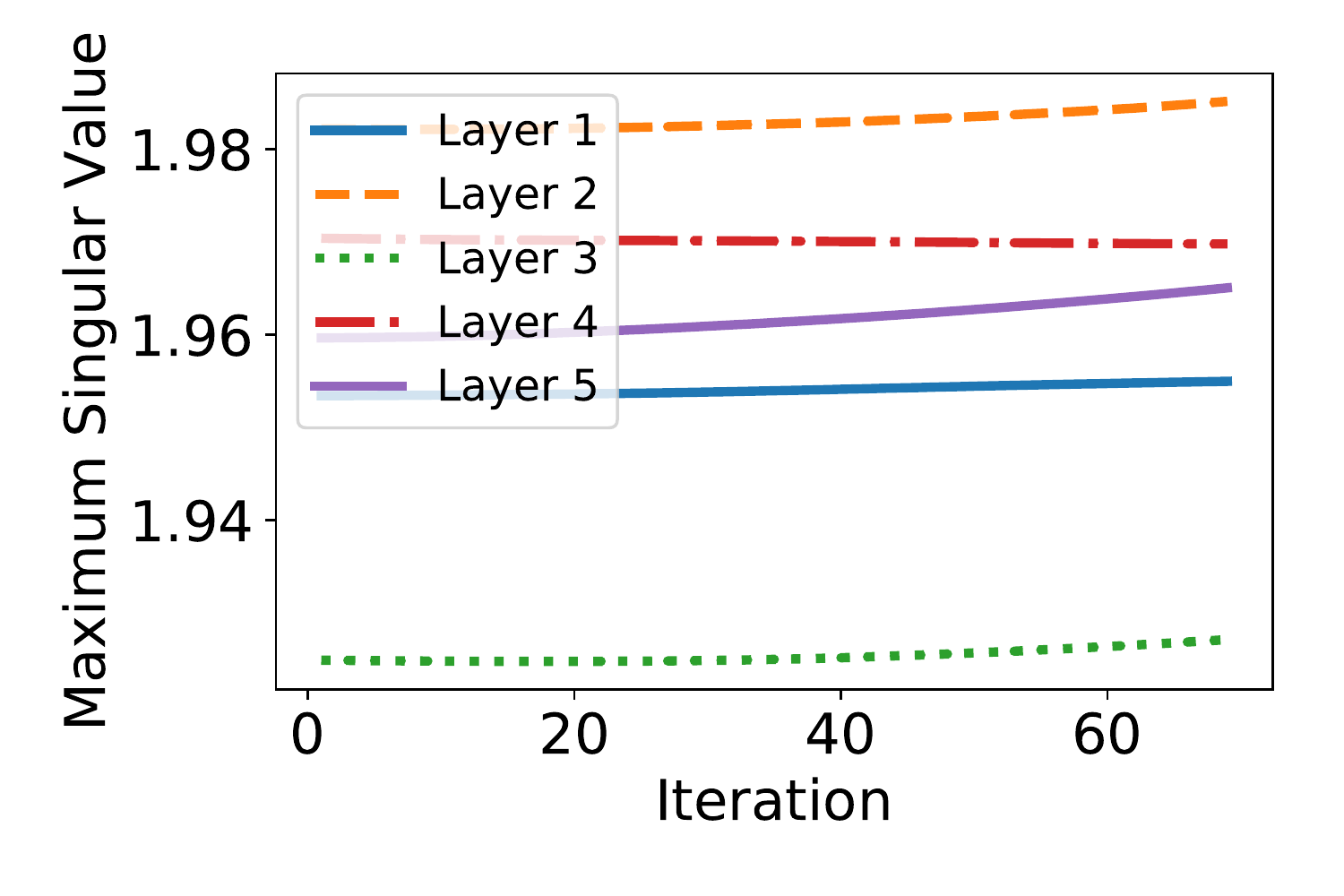}
  \includegraphics[width=0.49\linewidth]{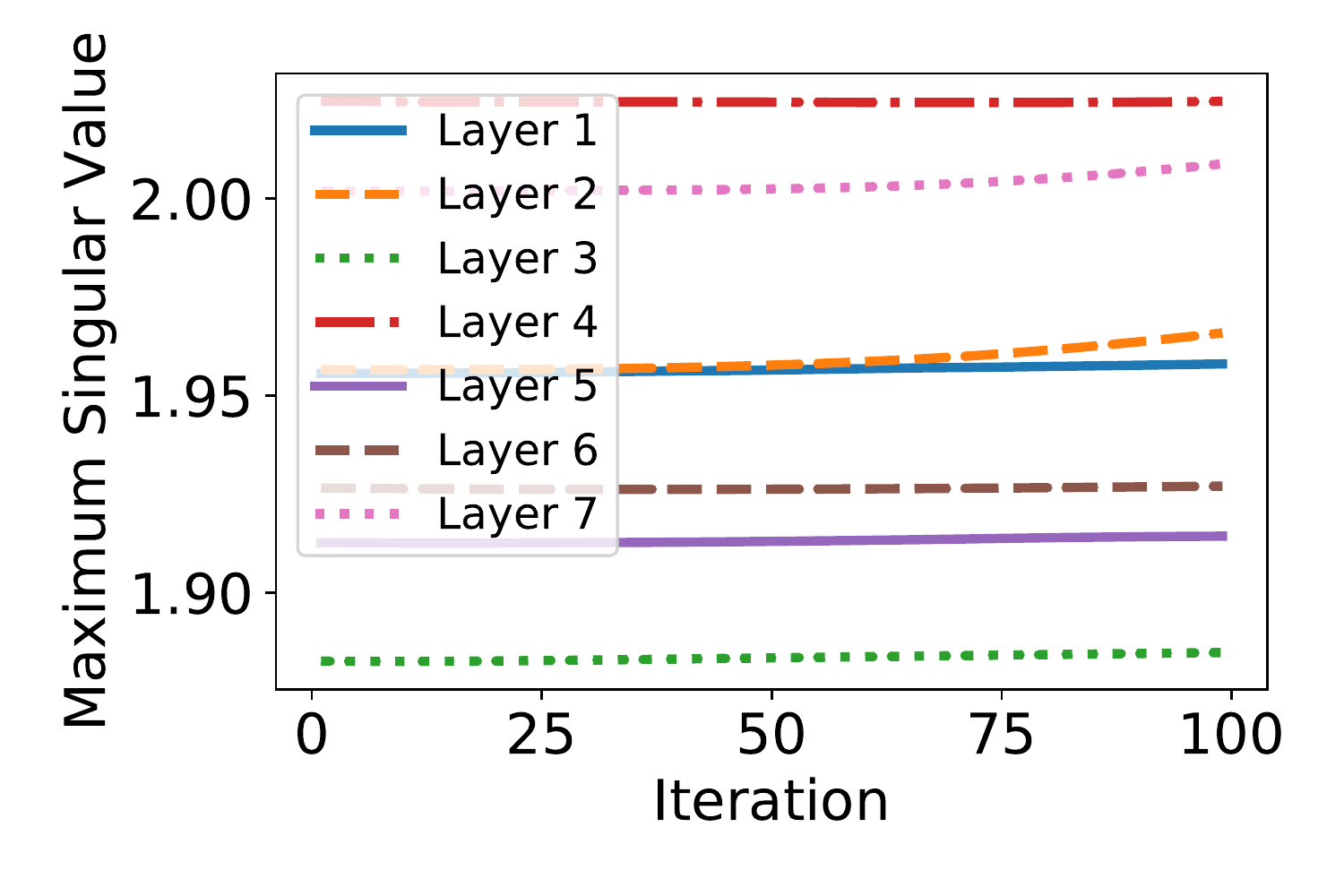}\\
  \includegraphics[width=0.49\linewidth]{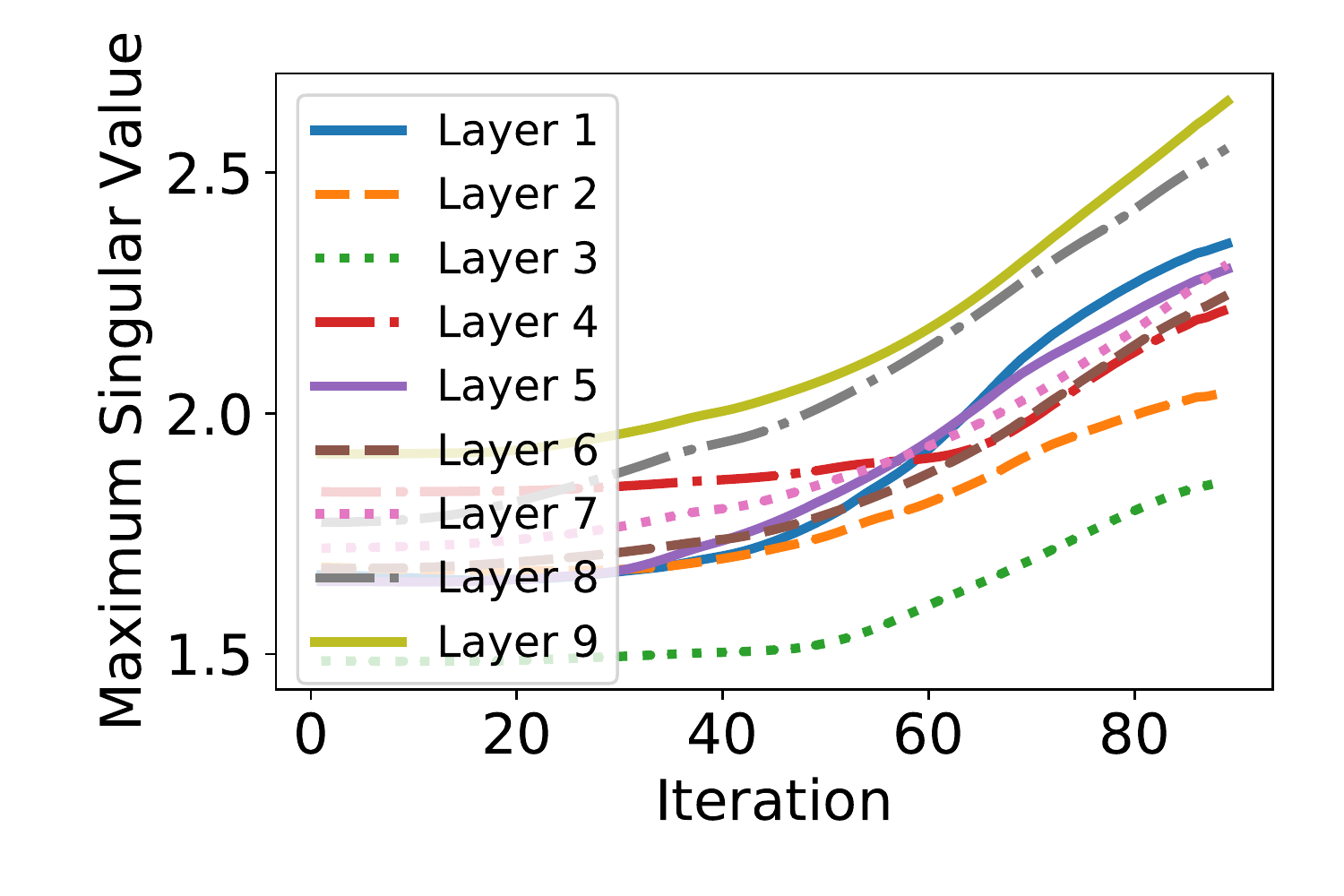}\\
  \caption{Transition of maximum singular values of GCN during training using Noisy CiteSeer. Top left: 1 layer. Top right: 3 layers. Middle left: 5 layers. Middle right: 7 layers. Bottom: 9 layers. }\label{fig:maximum-singular-value-trainsition-detail-noisy-citeseer}
\end{figure}

\begin{figure}[p]
  \centering
  Noisy PubMed\\
  \includegraphics[width=0.49\linewidth]{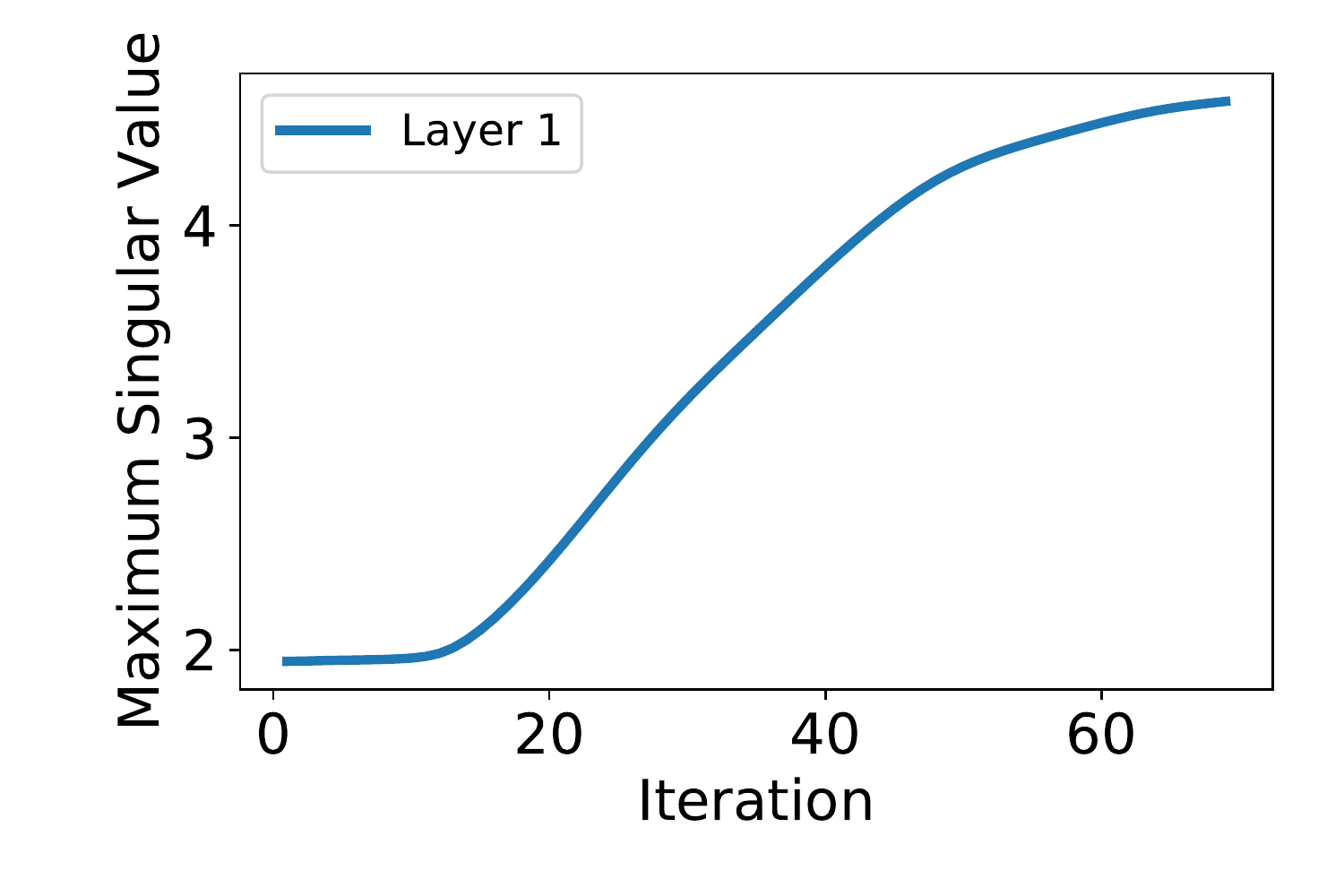}
  \includegraphics[width=0.49\linewidth]{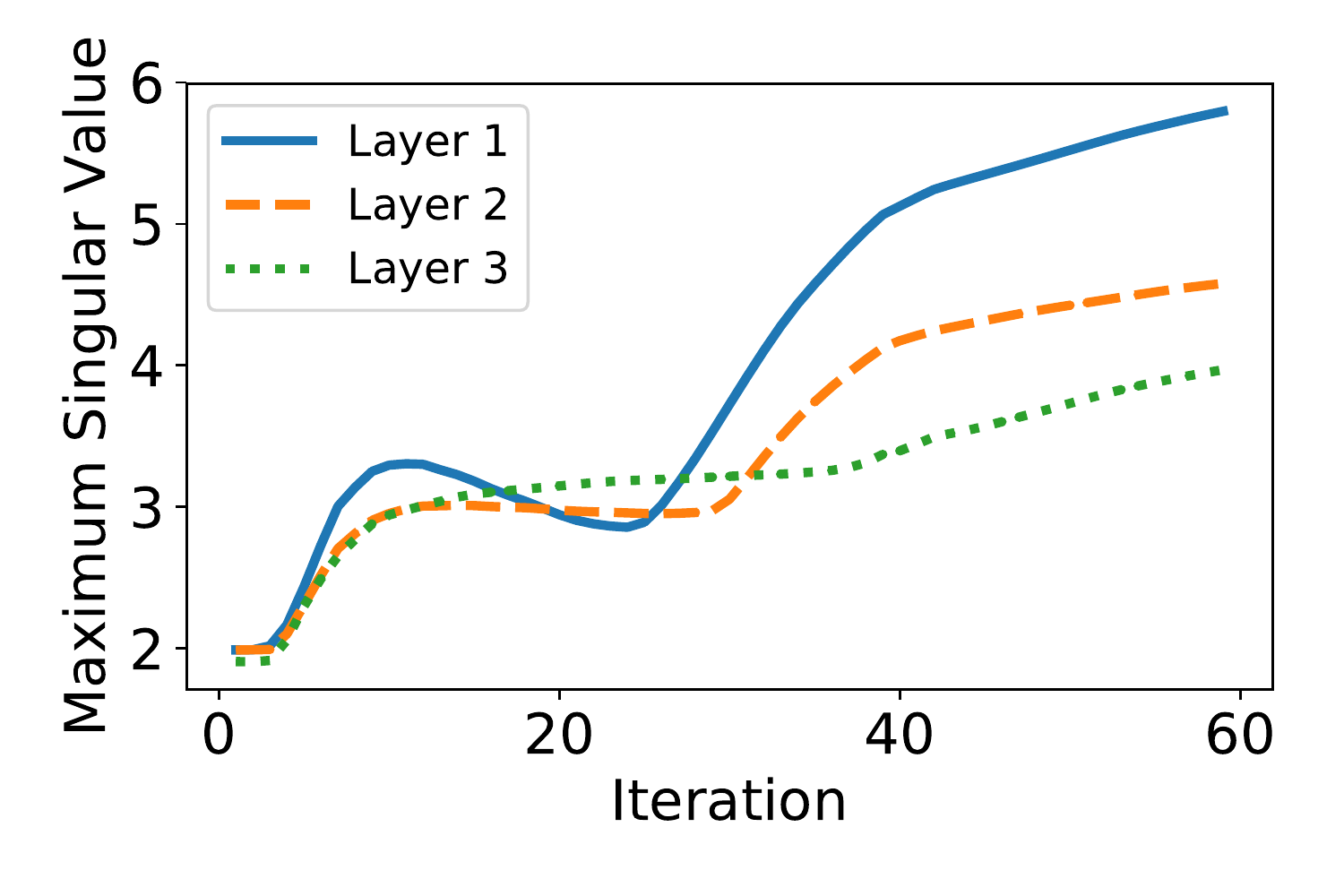}\\
  \includegraphics[width=0.49\linewidth]{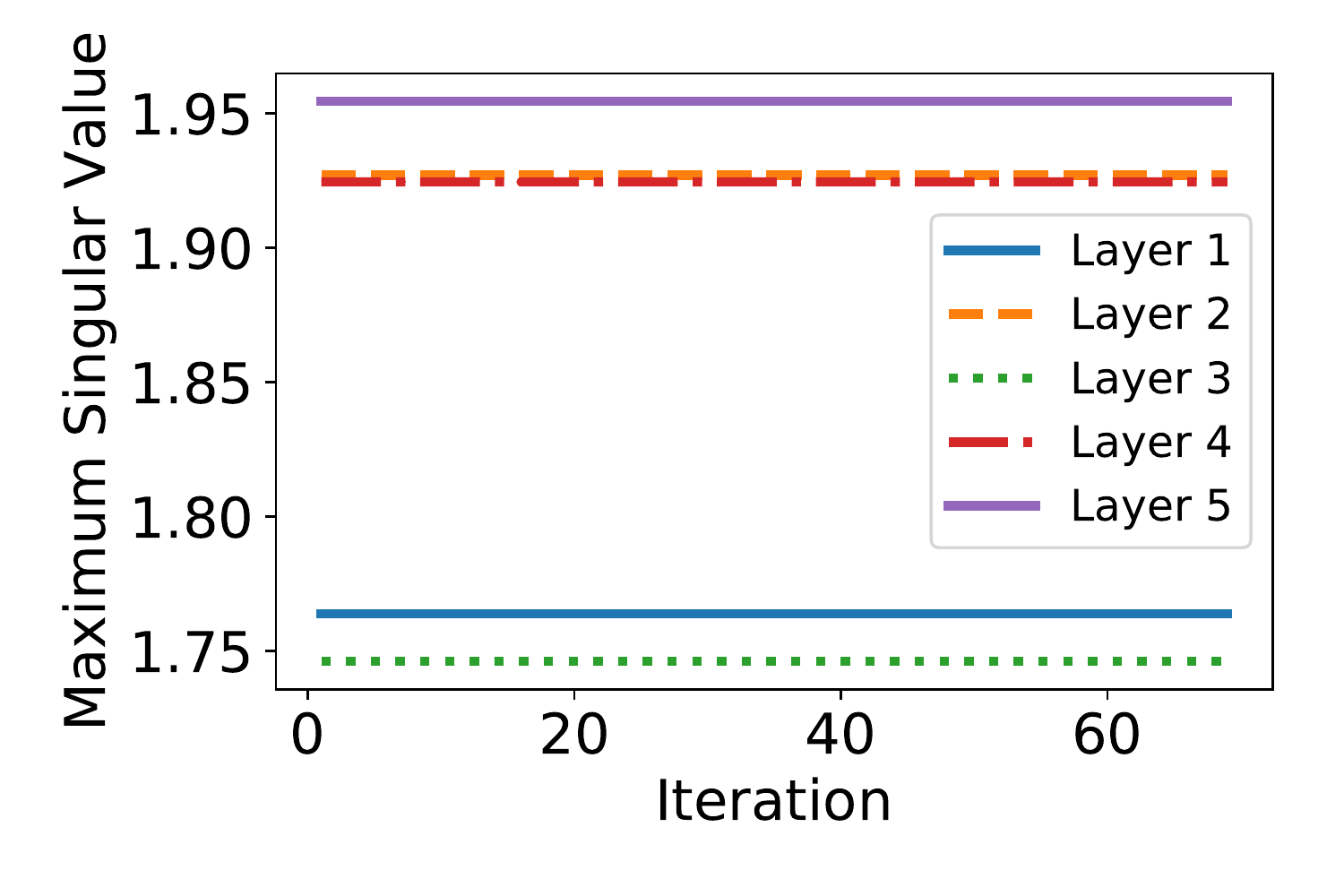}
  \includegraphics[width=0.49\linewidth]{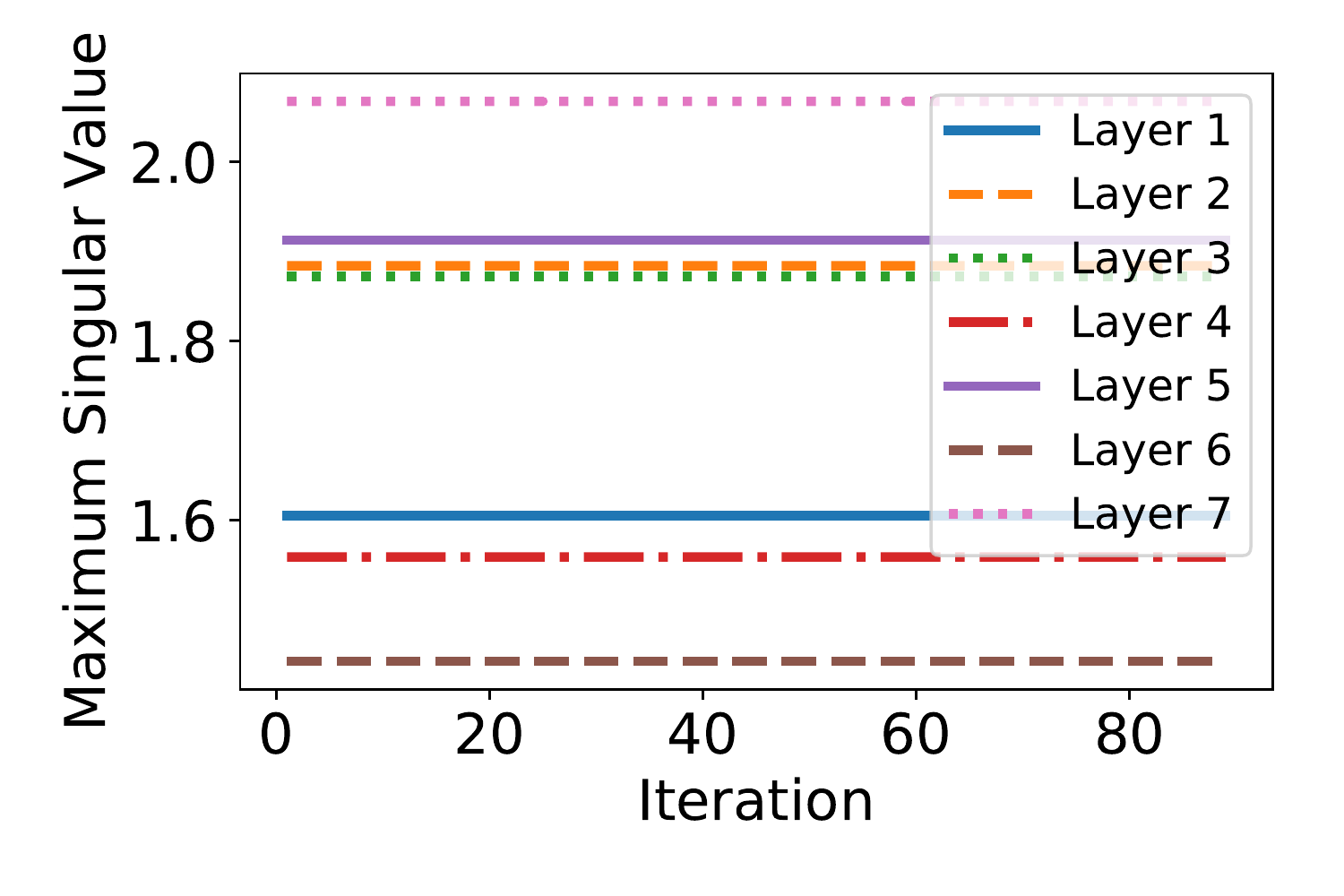}\\
  \includegraphics[width=0.49\linewidth]{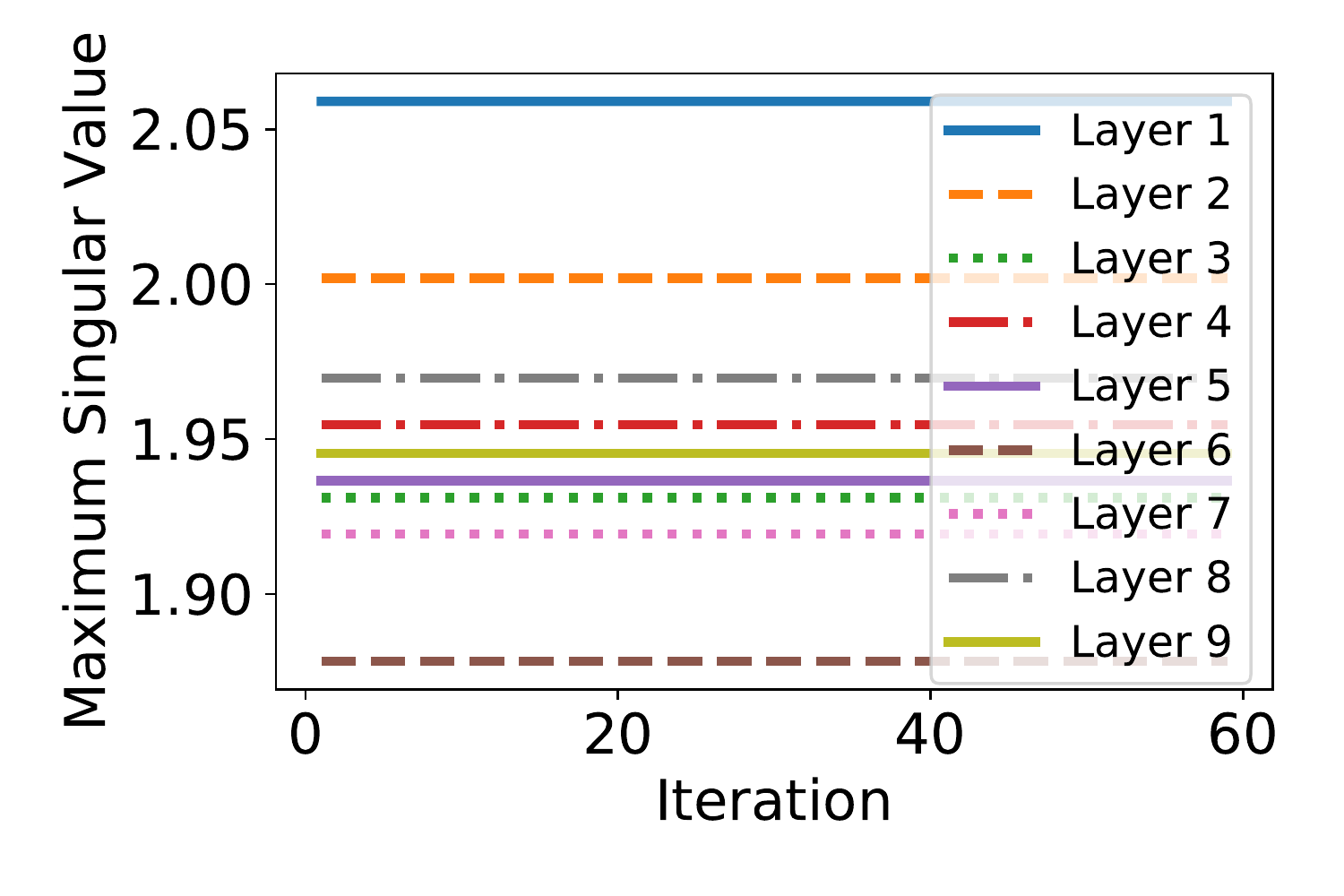}\\  
  \caption{Transition of maximum singular values of GCN during training using Noisy PubMed. Top left: 1 layer. Top right: 3 layers. Middle left: 5 layers. Middle right: 7 layers. Bottom: 9 layers. }\label{fig:maximum-singular-value-trainsition-detail-noisy-pubmed}
\end{figure}

\subsection{Experiment of Section \ref{sec:experiment-tangent}}\label{sec:experiment-tangent-detail}

Figure \ref{fig:tangent-all} shows the logarithm of relative perpendicular component and prediction accuracy on Noisy Cora, Noisy CiteSeer, and Noisy PubMed datasets.
We use Pearson R as a correlation coefficient.
If GCNs have only one layer, it has more large relative perpendicular components (corresponding to right points in the figures) than GCNs which have other number of layers.
The correlation between the logarithm of relative perpendicular components and prediction accuracies are $0.827 (p=6.890\times 10^{-6})$ for Noisy Cora, $0.524 (p=1.771\times 10^{-2})$ for Noisy CiteSeer, and $0.679 (p=1.002\times 10^{-3})$ for Noisy PubMed, if we treat the one-layer case as outliers and remove them.

\begin{figure}[p]
  \centering
  \includegraphics[width=0.47\linewidth]{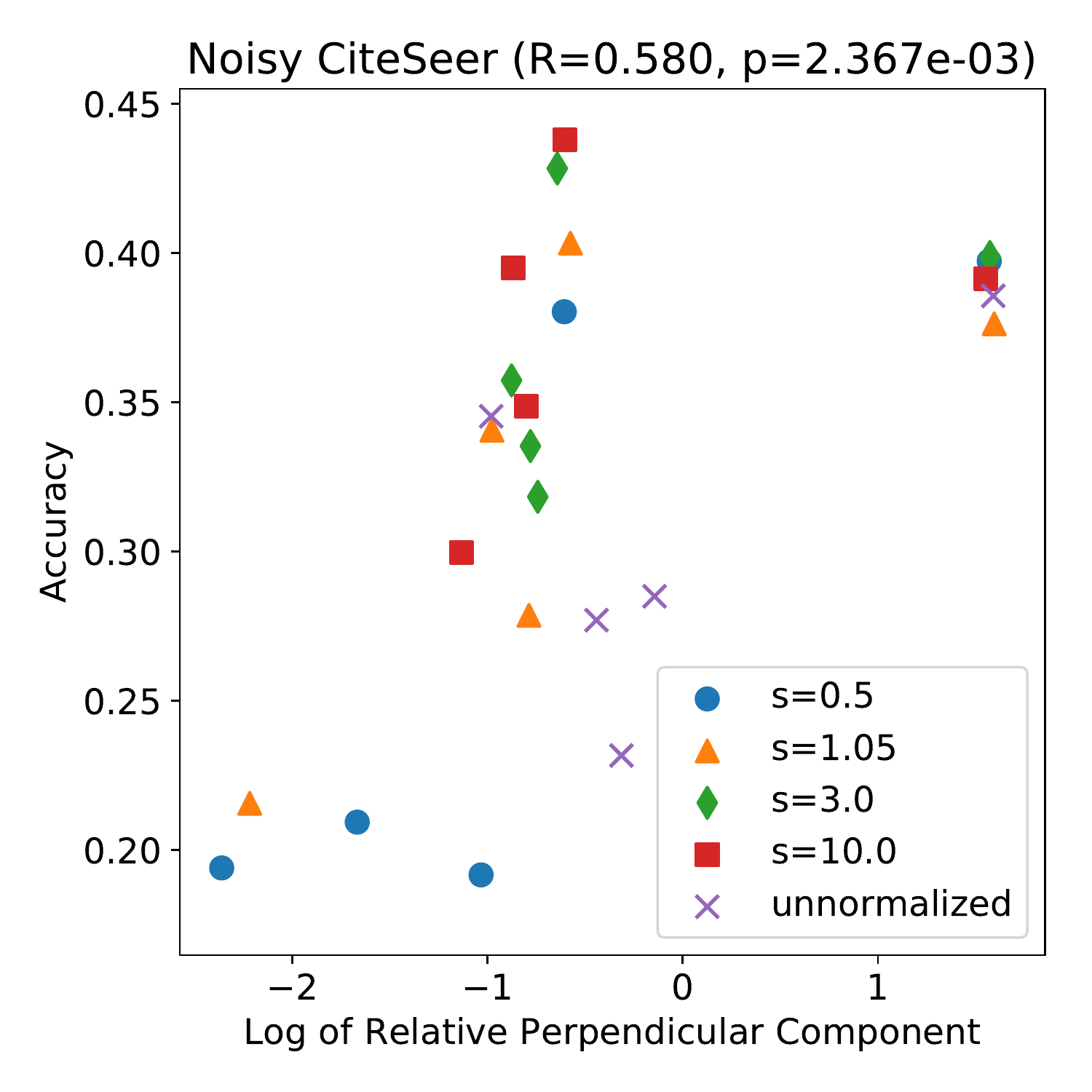}
  \includegraphics[width=0.47\linewidth]{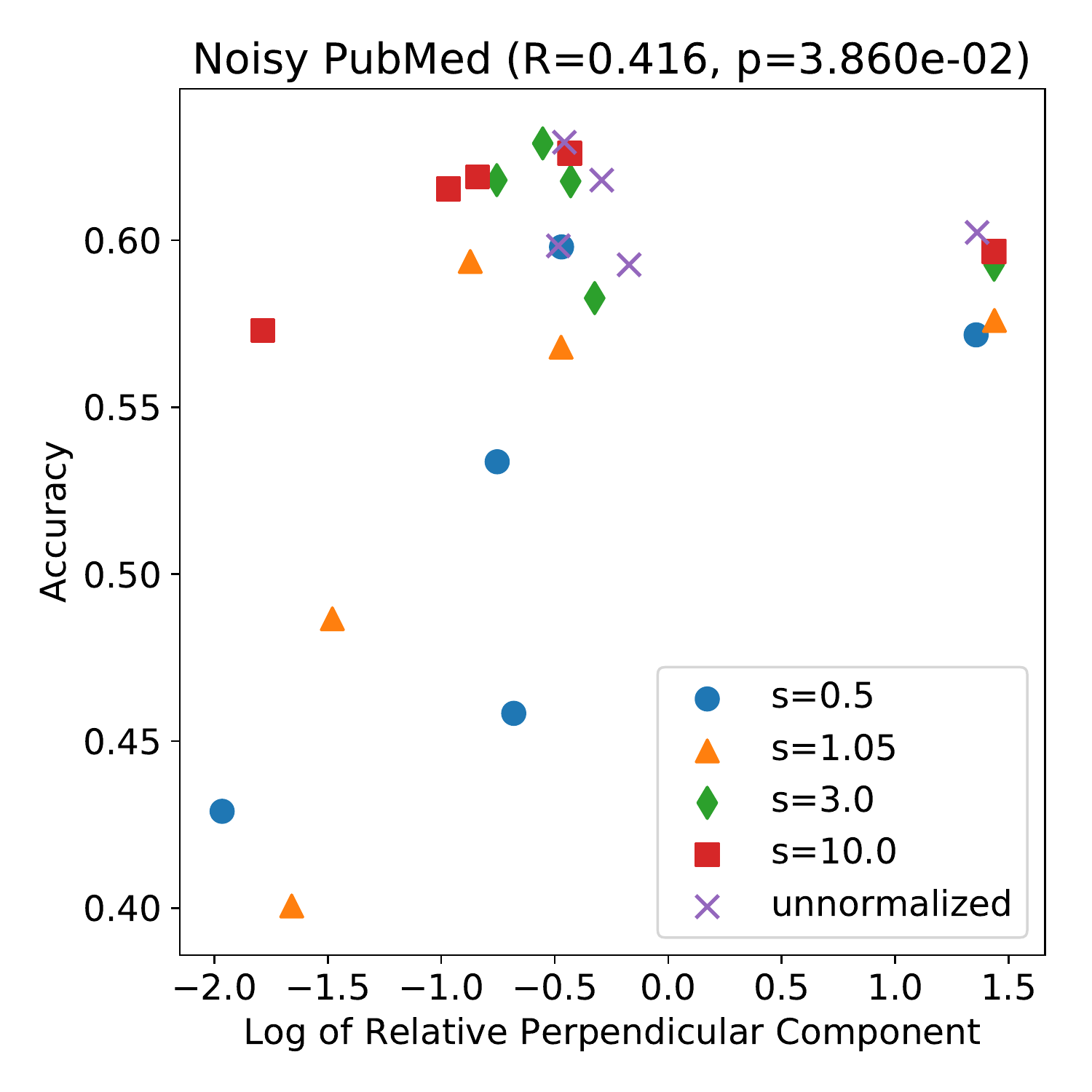}
  \caption{Logarithm of relative perpendicular component and prediction accuracy. Left: Noisy CiteSeer. Right: Noisy PubMed. $p$ in the title represents the $p$-value for the Pearson R coefficients.}\label{fig:tangent-all}
\end{figure}

\end{document}